\newcommand{\honda}[1]{}
\newcommand{\komiyama}[1]{}
\newcommand{\myred}[1]{#1}
\newcommand{\updated}[1]{#1}
\newenvironment{proofof}[1]{\par\noindent{\bf #1\ }}{\hfill\BlackBox\\[2mm]}
\newtheorem{thm}{Theorem}
\newtheorem{lem}[thm]{Lemma}
\newtheorem{mydef}[thm]{Definition}
\newtheorem{myrem}[thm]{Remark}
\newtheorem{assumption}[thm]{Assumption}
\newenvironment{sketch}{\par\noindent{\bf Proof Sketch\ }}{\hfill\BlackBox\\[2mm]}
\newcommand{\Natural}{\mathbb{N}}
\newcommand{\hatmu}{\hat{\mu}}
\newcommand{\tilmu}{\theta} 
\newcommand{\hatmunth}{\hat{\mu}^{(n)}}
\newcommand{\eps}{\epsilon}
\newcommand{\epscut}{\epsilon_{\mathrm{cut}}}
\newcommand{\epschg}[1]{\epsilon_{#1}}
\newcommand{\Nduring}[2]{N_{#1,#2}}
\newcommand{\Nit}{N_i(t)}
\newcommand{\Sit}{S_i(t)}
\newcommand{\EA}{\mathcal{A}}
\newcommand{\EB}{\mathcal{B}}
\newcommand{\EC}{\mathcal{C}}
\newcommand{\ED}{\mathcal{D}}
\newcommand{\EP}{\mathcal{P}}
\newcommand{\ES}{\mathcal{S}}
\newcommand{\ET}{\mathcal{T}}
\newcommand{\EV}{\mathcal{V}}
\newcommand{\EW}{\mathcal{W}}
\newcommand{\EX}{\mathcal{X}}
\newcommand{\EY}{\mathcal{Y}}
\newcommand{\tilO}{\tilde{O}}
\newcommand{\tilTheta}{\tilde{\Theta}}
\newcommand{\Err}{\mathrm{Err}}
\newcommand{\Reg}{\mathrm{Reg}}
\newcommand{\Regbase}{\mathrm{Reg}} 
\newcommand{\Regbasetor}{\mathrm{Reg}_{\mathrm{tr}}} 
\newcommand{\reg}{\mathrm{reg}} 
\newcommand{\Ex}{\mathbb{E}}
\newcommand{\Beta}{\mathrm{Beta}}
\newcommand{\KL}{d_{\mathrm{KL}}}
\newcommand{\Ind}{\mathbf{1}}
\newcommand{\nn}{\nonumber\\}
\newcommand{\Deltamin}{\Delta_{\mathrm{min}}}
\newcommand{\Td}[1]{T_{d,#1}}
\newcommand{\ADWIN}{ADS}
\newcommand{\AADWIN}{ADR}
\newcommand{\Cst}{C^{\mathrm{st}}}
\newcommand{\Cdr}{C^{\mathrm{dt}}} 
\newcommand{\Crabrupt}{C^{\mathrm{ab}}}
\newcommand{\Cdetect}{C^{\mathrm{gr}}}
\newcommand{\Mbreak}{M_d} 
\newcommand{\cbreak}{c^{\mathrm{br}}}
\newcommand{\NumChange}{M}
\newcommand{\CGrad}{C^{\mathrm{gr}}}
\newcommand{\niw}[2]{N_{#1,W_{#2}}}
\newcommand{\headt}{\underline{t}}
\newcommand{\tailt}{\overline{t}}
\newcommand{\since}[1]{\quad\left(\mbox{#1}\right)}
\newcommand{\n}{\nonumber}
\DeclareMathOperator*{\argmax}{arg\,max}
\newcommand{\Nbase}{N^{\mathrm{base}}}
\newcommand{\ETbase}{\ET^{\mathrm{base}}}
\newcommand{\ETmonitor}[1]{\ET^{\mathrm{monitor}}_{#1}}
\newcommand{\wasD}{K} 
\begin{document}

\title{Finite-time Analysis of Globally \\ Nonstationary Multi-Armed Bandits} 

\author{\name Junpei Komiyama \email junpei@komiyama.info \\
       \addr Stern School of Business\\
       New York University\\
       44 West 4th Street, New York, NY, USA
       \AND
       \name Edouard Fouch\'e \email edouard.fouche@kit.edu \\
       \addr Institute for Program Structures and Data Organization\\
       Karlsruhe Institute of Technology\\
       Am Fasanengarten 5, 76131 Karlsruhe, Germany
       \AND
       \name Junya Honda \email honda@i.kyoto-u.ac.jp \\
       \addr Department of Systems Science, Graduate School of Informatics\\
       Kyoto University\\
       Yoshida Honmachi, Sakyo-ku, Kyoto, 606-8501, Japan}

\editor{(editor names)}

\maketitle

\begin{abstract}%
We consider nonstationary multi-armed bandit problems where the model parameters of the arms change over time. We introduce the adaptive resetting bandit (\AADWIN{}-bandit), a bandit algorithm class that leverages adaptive windowing techniques from literature on data streams.
We first provide new guarantees on the quality of estimators resulting from adaptive windowing techniques, which are of independent interest.
Furthermore, we conduct a finite-time analysis of \AADWIN{}-bandit in two typical environments: an abrupt environment where changes occur instantaneously and a gradual environment where changes occur progressively. We demonstrate that \AADWIN{}-bandit has nearly optimal performance when abrupt or gradual changes occur in a coordinated manner that we call global changes.
We demonstrate that forced exploration is unnecessary when we assume such global changes. Unlike the existing nonstationary bandit algorithms, \AADWIN{}-bandit has optimal performance in stationary environments as well as nonstationary environments with global changes.
Our experiments show that the proposed algorithms outperform the existing approaches in synthetic and real-world environments.
\end{abstract}

\begin{keywords}
  multi-armed bandits, adaptive windows, nonstationary bandits, change-point detection, sequential learning.
\end{keywords}

\section{Introduction}

\subsection{Motivations}
The multi-armed bandit (MAB; \cite{Thompson1933,robbins1952}) is a fundamental model capturing the dilemma between exploration and exploitation in sequential decision making. This problem involves $K$ arms (i.e., possible actions). The decision-maker selects a set of arms at each time step and observes a corresponding reward. The goal of the decision-maker is to maximize the cumulative reward over time. The performance of a bandit algorithm is usually measured via the metric of ``regret'': the difference between the obtained rewards and the rewards one would have obtained by choosing the best arms. Minimizing the regret corresponds to maximizing the expected reward. 

MAB has been used to solve numerous problems, such as experimental clinical design \citep{Thompson1933}, online recommendations \citep{li2010contextual}, online advertising \citep{DBLP:conf/nips/ChapelleL11,komiyama2015optimal}, and stream monitoring \citep{DBLP:conf/kdd/FoucheKB19}.
The most widely studied version of this model is the stochastic MAB, which assumes that the reward for each arm is drawn from an unknown but fixed distribution. In the stochastic MAB, several algorithms, such as the upper confidence bound (UCB; \cite{lairobbins1985,auer2002finite}) and Thompson sampling (TS; \cite{Thompson1933}) are known to have $\Theta(K \log T/\Deltamin)$ regret,\footnote{The value $\Deltamin$ is a distribution-dependent constant quantifying the hardness of the problem instance.} which is optimal \citep{lairobbins1985}.
While it is reasonable to assume in some cases that the reward-generating process does not change, as in these algorithms, the distribution of rewards may change over time in many applications. To observe this, we consider the following two examples:

\begin{example}{\rm (Online advertising)\label{expl:ad}
A website has several advertisement slots. Based on each user’s query, the website decides which ads to display from a set of candidates (i.e., ``relevant advertisements''). Some advertisements are more appealing to a user than others. Each advertisement is associated with a click-through rate (CTR), the number of clicks per view. Websites receive revenue from clicks on advertisements; thus, maximizing the CTR maximizes the revenue. This problem is structured as the bandit problem, with advertisements and clicks as arms and rewards. However, it is well known that the CTR of some advertisements may change over time for several reasons, such as seasonality or changing user interests. In this case, naively applying a stochastic MAB algorithm leads to suboptimal rewards.
}
\end{example}

\begin{example}{\rm (Predictive maintenance) \label{expl:bioliq}
Correlation often results from physical relationships, for example, between the temperature and pressure of a fluid. A change in these correlations often signals a shift in the system's state (like a fluid solidifying) or a failure or degradation in equipment (such as a leak). In the context of large-scale factory monitoring, keeping track of these correlations can help anticipate issues, thus reducing maintenance costs. Nevertheless, constantly updating the full correlation matrix is computationally unfeasible due to the data's high-dimensionality and dynamic nature. A more efficient solution consists of updating only a few elements of the matrix based on a notion of utility (e.g., high correlation values). The system must minimize the monitoring cost while maximizing the total utility in a possibly nonstationary environment. In other words, correlation monitoring can be considered an instance of the bandit problem \citep{DBLP:conf/kdd/FoucheKB19}, in which pairs of sensors and correlation coefficients correspond to arms and rewards.
}
\end{example}

In such settings, the reward may evolve over time (i.e., it is nonstationary\footnote{The use of the term ``nonstationary'' in the bandit literature is different from the literature on time series analysis. We formally define stationary and nonstationary streams in Definition \ref{def:streams}.}). The nonstationary MAB (NS-MAB)
describes a class of MAB algorithms addressing this particular setting. Most of the NS-MAB algorithms rely on passive forgetting methods based on a sliding window \citep{Garivier2008} or fixed-time resetting \citep{DBLP:conf/nips/GurZB14}. Recent work has proposed more sophisticated change detection mechanisms based on adaptive windows \citep{srivastava2014surveillance} or sequential likelihood ratio tests \citep{besson2020efficient}. However, the existing methods have several drawbacks, as we describe later. 
 
\subsection{Challenges in nonstationary bandits}

Although change detectors can help MAB algorithms adapt to changing rewards, they often come with costs. Let us consider the case of an abruptly changing environment, where the reward distributions change drastically at some time steps. 
Previous work by \cite{Garivier2008} indicated that the regret of any $K$-armed bandit algorithm for such a case is\footnote{
This holds even when the change is sufficiently large, regardless of the value of a distribution-dependent constant $\Deltamin$. See details in Theorem 13, Corollary 14, and Remark 17 in \cite{Garivier2008}.
} $\Omega(\sqrt{T})$. 
This finding implies that the performance of NS-MAB algorithms in stationary (i.e., non-changing) environments is inferior to the performance of standard stochastic MAB algorithms, such as the UCB and TS, because $O(\sqrt{T})$ is much larger than $O(K\log T/\Deltamin)$ given a moderate value of $\Deltamin$. Virtually all NS-MAB algorithms conduct $O(\sqrt{T})$ forced exploration for all the arms, which is the leading factor of regret---no matter what change-point detection algorithm is used.

Another drawback of most existing methods is that they require several parameters that are highly specific to the problem and require unreasonable environmental assumptions, such as the number of changes or an estimation of the amount of ``nonstationarity'' in the stream. If such parameters are not set correctly, the actual performance may deviate widely from the given theoretical bounds. 

\begin{figure}[t]
\begin{center}
    \includegraphics[width=1.\linewidth]{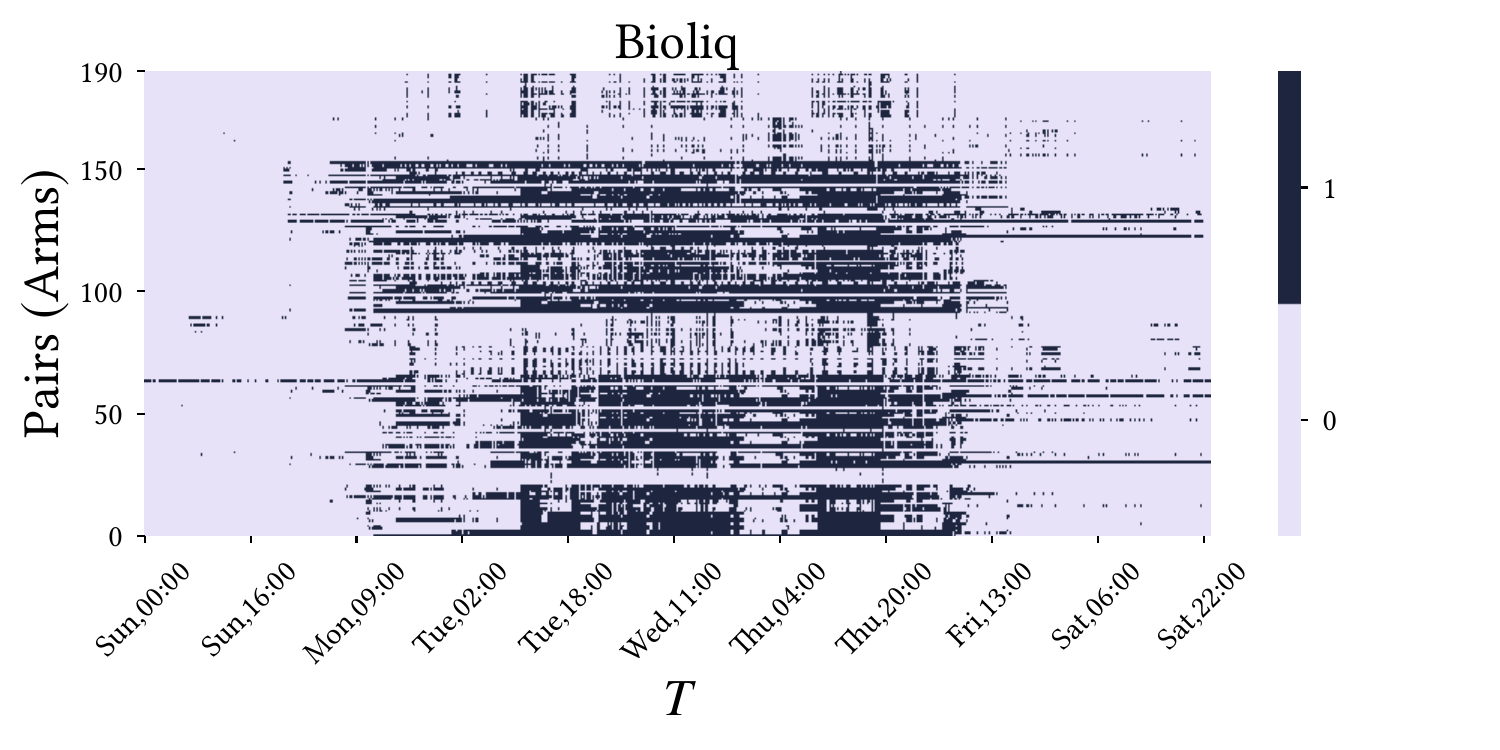}
\end{center}
	\caption{In the Bioliq dataset, each arm corresponds to the correlation coefficient between a pair of sensors. As we can see, the rewards between different arms tend to change coordinatedly. 
	We describe this dataset in details in Section~\ref{sec:experiments}.}
	\label{fig:bioliq_heatmap}
\end{figure}

This paper solves these issues by introducing adaptive resetting bandit (\AADWIN{}-bandit) algorithms, a new class of bandit algorithms.
Several algorithms, such as UCB and TS, have been established in the stationary case; therefore, we aim to extend them without introducing any forced exploration. For this purpose, we combine them with an adaptive windowing technique. For example, \AADWIN{}-TS, which is an instance of the \AADWIN{}-bandit, combines the adaptive windowing with TS. Our method deals with a subclass of changes that we call \textit{global changes}. Intuitively, if all arms change in a coordinated manner, we can avoid forced exploration to improve performance. This type of change is natural in the predictive maintenance of Example \ref{expl:bioliq}, as illustrated in Figure \ref{fig:bioliq_heatmap}, where the nonstationarity often results from changes in the entire system.

Our analysis shows that the proposed method has optimal performance for stationary streams and comparable bounds with existing NS-MAB algorithms for abruptly changing and gradually changing environments under the global change assumption.
Table \ref{tbl:comp} compares the bounds from the proposed method with the existing types of NS-MAB algorithms.

\begin{table}[]
\small
\caption{
The table below compares the achievable performance of the existing NS-MAB algorithms, \AADWIN{}-bandit, and stationary MAB algorithms (e.g., TS and UCB). The value $d$ defines the speed of gradual changes. \AADWIN{}-bandit has optimal performance in stationary environments as well as nonstationary environments with global changes. Most existing NS-MAB algorithms only handle abrupt changes and do not provide any performance guarantees under gradual changes. Performance bounds under the gradual changes are provided in \cite{besbes2014optimal,Allesiardo2015,DBLP:conf/nips/WeiHL16,DBLP:conf/amcc/WeiS18,DBLP:journals/jair/TrovoRG20,krishnamurthy2021}. Here, $\tilO$ is a Landau notation that ignores a polylogarithmic factor.
} 
\label{tbl:comp}
\begin{center}
\renewcommand{\arraystretch}{1.4}
\begin{tabular}{l|lll}
                          & Stationary                    & Abrupt & Gradual \\ \hline%
Existing NS-MAB & $\tilO(\sqrt{T})$ & $\tilO(\sqrt{T})$ & $\tilO(T^{1-d/3})$ \\ %
\AADWIN{}-bandit         & $\boldsymbol{O(\log T/\Deltamin)}$ & $\tilO(\sqrt{T})$ (Under GC) & $\tilO(T^{1-d/3})$ (Under GC)    \\ Stationary MAB         & $O(\log T/\Deltamin)$ & $O(T)$ & $O(T)$   \\ %
\end{tabular}
\end{center}
\end{table}

\subsection{Contributions}

We articulate our contributions as follows.
First, we provide an extensive analysis of adaptive windowing techniques. We focus, on the adaptive windowing (ADWIN) algorithm \citep{bifet2007learning}, which is still considered state-of-the-art in the data stream literature. The ADWIN algorithm performs well regarding various types of streams \citep{gama2014survey}. \cite{bifet2007learning} provided false positive and false negative rate bounds.\footnote{Although, rigorously speaking, their analysis is not correct, as we will discuss later.} However, existing analyses are not sufficient for our aim. For the analysis of bandit algorithms using ADWIN, we need an estimate on the accuracy of the estimator $\hatmu_t$. Thus, we conduct a finite-time analysis on the estimation error $|\mu_t - \hatmu_t|$ (Section \ref{sec:adwin}). As a by-product, this analysis explains why adaptive windowing methodologies perform well in many stream learning problems by bounding the error for abrupt and gradual changes.  

After formalizing the MAB problem (Section \ref{sec:MP-MAP}), 
we introduce the \AADWIN{} (ADaptive Resetting) bandit algorithm (Section \ref{sec:NS-MP-MAP-analysis}). Our study builds on a recent paper by \cite{DBLP:conf/kdd/FoucheKB19} that applies the MAB to sensor streams. Although \cite{DBLP:conf/kdd/FoucheKB19} proposed a way to combine the stationary bandit algorithms (e.g., UCB and TS) with adaptive windows, they did not provide any theoretical guarantees for the nonstationary case. We slightly modify their framework to enable rigorous analysis. 
Our work bridges the divide between these methods' practical application and theoretical comprehension. We believe our analysis and characterization of global changes present novel contributions to the literature on nonstationary bandit algorithms.

Finally, we demonstrate the performance of our proposed method concerning synthetic and real-world environments (Section \ref{sec:experiments}). The proposed method outperforms the existing nonstationary bandit methods in stationary streams and abruptly and gradually changing environments.

\section{Related Work}
\label{sec:relatedwork}

\subsection{Change detection}

Change detection is an important problem in data mining. The goal is to detect when the statistical properties (e.g., the mean) of a stream of observations change. Such changes are commonly attributed to the phenomenon known as \textit{concept drift} \citep{gama2014survey,DBLP:journals/tkde/LuLDGGZ19}, which characterizes unforeseeable changes in the underlying data distribution. Detecting such changes is crucial to virtually any monitoring tasks in streams, such as controlling the performance of online machine learning algorithms \citep{bifet2007learning} or detecting correlation changes \citep{DBLP:conf/iisa/SeliniotakiTCT14}. 

There are numerous methods to detect changes. The fundamental idea is to measure whether the estimated parameters of the current data distribution (e.g., its mean) have changed at any time. In other words, change detection is about separating the signal from the noise \citep{DBLP:conf/adma/GamaC06}. 

Change-point detection approaches can be classified into three major categories (cf. Table II in \cite{gama2014survey}). These categories are sequential analysis approaches (e.g., CUSUM \citep{10.2307/2333009} and its variant Page-Hinkley (PH) testing \citep{10.2307/2334386}), statistical process control (e.g., the Drift Detection Method (DDM) \citep{DBLP:conf/sbia/GamaMCR04} and its variants), and monitoring two distributions (e.g., ADWIN \citep{bifet2007learning}).

In this work, we primarily consider ADWIN \citep{bifet2007learning} because it works with any bounded distribution and has a good affinity with online learning analyses. Moreover, ADWIN monitors the mean from a sequence of observations over a sliding window of adaptive size. When ADWIN detects a change between two subwindows, the oldest observations are discarded. Otherwise, the window grows indefinitely. 
The success of this approach is due to the quasi-absence of parameters, making it highly adaptive.

\cite{DBLP:journals/eswa/GoncalvesSBV14} empirically compared ADWIN with other drift detectors. They found that ADWIN is one of the fastest detectors and is one of the only methods to provide false positive and false negative (fp/fn) rate guarantees. Many of the existing methods do not provide any performance guarantees, which is the case for DDM \citep{DBLP:conf/sbia/GamaMCR04}, EDDM \citep{baena2006early}, and ECDD \citep{DBLP:journals/prl/RossATH12} (per \cite{DBLP:journals/tkde/BlancoCRBDM15}). Although we believe that one can provide fp/fn rate guarantees for the statistical process control and CUSUM \citep{10.2307/2333009} approaches by choosing the appropriate parameters, limited discussions are available on the theoretical properties of these algorithms. For a thorough history and comparison of change-detection methods, we refer to the surveys by \cite{gama2014survey,DBLP:journals/inffus/KrawczykMGSW17}.

\subsection{Nonstationary bandits}

In comparison, theoretical performance guarantees (i.e., regret bounds) are much more emphasized in the MAB literature. Traditionally, NS-MAB algorithms are divided into two categories. \textit{Active} methods actively seek to detect changes, and \textit{passive} methods do not. Another type of algorithms, known as adversarial bandits \citep{auer1995gambling}, such as Exp3, can deal with changing environments. However, the guarantee of adversarial bandit algorithms is limited when no arm is consistently good. In the following paragraphs, we discuss the related work on active and passive methods.

\textit{Active methods:} \cite{hartland:inria-00164033} proposed Adapt-EvE, which combines the PH test with UCB, and \cite{DBLP:conf/aistats/MellorS13} suggested change-point TS. However, these two studies do not provide any regret bounds. \cite{DBLP:conf/aaai/LiuLS18} proposed CUSUM-UCB and PH-UCB, which combine the UCB algorithm with a CUSUM-based (or PH-based) resetting and forced exploration. They derived a regret bound of $\tilO(\sqrt{MT})$ in an abruptly changing environment with $M$ known change points. \cite{DBLP:conf/aistats/0013WK019} suggested the M-UCB algorithm, which combines UCB with an adaptive-window-based resetting method with a regret bound of $\tilO(\sqrt{MT})$ for an abruptly changing environment. 
\cite{Allesiardo2015} proposed Exp3.R, which combines Exp3 \citep{auer1995gambling} with change-point detection and provides a $\tilO(M\sqrt{T})$ regret bound. Although the assumptions are slightly different among algorithms, many of the regret bounds are of the order of $\tilO(\sqrt{MT})$ concerning the number of change points $M$ and the number of time steps $T$. \cite{DBLP:conf/colt/AuerGO19} provided an epoch-based bandit algorithm that does not require the knowledge of $M$. Moreover, \cite{DBLP:conf/aistats/SeznecMLV20} applied the adaptive window to the rotting bandit problem where the reward of the arms only decreases.
\updated{
\cite{krishnamurthy2021}
proposed an elimination-based algorithm and analyzed its performance in a two-armed gradually changing environment. They derived a similar regret bound $O(T^{1-d/3})$ as this paper, where $d$ is
the change-speed parameter that is the same as ours.
}
\cite{besson2020efficient} proposed GLR-klUCB, a combination of KL-UCB \citep{garivierklucb} with the Bernoulli Generalized Likelihood Ratio (GLR) test \citep{maillardalt2017} and forced exploration for an abruptly changing environment. They discussed global and local resets and provided a regret bound of $\tilO(\sqrt{MT})$.\footnote{They also discussed dependence on the distribution-dependent constants (Corollary 6 therein).}
\updated{%
One of the closest works to ours is 
\cite{Mukherjee2019}, where 
they proposed algorithms UCBL-CPD and ImpCPD by combining UCB with a change point detector.
They considered the global changes in the abrupt environment
with the same spirit as ours to avoid forced exploration.
Still, their main goal is to characterize detailed regret bounds for this environment, while ours is to cover both the global abrupt and global gradual environments by a single algorithm.
To be more specific, they obtain both distribution-dependent and independent bounds using a confidence interval established through the Laplace method. This approach circumvents the need for union bounds, leading to a reduced confidence level.\footnote{In particular, confidence levels required in Theorems 1 and 2 therein are $1/T$ to the number of rounds $T$. This is superior to the $T^3$ level of confidence required in our paper.}
In terms of distribution-independent regret, ImpCPD has $O(\sqrt{MT})$ distribution-independent regret bound, while UCBL-CPD and our algorithm have $\tilde{O}(\sqrt{MT})$ regret including the polylogarithmic factor in $T$.
On the other hand,
most of the effort of this paper is devoted to the analysis for the gradual changes.
We think that the application of the Laplace method could potentially improve polylogarithmic factors of our bounds, but we opted to use confidence bounds based on union bounds for simplicity to focus on simultaneously handling two environments.
}

\textit{Passive methods:} 
\cite{kocsis2006discounted} proposed Discounted UCB (D-UCB). \cite{Garivier2008} suggested Sliding Window UCB (SW-UCB) and analyzed D-UCB to demonstrate that the two algorithms have a $\tilO(\sqrt{MT})$ regret bound for abruptly changing environments. \cite{besbes2014optimal} considered the case of limited variation $V$ and proposed the Rexp3 algorithm with the regret bound $\Theta(V^{1/3}T^{2/3})$. \cite{DBLP:conf/nips/WeiHL16} generalized the analysis of \cite{besbes2014optimal} to the case of intervals where each interval has limited variation. \cite{DBLP:conf/amcc/WeiS18} proposed the LM-DSEE and SW-UCB\# algorithms with a $\tilO(\sqrt{MT})$ regret bound for an abruptly changing environment. The latter algorithm adopts an adaptive window with a limited length. They also analyzed the case of a slowly changing environment and derived a regret bound of SW-UCB\#.
\cite{DBLP:conf/colt/ChenLLW19} proposed a very involved algorithm and stated that it can adapt to abrupt changes as well as gradual changes with a $\Theta(\min(\sqrt{MT}, V^{1/3}T^{2/3}))$ regret bound.
\cite{DBLP:journals/jair/TrovoRG20} proposed the Sliding Window TS (SW-TS) algorithm and provided a $\tilO(\sqrt{MT})$ regret bound for an abruptly changing environment. They also provided a distribution-dependent regret bound for a gradually changing environment. %

Our approach is an active one; however, it does strikingly differ from the existing methods. Our algorithm does not sacrifice performance in the stationary case, whereas almost all existing NS-MAB algorithms are exclusively designed for nonstationary environments. In real-world settings, users may not know whether a given stream of data must be considered stationary or not. Similarly, the nature of the stream may also change over time. In such settings, our approach is advantageous, as our experiments show. 

Although the analysis that follows is quite involved, our algorithms are conceptually simple and aimed at practical use. In comparison, our competitors tend to require more computation and rely on parameters that are difficult to set.

\section{Analysis of ADWIN}
\label{sec:adwin}

This section analyzes ADWIN \citep{bifet2007learning}. This algorithm monitors at any time $t$ an estimate of the mean $\hatmu_t$ from a single stream of univariate observations.

\subsection{Data streams} 
\label{subsec:problem_setting}

We assume that each observation $x_t \in [0,1]$ at any time step $t$ is drawn from some distribution with the mean $\mu_t$. The value $\mu_t$ is not known to the algorithm, and ADWIN estimates it from a sequence of (possibly noisy) observations $S:(x_1, x_2, \dots, x_T)$. The goal of ADWIN is to maintain an estimator of $\mu_t$ at any $t$ based on past observations. Due to \textit{concept drift} \citep{gama2014survey}, the mean $\mu_t$ may change over time, so the task is not trivial. 

\begin{figure}
	\centering
		\includegraphics[width=\linewidth]{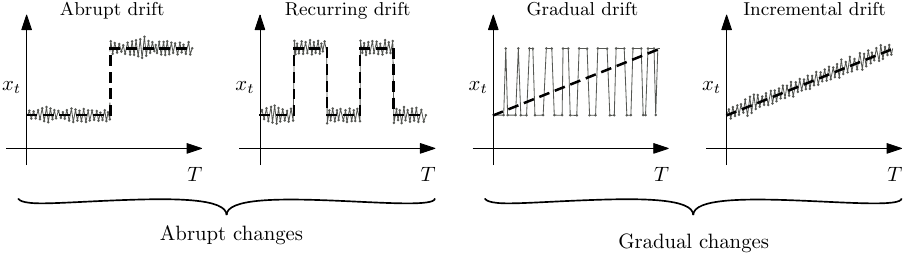}
	\caption{We classify the four types of concepts drifts (abrupt, recurring, gradual, incremental) into two types of changes: abrupt and gradual changes. Here, dashed lines represent $\mu_t$, whereas solid lines represent $x_t$.}
	\label{fig:conceptdrifttypes}
\end{figure}

The data stream literature \citep{gama2014survey} identifies the four main categories of concept drifts: abrupt, recurring, gradual, and incremental drifts. Figure \ref{fig:conceptdrifttypes} indicates that the mean $\mu_t$ (the dashed line in the figure) changes \textit{abruptly} and stays the same for some time with abrupt and recurring drifts, whereas $\mu_t$ changes \textit{gradually} over time with gradual and incremental drifts. The actual observations $x_t$ can be arbitrarily noisy. We are only interested in the change in the mean $\mu_t$; thus, we simplify our analysis to the two types of changes.\footnote{This categorization matches the MAB literature \citep{DBLP:conf/amcc/WeiS18,DBLP:journals/jair/TrovoRG20}.} 
In addition, we also consider stationary streams where the mean $\mu_t$ never changes over time. In summary, we consider three types of streams:
\begin{mydef}{\rm (Stationary, abruptly changing , and gradually changing  streams)}\label{def:streams}
\begin{enumerate}
\item A stream is {\it static} if $\mu_t = \mu$ for all $t\in[T]$ and some $\mu\in[0,1]$. 
\item A stream is {\it abruptly-changing} if $\mu_t = \mu_{t+1}$ except for changepoints $\ET_C \subset [T]$.
\item A stream is {\it gradually-changing} if $|\mu_{t+1} - \mu_t| \le b$ for all $t\in[T]$ and some constant $b \in (0,1)$.

\end{enumerate}
These definitions refer to the mean $\mu_t$. The observations $\{x_1,x_2,\dots\}$ can be noisy.
\end{mydef}

\subsection{The adaptive window algorithm}
\label{subsec:adwin_alg}

\begin{algorithm}
	\caption{ADWIN}\label{alg:ADWIN} 
	\begin{algorithmic}[1]
		\Require A univariate stream of values $S: (x_1,x_2,\dots) \in [0,1]$, confidence level $\delta \in (0,1)$.
		\State $W(1) = \{\}$ %
		\For{$t = 1,2,\dots$}
		\State $W(t+1) = W(t) \cup \{t\}$
		\While{$|\hatmu_{W_1} - \hatmu_{W_2}| \ge \epscut^\delta$ holds for some split $W(t+1) = W_1 \cup W_2$}
		\State $W(t+1) = W_2$.
		\EndWhile 
		\EndFor
	\end{algorithmic}
\end{algorithm}

Algorithm \ref{alg:ADWIN} is the pseudo-code for ADWIN \citep{bifet2007learning}. %
The law of large numbers implies that using more observations helps estimate the parameter. However, the nature of older observations might be different from more recent observations. To determine a good trade-off between these two effects, ADWIN maintains a window $W(t)$ of past time and discards data points outside the window.  

We omit the index $(t)$ of $W(t)$ when the time step of interest is clear. At each time step $t$, ADWIN receives a new observation $x_t$ and extends the window $W$ to include the observation. For the current window $W$, we let $\hatmu_W = \frac{1}{|W|} \sum_{t \in W} x_t$ be the corresponding empirical mean. For each new observation, ADWIN tests whether the mean of the underlying distribution has changed. If we can split $W$ into two consecutive disjoint subwindows $W_1 \cup W_2 = W$ whose empirical means are significantly different (i.e., by some threshold $\epscut$), then ADWIN discards $W_1$ (i.e., ADWIN “shrinks” the window): 
\begin{align} %
\epscut^\delta = \sqrt{\frac{1}{2|W_1|} \log\left(\frac{1}{\delta}\right)} + \sqrt{\frac{1}{2|W_2|} \log\left(\frac{1}{\delta}\right)}, %
\label{def_epscut}
\end{align} %
where $|W|$ is the cardinality of a set $W$.
The threshold $\epscut$ is based on Hoeffding's inequality\footnote{
Note that, $\epscut$ above is slightly different from the original ADWIN
where the harmonic mean of $|W_1|$ and $|W_2|$ is used.}. 
At each time step, ADWIN check every possible split of $W$.

\begin{figure}
	\centering
	\hfill
	\subfigure[changepoint]{
		\label{fig:changepoint}
		\includegraphics[width=0.30\linewidth]{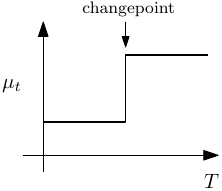}
	}
	\hfill
	\subfigure[detection time]{
		\label{fig:break}
		\includegraphics[width=0.30\linewidth]{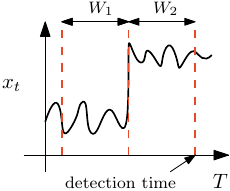}
	}
	~
	\hfill
	\hfill
	
	\caption{Terminology of changepoint and detection time. A changepoint of an abruptly changing stream is a time step $t$ where $\mu_t \ne \mu_{t+1}$. A detection time is the time step where ADWIN shrinks the window.} 
	\label{fig:breakpoint_adwin}
\end{figure}

The following definition formalizes the notion of a window and the shrinking of the current window. See also Figure \ref{fig:breakpoint_adwin} for illustration.   
\begin{mydef}{\rm (Detection times)}\label{def:shrinktime} 
The time step $t$ is a detection time of ADWIN if $|W(t+1)| \leq |W(t)|$. In addition, ADWIN ``shrinks'' the window at time $t$ if time step $t$ is a detection time. 
We define the breakpoint as the last round of $W_2$ from the split $W = W_1 \cup W_2$ (Figure \ref{fig:breakpoint_adwin}). 
We let $\ET_d$ be the set of all detection times.
\end{mydef}
\begin{myrem}{\rm
Unlike the set of changepoints $\ET_c$ (Definition \ref{def:streams}) that is only defined for an abruptly changing stream and is independent of any underlying algorithm, $\ET_d$ is defined for both abrupt and gradual stream. The detection time is a random variable that is defined via ADWIN.
}\end{myrem}

\subsubsection{Bound for the estimator of the mean $\mu_t$}

\cite{bifet2007learning} derived a bound on the false positive and false negative rates by replacing $\delta$ with $\delta' = \delta / |W(t)|$, to account for multiple tests\footnote{There are $|W(t)|-1$ ways to split $W(t) = W_1 \cup W_2$.}. 
However, the existing analysis is incomplete
because it implicitly assumes that the window size $|W(t)|$ is deterministic.

In this paper, we use $\delta$ instead of $\delta'$ because it clarifies the analysis. The value of $\delta$ is in accordance with the possible multiplicity. With this aim, we introduce the notion of the window set and split and then introduce a confidence bound that holds with high probability.

\begin{mydef}{\rm (Window set)}\label{def:window}
The window set $\EW$ is the set of all the segments, which are the candidates of the current window $W$. 
\begin{equation}
\EW = \{W': W' = \{t',t'+1,\dots,t''\}, 1 \le t' < t'' < T\}.
\end{equation}
\end{mydef}%
\begin{mydef}{\rm (Split and estimator in a window)}
Letting $W' = \{t',t'+1,\dots,t''\}$, 
a split $W_1, W_2$ of a window $W' \in \EW$ is defined as two disjoint subsets of $W'$ such that $W_1 = \{t',t'+1,\dots,m\}$ and $W_2 = \{m+1,m+2,\dots,t''\}$ for some $m \in W' \setminus \{t''\}$.
For a window $W' \in \EW$,
\begin{equation}
\mu_{W'} = \frac{1}{|W'|} \sum_{s \in W'} \mu_s,
\end{equation}
and its empirical estimator is
\begin{equation}
\hatmu_{W'} = \frac{1}{|W'|} \sum_{s \in W'} x_s.
\end{equation}
\end{mydef}
\begin{proposition}{\rm (Uniform Hoeffding bound for the window set)}\label{prop:allwindowsbound}
Let $p>0$ be arbitrary. With probability $1 - 2/T^p$, we have the following:
\begin{equation}\label{ineq:allwindowsbound}
\forall\ {W' \in \EW}, \quad |\mu_{W'} - \hatmu_{W'}| \le \sqrt{\frac{\log (T^{2+p})}{2 |{W'}|}}.
\end{equation}
\end{proposition}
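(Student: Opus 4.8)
The plan is to reduce the uniform statement to a single per-window Hoeffding bound combined with a union bound over the deterministic family $\EW$. The essential observation — and the reason this reformulation repairs the incompleteness of the original argument noted just above — is that $\EW$ is fixed in advance and does not depend on the realized observations: although ADWIN maintains a data-dependent (hence random) window $W(t)$, that window is always an element of $\EW$, so a bound that holds simultaneously over all of $\EW$ automatically covers $W(t)$ no matter how it was selected.

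First I would fix an arbitrary $W' = \{t',\dots,t''\} \in \EW$ and write
\begin{equation}
\hatmu_{W'} - \mu_{W'} = \frac{1}{|W'|} \sum_{s \in W'} (x_s - \mu_s).
\end{equation}
Since the $x_s$ are independent across time steps and each satisfies $x_s \in [0,1]$, the summands $x_s - \mu_s$ are independent, mean-zero, and supported on an interval of length $1$. Hoeffding's inequality then yields, for any $\epsilon > 0$,
\begin{equation}
\Pr\left(|\hatmu_{W'} - \mu_{W'}| \ge \epsilon\right) \le 2\exp\left(-2 |W'| \epsilon^2\right).
\end{equation}
Choosing $\epsilon = \epsilon_{W'} := \sqrt{\log(T^{2+p})/(2|W'|)}$ makes the exponent equal to $-\log(T^{2+p})$, so the right-hand side collapses to $2/T^{2+p}$, independently of the window size.

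Next I would control the cardinality of $\EW$. A window is determined by its two endpoints $t' < t''$ drawn from $\{1,\dots,T-1\}$, so $|\EW| \le \binom{T}{2} \le T^2$. A union bound over the per-window failure events then gives
\begin{equation}
\Pr\left(\exists\, W' \in \EW : |\hatmu_{W'} - \mu_{W'}| \ge \epsilon_{W'}\right) \le |\EW| \cdot \frac{2}{T^{2+p}} \le T^2 \cdot \frac{2}{T^{2+p}} = \frac{2}{T^p},
\end{equation}
and taking complements yields exactly \eqref{ineq:allwindowsbound}, since $\epsilon_{W'}$ is precisely the threshold appearing there.

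The computations are entirely routine; the only real content is conceptual rather than technical, so the ``main obstacle'' is one of bookkeeping and the correct choice of confidence level. The crux is recognizing that the per-window tail must be set at level $T^{2+p}$ rather than at the sample-size-dependent $\delta' = \delta/|W(t)|$ of the original analysis: because the number of candidate windows is $O(T^2)$ and this count is fixed a priori, inflating the confidence by the factor $T^2$ suffices for a bound that holds uniformly and simultaneously over every possible window, and in particular over the random window actually maintained by ADWIN. I would also flag that the independence of the $x_s$ across $s$ is the one standing assumption that licenses the application of Hoeffding's inequality to each fixed window; the behaviour of the means $\mu_s$ plays no role at all, which is exactly why the same bound applies verbatim to stationary, abrupt, and gradual streams.
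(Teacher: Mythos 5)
Your proof is correct and follows essentially the same route as the paper: a per-window Hoeffding bound at level $2/T^{2+p}$ followed by a union bound over the at most $T^2$ windows in $\EW$, with the same key observation that $\EW$ is a deterministic family covering the random window $W(t)$. The only minor difference is that the paper invokes its Azuma--Hoeffding lemma (Lemma \ref{lem:hoeffding}), so strict independence of the $x_s$ is not needed, only that each $x_s$ has conditional mean $\mu_s$ given the past.
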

Proposition \ref{prop:allwindowsbound} is derived using the Hoeffding inequality (Lemma \ref{lem:hoeffding}, in Appendix \ref{sec:lemmas}) over all possible $|\EW| \le T^2$ windows.
The bound above holds regardless of the randomness of the current window $W = W(t)$. Moreover, it holds for any $W$ and any split $W_1 \cup W_2 = W$. 
We typically set $p=1$ because this ensures a uniform bound with probability $1 - O(1/T)$, and an event with probability $O(1/T)$ is usually negligible. 
Our results are based on this bound.

\subsubsection{Bound for the total error of the mean estimator}

In the following, we provide a bound on the following total error of the estimator $\hatmu_W$: 
\begin{mydef}{\rm (Total error)}\label{def:totalerror}
The total error of estimator $\hatmu_W$ is defined as follows:
\begin{equation}
\Err(T) = \sum_{t=1}^T |\hatmu_{W(t)} - \mu_t|.
\end{equation}
\end{mydef}

We first derive the error bound of ADWIN for a stationary stream, which directly follows from the false positive rate (Eq.~\eqref{ineq:allwindowsbound}).

\begin{thm}{\rm (ADWIN in stationary environments)}\label{thm:adwinstationary}
Let the stream be stationary. Then, for ADWIN with $\delta = 1/T^3$, we have the following:
\begin{equation}
\Ex[\Err(T)] \le \tilO(\sqrt{T}).
\end{equation}
\end{thm}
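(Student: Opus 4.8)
The plan is to condition on the uniform concentration event of Proposition \ref{prop:allwindowsbound} and argue that, on this event, a stationary stream never triggers a detection time, so that the window grows to full size and the estimator enjoys the standard $1/\sqrt{|W(t)|}$ concentration. First I would fix $p=1$, so that $\log(T^{2+p}) = \log(T^3) = 3\log T = \log(1/\delta)$ for the prescribed $\delta = 1/T^3$. Because the stream is stationary we have $\mu_{W'} = \mu$ and $\mu_t = \mu$ for every window $W' \in \EW$ and every $t$. Proposition \ref{prop:allwindowsbound} then guarantees that with probability at least $1 - 2/T$ the event
\[
E = \Bigl\{\, \forall W' \in \EW, \ |\hatmu_{W'} - \mu| \le \sqrt{\frac{3\log T}{2|W'|}} \,\Bigr\}
\]
holds simultaneously across all candidate windows.

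The central step is to show that on $E$ no shrinking ever occurs. For any split $W = W_1 \cup W_2$ of the current window, the triangle inequality together with $E$ gives
\[
|\hatmu_{W_1} - \hatmu_{W_2}| \le |\hatmu_{W_1} - \mu| + |\mu - \hatmu_{W_2}| \le \sqrt{\frac{3\log T}{2|W_1|}} + \sqrt{\frac{3\log T}{2|W_2|}} = \epscut^\delta,
\]
so the shrinking test in Algorithm \ref{alg:ADWIN} is never triggered and $W(t) = \{1,\dots,t-1\}$ with $|W(t)| = t-1$. Consequently, still on $E$, I would bound the total error directly, using $x_s \in [0,1]$ for the empty-window term at $t=1$ and $\sum_{s=1}^{T-1} s^{-1/2} \le 2\sqrt{T}$:
\[
\Err(T) = \sum_{t=1}^T |\hatmu_{W(t)} - \mu| \le 1 + \sum_{t=2}^T \sqrt{\frac{3\log T}{2(t-1)}} \le 1 + \sqrt{6\,T\log T} = \tilO(\sqrt{T}).
\]

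It then remains to control the complementary event $E^c$, which has probability at most $2/T$. Since every summand is at most $1$, the trivial bound $\Err(T) \le T$ applies there, and combining the two cases yields
\[
\Ex[\Err(T)] \le \Ex[\Err(T)\Ind_E] + \Ex[\Err(T)\Ind_{E^c}] \le \bigl(1 + \sqrt{6\,T\log T}\bigr) + T\cdot\frac{2}{T} = \tilO(\sqrt{T}).
\]
The main obstacle is the no-shrinking step: it hinges on the confidence radius of Proposition \ref{prop:allwindowsbound} matching the ADWIN threshold $\epscut^\delta$, which is precisely what the choice $\delta = 1/T^3$ with $p=1$ delivers through $\log(1/\delta) = \log(T^{2+p})$. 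A minor technical wrinkle is that the triangle inequality yields $|\hatmu_{W_1} - \hatmu_{W_2}| \le \epscut^\delta$ with equality possible, whereas the test fires at $\ge \epscut^\delta$; this boundary case can be absorbed by enlarging the radius by an arbitrarily small amount (taking $p$ slightly larger than $1$), which changes only the lower-order terms.
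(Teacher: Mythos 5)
Your proposal is correct and follows essentially the same route as the paper: condition on the uniform Hoeffding event of Proposition \ref{prop:allwindowsbound} with $p=1$, conclude that no shrink ever occurs so the window is the full history, sum the per-step radii $\sqrt{\log(T^3)/(2t)}$ to get $\tilO(\sqrt{T})$, and absorb the complementary event via the trivial bound $T\times(2/T)$. The boundary case you flag (equality with $\epscut^\delta$) can also be dismissed without perturbing $p$, since Hoeffding's inequality bounds $\Pr[|\hatmu_{W'}-\mu|\ge \cdot\,]$, so the high-probability event may be stated with strict inequalities, under which the detection condition is strictly unattainable.
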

\begin{proofof}{thm:adwinstationary} %
Equation \eqref{ineq:allwindowsbound} with $p=1$, together with Definition \ref{def:shrinktime} implies that no shrink occurs with probability $1 - 2/T$. 
Therefore,
\begin{align}
\Ex[\Err(T)] 
&\le 
\underbrace{T \times \frac{2}{T}}_{\text{case where at least one shrink occurs}} + \underbrace{\sum_{t=1}^T \sqrt{\frac{\log ( T^3)}{2t}}}_{\text{case where no shrink occurs}}\\
&\le 1 + \sum_{t=1}^T \sqrt{\frac{\log (T^3)}{2t}} \\
&\le \sqrt{6 T(\log T)} + 1, \quad \text{\ \ \ $\left(\text{by} \sum_{t=1}^T (1/\sqrt{t}) \le 2\sqrt{T}\right)$}
\end{align}
which completes the proof.
\end{proofof}
The $\tilO(\sqrt{T})$ error is the optimal rate because we can only identify the true value of $\mu_W$ up to $O(\sqrt{1/|W|})$. The error per time step $|\mu_t - \mu_W|$ is at least $\Omega(1/\sqrt{|W|}) = \Omega(1/\sqrt{t})$,
and the total error is $\Omega(\sum_t 1/\sqrt{t}) = \Omega(\sqrt{T})$. 

Theorem \ref{thm:adwinstationary} states that ADWIN can learn from a stationary environment without any unnecessary shrinking. This statement contrasts with such methods as periodic resetting or fixed-size windowing algorithms which discard their entire memory after a fixed period. 
Having derived the learnability of ADWIN for a stationary stream, we are now interested in the property of ADWIN in the face of a nonstationary stream.

\subsection{Analysis of ADWIN for abrupt changes}\label{subsec:adwin_abrupt}

This section derives an error bound of ADWIN in the face of an abrupt change. As informally discussed in \cite{bifet2007learning}, 
ADWIN is able to detect abrupt changes if the changes are infrequent and gaps are detectable. Our results here are even more robust. Somewhat surprisingly, the bound in Theorem \ref{thm:adwinabrupt}
does not depend on the detectability of the change. No matter how large or small the changes are and in which interval the changes occur, the algorithm's performance is bounded in terms of the number of changes. 

\begin{thm}{\rm (Error bound of ADWIN under abrupt changes)} \label{thm:adwinabrupt}
Let the environment be abrupt with $\NumChange$ changepoints.
The total error of ADWIN with $\delta = 1/T^3$ is bounded as \begin{equation}\label{ineq:adwinabruptbound}
\Ex[\Err(T)] = \tilO(\sqrt{MT}).
\end{equation}
\end{thm}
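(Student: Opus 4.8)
The plan is to condition on the uniform Hoeffding event $E$ of Proposition \ref{prop:allwindowsbound} with $p=1$ (which holds with probability $1-2/T$) and to split the per-round error into a \emph{variance} part and a \emph{bias} part,
\begin{equation}
|\hatmu_{W(t)}-\mu_t| \le |\hatmu_{W(t)}-\mu_{W(t)}| + |\mu_{W(t)}-\mu_t|.
\end{equation}
On the complementary event $E^c$ I would bound $\Err(T)\le T$ trivially, so its expected contribution is at most $T\cdot(2/T)=O(1)$, exactly as in the proof of Theorem \ref{thm:adwinstationary}. It then remains to bound both parts on $E$, where the $M$ changepoints partition $[T]$ into $M+1$ stationary segments $I_1,\dots,I_{M+1}$.

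For the bias, fix $t$ and let $\tau$ be the most recent changepoint before $t$, so that every $s\in\{\tau+1,\dots,t\}$ shares the mean $\mu_t$ and $n:=t-\tau$. If the current window does not reach back past $\tau$, the bias is $0$. Otherwise I consider the split of $W(t)$ at $\tau$ into $W_1$ (the $o$ old rounds) and $W_2$ (the $n$ new rounds). Because the shrinking while-loop has terminated, this split is not significant, i.e.\ $|\hatmu_{W_1}-\hatmu_{W_2}|<\epscut^\delta$; combining this with $E$ gives $|\mu_{W_1}-\mu_t|=|\mu_{W_1}-\mu_{W_2}|< 2\sqrt{\tfrac{3\log T}{2o}}+2\sqrt{\tfrac{3\log T}{2n}}$. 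Since $\mu_{W(t)}-\mu_t=\tfrac{o}{o+n}(\mu_{W_1}-\mu_t)$ and $\tfrac{o}{o+n}\bigl(\tfrac{1}{\sqrt o}+\tfrac1{\sqrt n}\bigr)\le\tfrac{2}{\sqrt n}$, the bias at round $t$ is $\tilO(1/\sqrt{n})$, regardless of how many older changepoints sit inside $W_1$. Summing over each segment gives $\sum_{t\in I_i}1/\sqrt{n}\le 2\sqrt{|I_i|}$, and Cauchy--Schwarz over the $M+1$ segments yields $\sum_i\sqrt{|I_i|}\le\sqrt{(M+1)T}$, so the total bias is $\tilO(\sqrt{MT})$.

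For the variance, $E$ gives $|\hatmu_{W(t)}-\mu_{W(t)}|\le\sqrt{\tfrac{\log(T^3)}{2|W(t)|}}$, so I must control $\sum_t 1/\sqrt{|W(t)|}$. Between two consecutive detection times the window only grows, so $|W(t)|\ge t-d(t)$, where $d(t)$ is the last detection time before $t$. Partitioning $[T]$ into the runs delimited by detection times and using $\sum_{k=1}^{r}1/\sqrt k\le 2\sqrt r$ on a run of length $r$, Cauchy--Schwarz over the $N+1$ runs gives $\sum_t 1/\sqrt{|W(t)|}\le 2\sqrt{(N+1)T}$, where $N=|\ET_d|$. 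The crux is therefore to show $N=\tilO(M)$ on $E$: mirroring Theorem \ref{thm:adwinstationary}, no shrink can occur while $W(t)$ lies inside a single stationary segment (all its splits have equal true means, so $E$ forbids the cut condition), hence every detection time is caused by a window straddling a changepoint; and since the window start is nondecreasing, each changepoint is straddled during one contiguous interval and can trigger only $\tilO(1)$ shrinks before being passed for good. This gives $N=\tilO(M)$ and a variance contribution of $\tilO(\sqrt{MT})$.

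The main obstacle is precisely this counting step. The delicate points are the boundary case of Hoeffding's inequality (the cut threshold $\epscut^\delta$ matches the confidence width, so I must argue that $E$ genuinely \emph{rules out} shrinks inside a stationary segment rather than merely making them unlikely) and the bookkeeping that a single changepoint cannot be charged with more than a polylogarithmic number of detection times, accounting for the fact that the while-loop may remove several changepoints at once and that which split is cut is implementation dependent. Once $N=\tilO(M)$ is established, adding the $O(1)$ contribution from $E^c$ to the two $\tilO(\sqrt{MT})$ bounds completes the proof.
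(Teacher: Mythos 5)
Your overall decomposition (condition on the uniform Hoeffding event $E$, handle $E^c$ trivially, then split into bias and variance) is the paper's, and your bias argument is correct and essentially the paper's as well: the paper also exploits that the while-loop has terminated, splitting the window near the changepoint and getting $|\mu_{W(t)}-\mu_t|=\tilO(1/\sqrt{c(t)})$, then summing per segment with Cauchy--Schwarz; your split exactly at $\tau$ with the weight $\tfrac{o}{o+n}$ is a slightly cleaner route to the same bound.

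The genuine gap is exactly where you flag it: the claim $N=\tilO(M)$ for the number of detection times. The event $E$ cannot deliver it, and in the worst case it is false. Under $E$, a shrink is consistent with the data whenever the two sides of \emph{some} split have different true means; nothing forces the chosen cut to fall at or after the changepoint. The cut may fall strictly before $\tau$, removing only a block of stale pre-change data and leaving $\tau$ (plus more pre-change data) inside the window, after which further detections remain possible. The only quantitative restriction $E$ places on a violating split is that both sides have $\gtrsim \log T$ points (otherwise $\epscut^\delta>1$, which no empirical gap of $[0,1]$-valued data can reach). Hence a single changepoint preceded by a segment of length $\Theta(T)$ can, consistently with $E$, be straddled across $\Omega(T/\log T)$ detection times, each cut nibbling off $\Theta(\log T)$ old points. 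So ``each changepoint can trigger only $\tilO(1)$ shrinks before being passed for good'' is unsupported, and with it the chain $\sum_t 1/\sqrt{|W(t)|}\le 2\sqrt{(N+1)T}=\tilO(\sqrt{MT})$ collapses. (The equality boundary case of Hoeffding you also mention is shared by the paper and is harmless after adjusting constants; it is not the real obstacle.)

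The paper's proof never counts detection times, and this is the idea you are missing. It splits on whether the current window reaches back past the last changepoint. If it does, then since windows are contiguous, $W(t)\supseteq\{\tau+1,\dots,t-1\}$, so $|W(t)|\ge c(t)-1$ \emph{automatically}: a straddling window is necessarily large, and its variance term is $\tilO(1/\sqrt{c(t)})$, which sums to $\tilO(\sqrt{MT})$ exactly as in your bias part. If it does not (window strictly inside the current segment), then under $E$ no further shrink can occur until the next changepoint, so the window grows by one per step and each window size $n$ is attained at most once per segment, i.e.\ at most $O(M)$ times over the horizon; this is the paper's event $\ED$, and Cauchy--Schwarz again gives $\tilO(\sqrt{MT})$. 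Note the contrast with your step $|W(t)|\ge t-d(t)$: that lower bound discards the data retained after a shrink, which is precisely what creates your need to count shrinks; the paper instead uses the retained data (straddling $\Rightarrow$ large window) and needs no control on $|\ET_d|$ at all. Repairing your variance argument along these lines is the minimal fix.
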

\begin{sketch}%
Let $c(t)$ be the number of time steps after the last changepoint. 
First, we demonstrate that $|\mu_W - \mu_t| = \tilO(1/\sqrt{c(t)})$ because ADWIN otherwise would shrink the window further (i.e., the event $\EB$ in the proof).
Second, the window size $|W|$ is also bounded below. Given a sufficiently high confidence parameter, $|W| = O(c(t))$ and thus $|\mu_W - \hatmu_W| = \tilO(1/\sqrt{c(t)})$. Combining these two yields the bound of \begin{equation}
|\hatmu_W - \mu_t| \le |\mu_W - \mu_t| + |\mu_W - \hatmu_W| = \tilO(1/\sqrt{c(t)}).
\end{equation}
Using the Cauchy-Schwarz inequality yields the bound of Eq.~\eqref{ineq:adwinabruptbound}.
\end{sketch}
The formal proof is found in Appendix \ref{subsec:proof_adwinabrupt}.
\begin{myrem}{\rm
Theorem \ref{thm:adwinabrupt} implies the optimality of ADWIN under abrupt drift. To observe this, assume that a changepoint exists every $T/M$ time steps. As discussed in Theorem \ref{thm:adwinstationary}, the optimal rate of error for each interval between changepoints is $\tilTheta(\sqrt{T/M})$, and the total error should be $\tilTheta(M \times \sqrt{T/M} ) = \tilTheta(\sqrt{MT})$, which matches Theorem \ref{thm:adwinabrupt}.
}\end{myrem}

\subsection{Analysis of ADWIN for gradual changes}\label{subsec:adwin_gradual}
In this section, we analyze ADWIN for a gradually changing stream, where the mean $\mu_t$ changes slowly with constant $b$ (Definition \ref{def:streams}). We consider  the error for $b=T^{-d}$ by following the framework of \cite{DBLP:conf/amcc/WeiS18}.

\begin{thm}{\rm (Error bound of ADWIN under gradual changes)}\label{thm:adwingradual}
Assume that there exists $d \in (0,3/2)$ such that, the stream is gradually changing with its constant satisfies $b \le T^{-d}$.
Then, the performance of ADWIN with $\delta \le 1/T^3$ is bounded as follow:
\begin{equation}
\Ex[\Err(T)] = \tilO(T^{1-d/3}).
\end{equation}
\end{thm}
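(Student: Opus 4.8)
The plan is to condition on the high-probability event of Proposition \ref{prop:allwindowsbound} and to bound the per-step error $|\hatmu_{W(t)} - \mu_t|$ as a function of the current window size $w_t := |W(t)|$ alone, and then to sum over $t$. Throughout I write $b \le T^{-d}$ and take $\delta$ slightly below $1/T^3$ (e.g.\ $\delta = 1/T^4$, which still satisfies $\delta \le 1/T^3$) so that the detection threshold $\epscut^\delta$ strictly dominates the Hoeffding slack of Proposition \ref{prop:allwindowsbound} by a constant factor; this gap is needed for the counting step below. As in the abrupt case I split $|\hatmu_{W(t)} - \mu_t| \le |\mu_{W(t)} - \mu_t| + |\mu_{W(t)} - \hatmu_{W(t)}|$, where the estimation term is $\tilO(1/\sqrt{w_t})$ by Proposition \ref{prop:allwindowsbound}.

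The first step is to bound the bias $|\mu_{W(t)} - \mu_t|$ using the fact that, by construction, the current window $W(t)$ contains no detectable split. Fix a block length $k$ and consider the split that isolates the most recent $k$ observations $W_2$ from the older block $W_1$. Since no shrink occurred, $|\hatmu_{W_1} - \hatmu_{W_2}| < \epscut^\delta$, so Proposition \ref{prop:allwindowsbound} gives $|\mu_{W_1} - \mu_{W_2}| = \tilO(1/\sqrt{k} + 1/\sqrt{w_t - k})$. Because $\mu_{W(t)}$ is a convex combination of $\mu_{W_1}$ and $\mu_{W_2}$, we have $|\mu_{W(t)} - \mu_{W_2}| \le |\mu_{W_1} - \mu_{W_2}|$, while the gradual assumption gives $|\mu_{W_2} - \mu_t| \le b k$ since $W_2$ spans only the last $k$ steps. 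Hence $|\mu_{W(t)} - \mu_t| = \tilO(1/\sqrt{k}) + bk$, and choosing $k = \Theta(b^{-2/3})$ (feasible because $b^{-2/3} = T^{2d/3} \le T$ for $d < 3/2$; if $w_t < b^{-2/3}$ take $k = w_t$ instead) yields $|\mu_{W(t)} - \mu_t| = \tilO(b^{1/3} + 1/\sqrt{w_t})$. Combining with the estimation term, the per-step error is $\tilO(1/\sqrt{w_t} + b^{1/3})$, and since $b^{1/3} \le T^{-d/3}$, the $b^{1/3}$ contribution sums to $\tilO(T^{1-d/3})$ over the horizon.

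It therefore remains to show $\sum_{t=1}^T 1/\sqrt{w_t} = \tilO(T^{1-d/3})$, which is the crux and the step I expect to be the main obstacle, since it requires controlling how often and how deeply ADWIN shrinks under slow drift. The key is a lower bound on the window size at every detection time: if a shrink occurs at $t$, some split $(W_1, W_2)$ satisfies $|\hatmu_{W_1} - \hatmu_{W_2}| \ge \epscut^\delta$, so by the detectability gap its true gap obeys $|\mu_{W_1} - \mu_{W_2}| = \Omega(\epscut^\delta) = \Omega(\sqrt{(\log T)/w_t})$; on the other hand the gradual assumption caps it at $|\mu_{W_1} - \mu_{W_2}| \le b\, w_t$. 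Equating these forces $w_t \ge w_{\min} := \Omega((\log T/b^2)^{1/3}) = \Omega((\log T)^{1/3} T^{2d/3})$ at every detection time. Consequently, steps with $w_t \ge w_{\min}/2$ contribute at most $T\sqrt{2/w_{\min}} = \tilO(T^{1-d/3})$; and a step with $w_t < w_{\min}/2$ can occur only while the window is regrowing after a shrink that dropped it below $w_{\min}/2$. Since the next shrink cannot happen until the window climbs back above $w_{\min}$, each such regrowth phase spans at least $w_{\min}/2$ steps, so there are at most $2T/w_{\min}$ of them, and each contributes at most $\sum_{w \le w_{\min}} 1/\sqrt{w} \le 2\sqrt{w_{\min}}$; together these give $\tilO(T/\sqrt{w_{\min}}) = \tilO(T^{1-d/3})$.

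Adding the three contributions — the $b^{1/3}$ term and the two parts of $\sum_t 1/\sqrt{w_t}$ — together with the $O(1)$ cost of the low-probability failure of Proposition \ref{prop:allwindowsbound} gives $\Ex[\Err(T)] = \tilO(T^{1-d/3})$. The condition $d < 3/2$ enters in two places: it keeps the optimal block length $k = \Theta(T^{2d/3})$ no larger than $T$, and it guarantees that the extreme case of no shrinking at all (where $w_t = t$ and $\sum_t 1/\sqrt{w_t} = O(\sqrt T)$) is still dominated by $T^{1-d/3}$. The main difficulty, and the part most in need of care, is the detection-time lower bound on $w_t$ together with the regrowth accounting; this is precisely where the constant-factor gap between $\epscut^\delta$ and the Hoeffding slack is used.
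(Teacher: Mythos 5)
Your proof is correct, and while it shares the paper's overall strategy (exploit the no-detection condition for the bias, and a detection-size lower bound of order $b^{-2/3}$ for the shrink accounting), both of its technical pillars are executed differently from the paper's. For the bias $|\mu_{W(t)}-\mu_t|$, the paper proves a stronger statement (Lemma \ref{lem:gradualbound}): it chains the no-detection condition over $C$ equal subwindows of size $N$ with a telescoping argument to control $|\mu_s-\mu_{s'}|$ for \emph{every} pair $s,s'$ in the window. You instead use a single split isolating the last $k$ observations together with the convexity identity $\mu_{W(t)}-\mu_{W_2}=\tfrac{|W_1|}{|W|}(\mu_{W_1}-\mu_{W_2})$; this is more elementary and suffices here, but it only controls the window-mean bias, whereas the paper's pairwise version is reused later in the bandit analysis (Lemma \ref{lem:gradualbound_bandit}). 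For the summation, the paper counts shrinks with small retained window $W_2$ (Lemmas \ref{lem:errorshrink} and \ref{lem:graduannumshrink}) and splits the error by the time $\cbreak(t)$ since the last detection; you instead work directly with the window size $w_t$, lower-bounding it at every detection and amortizing the small-window steps over regrowth phases of length $\Omega(w_{\min})$. Your accounting is arguably cleaner, since it sidesteps the mismatch between $w_t$ and $\cbreak(t)$ caused by ADWIN retaining $W_2$ after a shrink. Finally, the way you create the constant-factor gap between $\epscut^\delta$ and the Hoeffding slack differs: you shrink $\delta$ to $1/T^4$ while keeping Proposition \ref{prop:allwindowsbound} at $p=1$, whereas the paper keeps $\delta=1/T^3$ and instead lowers the Hoeffding exponent to $2+d$, producing the factor $\sqrt{3}-\sqrt{2+d}$ in Eq.~\eqref{ineq:ndef}, which is positive only for $d<1$; your choice avoids that issue entirely (and could equally be replaced by keeping $\delta=1/T^3$ and taking $p\in(d/3,1)$ in Proposition \ref{prop:allwindowsbound}). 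The only caveat is that you prove the claim for one admissible $\delta$ rather than every $\delta\le 1/T^3$, but the theorem cannot literally hold uniformly over all such $\delta$ (e.g., $\delta=e^{-T}$ disables detection altogether), so this reading is the intended one.
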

\begin{sketch}%
We establish two lemmas in the appendix. Lemma \ref{lem:gradualbound} states that the drift $|\mu_s - \mu_t|$ for any two time steps $s,t$ in the current window $W(t)$ is bounded by $\tilO(b N + \sqrt{1/N})$ for any $N \ge |W(t)|$. Moreover, Lemma \ref{lem:errorshrink} states that the window is likely to grow until $|W| = O(b^{-2/3})$. Combining these two lemmas yields $|\mu_s - \mu_t| = O(b^{1/3}) = O(T^{-d/3})$.
\end{sketch}%
The formal proof is found in Appendix \ref{subsec:proof_adwingradual}.
Theorem \ref{thm:adwingradual} states that, if the change is slow compared with the current scale of interest $T$ then the error per time step $\Err(T)/T$ approaches zero. 

In this section, we have bounded the total error for streams with abrupt changes (Section \ref{subsec:adwin_abrupt}) as well as for streams with gradual changes (Section \ref{subsec:adwin_gradual}). This concludes our discussion on ADWIN.
In subsequent sections, we consider the idea of combining ADWIN with the MAB setting where multiple streams are involved, and only a selected subset of streams are observable.

\section{Multi-armed Bandits}
\label{sec:MP-MAP} %

\begin{algorithm} 
	\caption{Thompson sampling (TS)}
\label{alg:mpts} 
	\begin{algorithmic}[1]
		\Require Set of arms $[K]$. 
		\State \textbf{Initialize:} $W(t) = \emptyset$ \label{line:mpts_init}
		\For{$t = 1,\dots,T$}  \label{line:mpts_loop}
    		\For{$i = 1,\dots,K$}  
    		    \State $\Sit = \sum_{s \in W(t)} \Ind[i = I(s)]x_{i,s}$, $\Nit = \sum_{s \in W(t)} \Ind[i = I(s)]$.
        		\State $\tilmu_{i,t} \sim \text{Beta}(\Sit+1, \Nit-\Sit+1)$. \label{betadrawing}
    		\EndFor 
    		\State Play arm $I(t) := \argmax_{i} \tilmu_{i,t}$. \Comment{Posterior sampling} \label{sorting-theta}
    		\State Receive reward $x(t)$.
    		\State Update window $W(t+1) = W(t) \cup \{t\}$.
        \EndFor
\end{algorithmic}
\end{algorithm}
\begin{algorithm} 
	\caption{Kullback–Leibler UCB (KL-UCB).}\label{alg:mpklucb} 
	\begin{algorithmic}[1]
		\Require Set of arms $[K]$. 
		\State \textbf{Initialize:} $W(t) = \emptyset$
		\For{$t = 1,\dots,T$}  
    		\For{$i = 1,\dots,K$}  
    		    \State $\Sit = \sum_{s \in W(t)} \Ind[i = I(s)]x_{i,s}$, $\Nit = \sum_{s \in W(t)} \Ind[i = I(s)]$.
        		\State $\hatmu_{i,W(t)}= \Sit/\Nit$ where $0/0 = 1$.
        		\State $U_i(t) = \max \{q \in [0,1]: \Nit \KL(\hatmu_{i,W(t)},q)\le \log( t/\Nit )\}$. \Comment{KL-UCB index} \label{klucbindex}
    		\EndFor 
    		\State Play arm $I(t) := \argmax_{i}  U_i(t)$. \Comment{Arm with the largest UCB index} \label{sorting-ucb}
            \State Receive reward $x(t)$.
    		\State Update window $W(t+1) = W(t) \cup \{t\}$.
        \EndFor
	\end{algorithmic}
\end{algorithm}

Up to now, we have studied the single-stream learning problem, in which every values of the stream are observed. From this section, we study the Multi-Armed Bandit (MAB) problem \citep{Thompson1933,robbins1952}. 
This problem involves $K$ streams, and a forecaster can only observe one of these streams at each time step.
We first formalize the problem setting we consider. 

\komiyama{Changed notation $I(t)$. Before: set (multiple play). Now: single arm.}
Let there be $K$ arms. At each time step $t = 1, \dots, T$, the forecaster selects an arm $I(t) \in [K]$, and then receives a reward $x_{I(t),t}$ from the selected arm. 
We also use the term ``environment'' to describe $K$ streams that generates rewards $x_{i,t}$. Let $(\mu_{i,t})_{t=[T]}$ be the $i$-th stream.
The stationary (resp.~abruptly-changing and gradualy-changing) environment is defined as a set of $K$ stationary (resp.~abruptly-changing and gradualy-changing) streams, respectively, where these streams are defined in Definition \ref{def:streams}.

The goal of the forecaster is to maximize the sum of the rewards of the selected arms by using a good algorithm,
 and the performance of a bandit algorithm is usually measured by the regret, which is defined as the difference between the expected reward of the best arm and the expected reward of the arms selected by the algorithm.
That is, 
\begin{equation}
\reg(t)
=
\max_{i\in[K]}\mu_{i,t} - \mu_{I(t),t}
\end{equation}
and 
\begin{equation}\label{ineq:regret}
\Reg(T)
=
\sum_{t=1}^T \reg(t).
\end{equation}

In the $K$-armed bandit problem, Thompson sampling \citep[TS]{Thompson1933} and the upper confidence bound \citep[UCB]{lairobbins1985,auer2002finite} are widely known algorithms that balance exploration and exploitation. Among several variants of UCB, the one that utilizes KL divergence, which is called KL-UCB \citep{garivierklucb}, is known to have a state-of-the-art performance in stationary environments. TS and KL-UCB in our notation are described in Algorithms \ref{alg:mpts} and \ref{alg:mpklucb}, respectively. Here, $\Ind[\EA]=1$ if $\EA$ or $0$ otherwise.

Under a changing environment, the performance of these algorithms is no longer guaranteed. 
The next section extends the MAB framework for the case of abruptly-changing and gradually-changing streams, by combining adaptive window technology with bandit algorithms.

These algorithms are designed to deal with stationary environments: The regret of KL-UCB and TS is bounded as follows:
\komiyama{(Note: This was "DD" (distribution-dependent) in the previous manuscript)}
\begin{mydef}{\rm (Stationary regret)}\label{def:logarithmic}
For a stationary environment (i.e., $\mu_{i,t} = \mu_i$), for ease of discussion, we assume $\mu_1 > \mu_2 > \dots > \mu_K$. Let the suboptimality gap be $\Delta_i = \mu_1 - \mu_i$.
A bandit algorithm has a logarithmic stationary regret if a universal constant $\Cst$ exists such that 
\begin{equation}
\updated{
\Ex[\Regbase(T)] \le \Cst \sum_{i \ne 1} \frac{\log{T}}{\Delta_i},}
\end{equation}
where $\Regbase(T)$ denotes the regret when we run the bandit algorithm.
\end{mydef}
In the literature of multi-armed bandit problem, this bound is often referred to as the distribution-dependent bound in the sense that it depends on $\Delta_i$. The inverse of $\Delta_i$ defines the hardness of the instance. Note that the logarithmic bound and its first-order dependence on $\Delta_i^{-1}$ is optimal \citep{lairobbins1985,auer2002finite,kaufmann2012thompson,agrawalaistats13}. 

\subsection{Drift-tolerant regret}\label{subsec:adwinbanditanalysis}

While the stationary regret is the most widely studied measure of the performance of stationary bandits, it cannot characterize the performance of a bandit algorithm under a changing environment.
The following extends this to the environments with drifts.
\komiyama{Updated. Now this is the form of distribution-dependent, and in expectation (no high-prob ver)}
\begin{mydef}{\rm (Drift-tolerant regret)}\label{def:driftrobust}
Assume a nonstationary environment that is abruptly or gradually changing.
Let
\begin{equation}
\Delta_i = \mu_{1,1} - \mu_{i,1}
\end{equation}
be the gap at $t=1$.
Let
\komiyama{06/16/2023 updated. Previous: $\eps(t) = \sum_{s<t} \max_i |\mu_{i,s} - \mu_{i,s+1}|$. Not much difference?}
\begin{equation}
\eps(t) = \max_{s\le t} \max_i |\mu_{i,s} - \mu_{i,1}|,
\label{def_epst}
\end{equation}
which is the maximum drift of the arms by time step $t$.
For $c>0$, let
\[
\Regbasetor(T, c) := \sum_t (\reg(t) - c\eps(t))_+,
\]
where $(x)_+ = \max(x, 0)$. Namely, $\Regbasetor(T, c)$ is the regret where the regret proportional to drift is tolerated.
A bandit algorithm has logarithmic drift-tolerant regret if a factor %
$\Cdr = O(1)$ 
exists such that
\begin{equation}
\Ex[\Regbasetor(T, \Cdr)] \le \Cdr
\sum_{i \ne 1} \frac{\log T}{\Delta_i}.
\end{equation}
\end{mydef}
\begin{myrem}{\rm
Definition \ref{def:driftrobust} is a generalization of the distribution-dependent regret in the literature of stationary bandits that allows a drift proportional to $\eps(t)$. Letting $\eps(t)=0$ immediately derives the stationary regret of Definition \ref{def:logarithmic}.
}\end{myrem}

As the following lemma characterizes, Thompson sampling (TS, Algorithm \ref{alg:mpts}) is drift-tolerant.
\begin{lem}\label{lem_tsdt}
Assume that rewards $x_{1,t},x_{2,t},\dots$ are binary (i.e., $0$ or $1$).
Then, TS has logarithmic drift-tolerant regret.
\end{lem}%
The formal proof is in Appendix \ref{subsec:proof_ts}.
Note that it is not very difficult to derive the drift-tolerant regret of KL-UCB by following similar steps as the proof of Lemma \ref{lem_tsdt}.

Although the drift-tolerant regret characterizes the performance of TS and KL-UCB under nonstationary environments, these algorithms are not good enough to deal with nonstationarity. This is because $\epsilon(t)$ increases over time and can lead to a meaningless bound of $\Theta(T)$ when $\epsilon(t)$ reaches $\Theta(1)$. A capable nonstationary bandit algorithm should be able to forget past data when it identifies a substantial drift. In the following section, we will merge these \text{base-bandit} algorithms with ADWIN.

\section{Analysis of MAB for Globally Nonstationary Environments}
\label{sec:NS-MP-MAP-analysis}

This section proposes the combination of adaptive windowing and bandit algorithms.  

\subsection{Adaptive resetting bandit algorithm}\label{subsec:aadwinbandit}

\begin{algorithm}  
	\caption{\ADWIN-bandit}\label{alg:ADWINbandit} 
	\begin{algorithmic}[1]
		\Require Set of arms $[K]$, confidence level $\delta$, base-bandit algorithm
        \State Initialize the base-bandit algorithm.
		\For{$t = 1,2,\dots,T$}
    		\State $(I(t), X(t)) = $ \Call{Base-bandit}{$W(t)$} \label{basebandit_round_ads} \Comment{Do one time step of the base-bandit algorithm}
    		\If{there exists a split $W(t+1) = W_1 \cup W_2$ such that $|\hatmu_{i,W_1} - \hatmu_{i,W_2}| \ge \epscut^\delta$} \label{changedetect_ads} 
                \State Update the window $W(t+1) = W_2$ of the base-bandit algorithm.
            \EndIf
		\EndFor
	\end{algorithmic}
\end{algorithm}

\begin{algorithm}
	\caption{\AADWIN-bandit}\label{alg:AADWINbandit} 
	\begin{algorithmic}[1]
		\Require Set of arms $[K]$, confidence level $\delta$, monitoring parameter $N\in\mathbb{N}$, base-bandit algorithm
    \State Initialize base-bandit algorithm.
    \For{$l = 1,2,\dots,\lceil \log_2 (T/\wasD N+1)\rceil $}\label{adwin_blocks}
		\For{$t = (2^{l-1}-1)\wasD N+1,\dots,\min\{(2^{l}-1)\wasD N,T\}$}\label{loop_inner} 
    		\If {$l \ge 2$ and $t = 0 \pmod{\wasD }$}
     		\State Pull $i^{(l-1)}$. \label{monitoring_one} \Comment{Monitoring arm of the previous block.}
    		\ElsIf {$l \ge 2$ and $t = 1 \pmod{\wasD }$ and $t\ge (2^{l-1}-2)\wasD N+1$}
      \State Pull $i^{(l)}$. \label{monitoring_two}
            \Comment{Monitoring arm of the current block.}
            \Else
                \State $(I(t),X(t)) = $ \Call{Base-bandit}{$W(t)$} \label{basebandit_round} \Comment{Do one step of the base-bandit algorithm}
            \EndIf
    		\If{there exists $W(t+1) = W_1 \cup W_2$ such that
		$|\hatmu_{i,W_1} - \hatmu_{i,W_2}| \ge \epscut^\delta$} \label{changedetect} 
        		\State Reset the entire algorithm with $T:=T-t$.  \label{initstart_monitoring} 
        \EndIf
        \If{$t= \wasD N + 1$} \Comment{The end of the first block $l=1$.}
            \State Set $i^{(1)} = \argmax_{i \in [K]} N_i^{(1)}$ \Comment{See Eq.~\eqref{ineq_monitordef} for $N_i^{(1)}$}.  \label{line_monitoring_one}
        \ElsIf{$l \ge 2$ and $t=(2^{l-1} - 2)\wasD N$} \Comment{Before the beginning of the last subblock of the block $l$.}
            \State Set $i^{(l)} = \argmax_{i \in [K]} N_i^{(l)}$. \Comment{See Eq.~\eqref{ineq_monitordef} for $N_i^{(l)}$} \label{line_monitoring_l}
        \EndIf
		\EndFor
		\EndFor
	\end{algorithmic}
\end{algorithm}%

\begin{figure}[t]
\begin{center}
    \includegraphics[width=1.\linewidth]{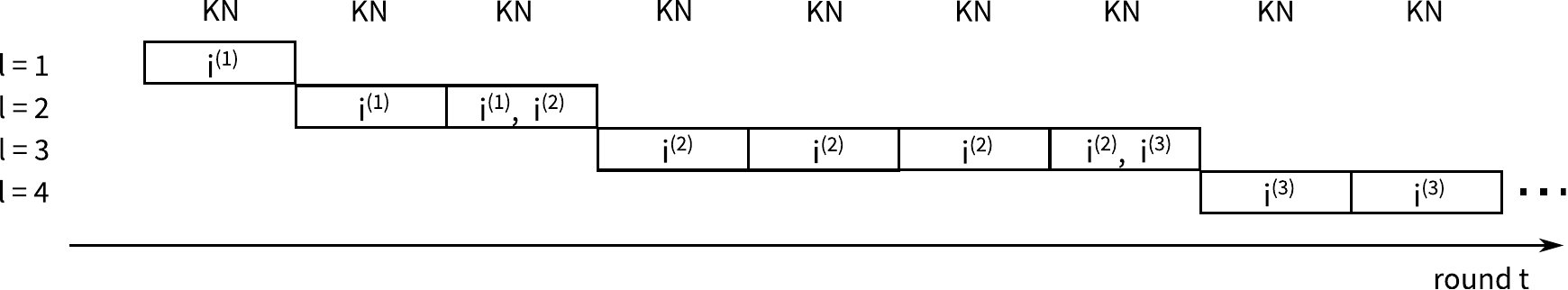}
\end{center}
\caption{
Illustration of the blocks in Algorithm \ref{alg:AADWINbandit}. Here, ``$i^{(l)}$'' for each block $l$ represents the monitoring arm, which is drawn at least $N$ times in each subblock. In the last subblock, two monitoring arms $i^{l-1}$ and $i^{l}$ are drawn at least $N$ times, for the aim of monitoring consistency.
 }
\label{fig:monitoring}
\end{figure}

\ADWIN{}-bandit (adaptive shrinking bandit; Algorithm \ref{alg:ADWINbandit}) combines adaptive windows with a \textit{base-bandit} algorithm such as TS or KL-UCB. In this algorithm, each arm has an associated adaptive window change detector. 
When such a detector identifies a split in the window $W$ into $W_1$ and $W_2$, it follows the ADWIN procedure (as explained in Section \ref{sec:adwin}) of reducing the window size to retain $W_2$.

We explain the steps of Algorithm \ref{alg:ADWINbandit} with TS as a base-bandit algorithm. Before the first time step, the base-bandit algorithm (Line \ref{line:mpts_init} of Algorithm \ref{alg:mpts}) is initialized. At each time step $t=1,2,\dots$, Line \ref{basebandit_round_ads} of Algorithm \ref{alg:ADWINbandit} runs an iteration of the base-bandit (Line \ref{line:mpts_loop} of Algorithm \ref{alg:mpts}) to obtain $I(t)$ and corresponding reward $x(t) = x_{I(t),t}$. Afterwards, Line \ref{changedetect_ads} of Algorithm \ref{alg:ADWINbandit} checks whether a change occurred for each arm. If at least one change is detected, it shrinks window $W(t+1)$ to $W_2$ and drops the memory of $W_1$.

Although we will demonstrate the superior empirical performance of Algorithm \ref{alg:ADWINbandit} in Section \ref{sec:experiments}, it is highly nontrivial to derive a regret bound of Algorithm \ref{alg:ADWINbandit}. 
The \AADWIN{}-bandit (adaptive resetting bandit; Algorithm \ref{alg:AADWINbandit}) addresses the two primary issues for the analysis as follows:
First, the updated window $W_2$ after a shrink can include some amount of observations before the changepoint\footnote{We can cope with this problem in the case of single-stream ADWIN (Section \ref{subsec:adwin_abrupt}) since $\mu_W$ eventually converges to $\mu_t$ as $W$ grows. 
However, under bandit feedback, our algorithms focus on optimal arms and draw suboptimal arms less often. Thus, bounding the error of the estimators
for suboptimal arms is hard if the current window $W$ still includes observations before the changepoint.
}. 
To address this issue, \AADWIN{}-bandit resets the entire window for each detection of a changepoint.
Second, it is hard to guarantee a consistent estimation of a single arm.
For example, in a data stream with an abrupt change, if only arm $1$ is selected prior to the change and only arm $2$ is selected after the change, the algorithm is incapable of identifying this change. Although this particular scenario is unlikely to happen, it poses a challenge to the analysis of the algorithm.
To address this issue, \AADWIN{}-bandit introduces the \textit{monitoring arm} that we continue to draw with some frequency.

\AADWIN{}-bandit (Algorithm \ref{alg:AADWINbandit}) works as follows. 
After each observation, if one of the change point detectors identifies a split in the window $W$ into $W_1$ and $W_2$, it resets the entire algorithm (Line \ref{initstart_monitoring}) to start with $W(t+1) = \emptyset$.
Moreover, Algorithm \ref{alg:AADWINbandit} divides the rounds into blocks $l = 1,2,3,\dots$ (Line \ref{adwin_blocks}). Each block consists of $O(2^{l-1})$ subblocks, and each subblock consists of $\wasD N$ rounds. Before the beginning of the last subblock of each $l$, it determines the monitoring arm $i^{(l)}$ (Line \ref{line_monitoring_l}), and then $i^{(l)}$ is drawn once in each $\wasD $ rounds during the next block. Here, monitoring arm is chosen to be $\argmax_i N_i^{(l)}$, where
\begin{equation}\label{ineq_monitordef}
N_i^{(l)} = 
\left\{
\begin{array}{ll}
|\{s \le \wasD N: I(s)=i\}|  & l = 1
\\
|\{(2^{l-1}-1) \wasD N + 1 \le s \le (2^l-2)\wasD N: I(s)=i, \text{Base-Bandit algorithm is called}\}|  & l \ge 2
\end{array}
\right.,
\end{equation}
be the number of draws of arm $i$ in the current block $l$. Note that there require some special treatment for $l=1$ (Line \ref{line_monitoring_one}). 

\subsection{Properties of Algorithm \ref{alg:AADWINbandit}}

As shown in Figure \ref{fig:monitoring}, the duration of two monitoring arms $i^{(l-1)}, i^{(l)}$ are overlapped so that the following properties are satisfied:
\updated{
\begin{mydef}{\rm (Monitoring consistency)}\label{def:monitoring}
\AADWIN{}-bandit with any base-bandit algorithm has the following two properties:
\begin{itemize}
\item Let $t \ge \wasD N + 1$ (i.e., any round in $l \ge 2$). Then, at least one of the arms $\{i^{(l-1)}, i^{(l)}\}$ satisfies the following: This arm was drawn at least $N$ times before round $t$ and will be drawn at least $N$ times within the next $\wasD N$ rounds. %
\item For any block $l=1,2,\dots$, there exists at least one arm that is drawn at least $N$ times for each subblock in $l$. %
\end{itemize}
\end{mydef}
}
The first property in Definition \ref{def:monitoring} is used for the abrupt case, and the second property in Definition \ref{def:monitoring} is used for the gradual case.

Moreover, the following theorem states that \AADWIN{}-bandit is designed so that it does not compromise the performance of the base-bandit algorithm when no reset takes place.
\begin{lem}{\rm (Regret due to monitoring)}\label{lem_monitoringreg}
Assume that we run Algorithm \ref{alg:AADWINbandit} with a base-bandit algorithm with $\wasD  \ge 3$.
Assume that Algorithm \ref{alg:AADWINbandit} has not reset itself until round $S$.
Let 
\[
\ETbase = \{t\in[S]: \text{base-bandit algorithm is evoked}\}
\]
be the set of rounds where the base-bandit algorithm is evoked (i.e., when Line \ref{basebandit_round} in Algorithm \ref{alg:AADWINbandit} is executed). %
\updated{
Then, the following holds:
\begin{equation}
\sum_{t=1}^S \reg(t) 
\le 
4
\left(
\sum_{t \in \ETbase} \reg(t) 
\right)
+ \sum_{t=1}^S 3\eps(t).
\end{equation}
}
\end{lem}
The proof of Lemma \ref{lem_monitoringreg} is found in Appendix \ref{subsec:proof_monitoring}.
\updated{The first term}
is a constant multiplication of the base-bandit regret. 
As a result, it can fully utilize the logarithmic drift-tolerant regret of the base-bandit algorithm, which we formalized in the following Lemma.
\begin{lem}\label{lem_monitoringdtprop}{\rm (Constant multiplication)}
Assume that
we run \AADWIN{}-bandit algorithm with a base-bandit algorithm that has logarithmic drift-tolerant regret (Definition \ref{def:driftrobust}).
Assume that no reset has occurred\footnote{Note that $S$ is a stopping time (random variable).} until round $S$.
Then, there exists a constant $\Cdr = O(1)$ that is independent of $S$, and the regret up to $S$ is bounded as 
\begin{equation}\label{ineq_monitoringdtprop}
\Ex[\Reg(S)] \le 
\Cdr 
\left(
\sum_{i \ne 1} \frac{\log T}{\Delta_i}
+
\Ex\left[
\sum_{t =1}^S \eps(t)
\right]
\right).
\end{equation}
\end{lem}
\begin{proofof}{Proof of Lemma \ref{lem_monitoringdtprop}}
Definition \ref{def:driftrobust} and Lemma \ref{lem_monitoringreg} imply
\[
\Ex\left[
\sum_{t\le S} (\reg(t) - \Cdr \eps(t))_+ 
\right]
\le 
\Ex\left[
\sum_{t\le T} (\reg(t) - \Cdr \eps(t))_+ 
\right]
\le \Cdr \sum_{i \ne 1} \frac{\log T}{\Delta_i},
\]
which implies Eq.~\eqref{ineq_monitoringdtprop}.
\end{proofof}

\subsection{Regret bounds of \AADWIN-bandit algorithms}
\label{subsec:regret}

Having defined the \AADWIN-bandit algorithm and the related properties, we state the main results regarding the performance of these algorithms in stationary abruptly or gradually changing environments.

\begin{thm}{\rm (Regret bound of \AADWIN-bandit, stationary case)}\label{thm:regret_stationary}
Let the environment be stationary. Let the base-bandit algorithm has a logarithmic stationary regret. Then, the regret of Algorithm \ref{alg:AADWINbandit} with $\delta = 1/T^3$ is bounded as follows:
\begin{equation}
\Ex[\Reg(T)] 
\le \Cst \sum_i \frac{\log{T}}{\Delta_i} + O(1). %
\end{equation}
\end{thm}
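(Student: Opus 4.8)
The plan is to show that in a stationary environment the change detector of \AADWIN-bandit fires only with negligible probability, so that the algorithm behaves exactly like the base bandit run on the entire history, whose regret is already controlled by the logarithmic DD bound (Definition \ref{def:logarithmic}). I would couple a run of \AADWIN-bandit with a run of the base bandit alone (sharing the environment sample path and the algorithm's internal randomness), show that the two coincide on a high-probability ``no false reset'' event, and handle the complementary event with the trivial regret bound.

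First I would construct the no-false-positive event. Fix an arm $i$; since $\mu_{i,t}=\mu_i$ is constant, each observed reward $x_i(s)$ (for $s$ with $i\in I(s)$) has conditional mean $\mu_i$, so the partial sums of $x_i(s)-\mu_i$ over the draws of arm $i$ form a bounded martingale. Because the draw times of arm $i$ are increasing, every contiguous time window $W'=\{t',\dots,t''\}$ corresponds to a contiguous segment of draw indices of arm $i$; hence the per-arm analogue of Proposition \ref{prop:allwindowsbound} applies (a union over the at most $T^2$ draw-index segments, Hoeffding on each). With $\delta=1/T^3$ (i.e.\ $p=1$) this gives, with probability $\ge 1-2/T$, $|\hatmu_{i,W'}-\mu_i|\le\sqrt{\log(T^3)/(2N_i(W'))}$ simultaneously for all $W'$. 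A union bound over the $K$ arms defines an event $E$ with $\Pr[E]\ge 1-2K/T$ on which this holds for every arm.

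Next I would argue that on $E$ no split ever reaches the detection threshold. For any arm $i$ and any split $W(t+1)=W_1\cup W_2$, the triangle inequality gives
\[
|\hatmu_{i,W_1}-\hatmu_{i,W_2}|
\le |\hatmu_{i,W_1}-\mu_i|+|\mu_i-\hatmu_{i,W_2}|
\le \sqrt{\frac{\log(T^3)}{2N_i(W_1)}}+\sqrt{\frac{\log(T^3)}{2N_i(W_2)}}
=\epscut^\delta ,
\]
so the condition in Line \ref{changedetect} (which requires $\ge\epscut^\delta$) is never met. Hence, under the coupling, the induction ``equal windows and observations $\Rightarrow$ equal actions and equal detection outcomes'' shows that the two trajectories coincide on $E$, and by the Nonintervening property (Remark \ref{rem:nonintervening}, with detection time $s=0$ and $S=T$) we get $\Reg(T)=\Regbase(T)$ as random variables on $E$. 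Splitting the expectation and using $\Regbase(T)\ge 0$,
\[
\Ex[\Reg(T)\,\Ind[E]]=\Ex[\Regbase(T)\,\Ind[E]]\le \Ex[\Regbase(T)]\le \Cst\frac{K\log T}{\Deltamin}+o(\log T),
\]
while on $E^c$ the per-step regret is at most $L$ (each $\mu_{i,t}\in[0,1]$ and $|I|=L$), so $\Reg(T)\le LT$ and this event contributes at most $(2K/T)\cdot LT=2KL=O(1)=o(\log T)$. Adding the two pieces yields the claim.

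I expect the only delicate step to be the passage from Proposition \ref{prop:allwindowsbound}, stated for a fully observed single stream, to the bandit setting, where arm $i$ is observed only along an adaptively chosen subsequence and the counts $N_i(W')$ are random. The reindexing of contiguous time windows to contiguous draw-index segments, together with a martingale version of Hoeffding's inequality, is what makes the uniform bound go through; once that is in place, the coupling argument, the union bound over arms, and the expectation split are all routine.
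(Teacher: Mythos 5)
Your proposal is correct and follows essentially the same route as the paper's proof: a high-probability ``no false reset'' event built from the uniform per-arm Hoeffding bound (the paper's Eq.~\eqref{ineq:allwindowsbound_bandit} with $p=1$), the nonintervening property (Remark~\ref{rem:nonintervening}) to equate the regret with the base bandit's DD regret on that event, and a trivial bound times the $O(K/T)$ failure probability on the complement. Your version merely spells out details the paper leaves implicit (the martingale/Azuma justification for adaptively sampled arms, the explicit coupling, and the per-step regret factor $L$ on the bad event), all of which are compatible with the paper's argument.
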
%
Theorem \ref{thm:regret_stationary} states that change-point detectors do not deteriorate the performance of the base-bandit algorithm. The proof directly follows from the following facts. First, adaptive windows do not make a false reset with at least with probability of $1-O(K/T)$. Second, %
the regret of the \AADWIN-bandit algorithm is equal to the base-bandit algorithm when no reset occurs (Lemma \ref{lem_monitoringdtprop}). For completeness, we provide the proof of Theorem \ref{thm:regret_stationary} in Appendix \ref{subsec:regstationary}.
This result has a striking difference from most existing nonstationary bandit algorithms, which have the cost of a forced exploration of $\Omega(\sqrt{T})$. In the following sections, we derive the regret bound in abruptly and gradually changing environments.

We next derive a regret bound for an abruptly changing environment. We first define the global changes that we assume on the nature of the streams (Definition \ref{def:detectable_bandit}) and then state the regret bound.

\komiyama{(made Definition \ref{def:detectable_bandit} simple, most of the detectability assumptions are now on Theorem \ref{thm:regret_abrupt})}
\begin{mydef}{\rm (Globally abrupt changes)}\label{def:detectable_bandit}
Let $\NumChange$ be the number of change points and   
$\ET_c = \{T_1,T_2,\allowbreak\dots,T_{\NumChange}\} = \{t:\exists\,i\in[K]\,\mu_{i,t} \ne \mu_{i,t+1}\}$
be the set of changepoints. Moreover, $(T_{0}, T_{\NumChange+1}) = (0, T)$.
The $m$-th changepoint is a global change with $\Crabrupt$ if 
\begin{equation} 
\max_{i,j \in [K],t \in \ET_c} \frac{|\mu_{j,t} - \mu_{j,t+1}|}{|\mu_{i,t} - \mu_{i,t+1}|} \le \Crabrupt
\end{equation}
is finite.
\end{mydef}
Intuitively, $\Crabrupt$ is the maximum ratio of changes among arms. Assuming it as a constant indicates that any modification in one arm is proportionally mirrored in all the other arms.
\begin{mydef}{\rm (Detectability)}\label{def_detectable_abrupt}
For the $m$-th changepoint, let $\epschg{m} = \min_i |\mu_{i,m} - \mu_{i,m+1}|$.
We let $U(\eps) = (\log (T^3))/(2\eps^2)$.
The $m$-th changepoint is detectable if $T_m - T_{m-1}, T_{m+1} - T_m \ge 48\wasD U(\epschg{m})$.
\end{mydef}
The detectability assumption is essentially the same as those in the literature of active nonstationary bandit algorithms, such as \cite{DBLP:conf/aistats/0013WK019} and \cite{besson2020efficient}.

\begin{thm}{\rm (Regret bound of \AADWIN{}-bandit under abrupt changes)}\label{thm:regret_abrupt} 
Let the base-bandit algorithm have a logarithmic drift-tolerant regret bound (Definition \ref{def:driftrobust}). %
Let the environment have $\NumChange$ detectable globally abrupt changes (Definitions \ref{def:detectable_bandit}, \ref{def_detectable_abrupt}) with $\Crabrupt>0$. 
Then, the regret of the Algorithm \ref{alg:AADWINbandit} 
 is bounded as follows:
\begin{equation}
\Ex[\Reg(T)] 
=
\tilO(\sqrt{MKT}),
\end{equation}
if $\delta = 1/T^3$, $N \ge 16U(\epschg{m}), \wasD N \le (T_{m+1} - T_m)/3$ holds for all $m$.
\end{thm}

\komiyama{(todo update this)}
\begin{sketch} 
The proof sketch is as follows.
\begin{itemize}
    \item With a high probability, the algorithm resets itself within $16 \wasD  U(\Delta_m)$ time steps after $m$-th changepoint there exists a corresponding reset. 
    \item By the sublinear drift-tolerant regret of the algorithm the expected regret between $m$-th and $(m+1)$-th changepoint is $\tilde{O}(\sqrt{K(T_{m+1}-T_{m})})$, where we have used the standard discussion of distribution-independent regret (c.f., p32 in \cite{bubeckthesis}) that utilizes the Cauchy-Schwarz inequality.
    \item The desired bound is yielded by summing the regret over all change points and another application of the Cauchy-Schwarz inequality on $\sum_m (T_m - T_{m-1}) \le T$.
\end{itemize}
The formal proof is found in Appendix \ref{subsec:regret_abrupt}.
\end{sketch}
Theorem \ref{thm:regret_abrupt} implies that the \AADWIN{}-bandit learns the environment when the changes are infrequent (i.e., $M = o(T)$) %
and the changes are detectable. 
This bound matches many existing active algorithms, such as those in \cite{DBLP:conf/aistats/0013WK019,besson2020efficient}. 

We next analyze the performance of \AADWIN-bandit with a gradually changing environment. We employ a technique similar to that in Section \ref{subsec:adwin_gradual} for single-stream ADWIN. We first describe the global changes in a gradually changing environment and then state a regret bound.

\begin{assumption}{\rm (Globally gradual changes)}\label{as:detectable}
A set of $K$ streams has globally gradual changes if a constant $\Cdetect \in (0,1]$ exists such that 
\begin{equation}
|\mu_{i,t} - \mu_{i,s}| 
\ge 
\Cdetect |\mu_{j,t} - \mu_{j,s}| 
\end{equation}
holds for any two arms $i,j \in [K]$ and any two time steps $t,s \in [T]$.
\end{assumption}
Intuitively, Assumption \ref{as:detectable} states that all arms are drifting in a coordinated manner. The drift on the mean of an arm is proportional to the drift on the means of other arms.

\begin{thm}{\rm (Regret bound of the \AADWIN{}-bandit under gradual changes)}\label{thm:regret_gradual}  %
Let the base-bandit algorithm have a logarithmic drift-tolerant regret (Definition \ref{def:driftrobust}) and have monitoring consistency (Definition \ref{def:monitoring}). Let the environment be gradual with change speed $b$ and $d$ be such that $b = T^{-d}$. Let the changes be global (Assumption \ref{as:detectable}).
Then the regret of Algorithm \ref{alg:AADWINbandit} with $\delta = 1/T^3, N=\tilde{\Theta}((b\wasD )^{-2/3})$ is bounded as:
\begin{equation}
\Ex[\Reg(T)] 
 = \tilO\left(\sqrt{K} T^{1-d/3}\right).
\end{equation}
\end{thm}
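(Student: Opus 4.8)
The plan is to condition on a high-probability event, cut the horizon at the (random) reset times of \AADWIN-bandit, charge the regret of each resulting segment to the base algorithm through the nonintervening property, and sum. Concretely, fix the good event on which the uniform Hoeffding bound of Proposition~\ref{prop:allwindowsbound} holds for all $K$ per-arm detectors and on which the strong DT bound of Definition~\ref{def:driftrobust_at} holds simultaneously for all $S\in[T]$; by a union bound its complement has probability $O(K/T)$ and contributes at most $T\cdot O(K/T)=O(K)$ to $\Ex[\Reg(T)]$, which is negligible against the target. Let the reset times partition $[T]$ into $P$ segments of (random) lengths $S_1,\dots,S_P$ with $\sum_k S_k=T$. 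Since a reset re-initializes the base bandit and no reset occurs strictly inside a segment, Remark~\ref{rem:nonintervening} shows that the regret incurred in segment $k$ equals that of a fresh base run of length $S_k$, so the strong DT bound gives $\Regbase(S_k)\le\Cdr(\sqrt{LKS_k}+L\eps(S_k))$ with $\eps(S_k)\le bS_k$ because every stream is gradual with speed $b$.

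The drift term sums trivially: $\sum_k L\eps(S_k)\le Lb\sum_k S_k=LbT=LT^{1-d}$, and since $L\le K$ and $d>0$ we have $LT^{1-d}=\sqrt{L/K}\,T^{-2d/3}\cdot\sqrt{LK}\,T^{1-d/3}\le\sqrt{LK}\,T^{1-d/3}$, so this piece is already within the claimed rate. It remains to bound the statistical term $\sqrt{LK}\sum_k\sqrt{S_k}$, for which it suffices to control the number of segments $P$ via a lower bound on the segment lengths.

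The heart of the argument is a bound $S_k=\tilde{\Omega}(b^{-2/3})$, which caps $P\le\tilO(Tb^{2/3})=\tilO(T^{1-2d/3})$. This is exactly the single-stream effect isolated by Lemma~\ref{lem:errorshrink}: a stream with drift at most $b$ does not trigger a shrink before its window reaches size $\Theta(b^{-2/3})$, because the true cross-split gap grows only like $b\,|W|$ while the threshold $\epscut$ decays like $1/\sqrt{|W|}$ (Lemma~\ref{lem:gradualbound}). The obstacle, and the place where the two extra hypotheses enter, is transferring this to the bandit, where each detector sees only the sparse samples of its own arm. Monitoring consistency (Definition~\ref{def:monitoring}) furnishes a reference arm $i^\ast$ that is played at least once per $D$ steps, so over a window spanning $n$ rounds it has at least $n/D$ samples and behaves like a single ADWIN stream of effective drift $O(bD)$; Lemma~\ref{lem:errorshrink} then forbids $i^\ast$'s detector from firing before $n=\tilde{\Omega}(b^{-2/3})$. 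The globally-gradual Assumption~\ref{as:detectable} handles the remaining arms: from $|\mu_{i,t}-\mu_{i,s}|\le\Cdetect^{-1}|\mu_{i^\ast,t}-\mu_{i^\ast,s}|$ no arm accumulates a detectable gap appreciably earlier than $i^\ast$ does, so no detector fires prematurely and $S_k=\tilde{\Omega}(b^{-2/3})$ holds for every segment on the good event. I expect this transfer to be the main difficulty: the matched choice $\delta=1/T^3$ makes the detection threshold coincide with the confidence radius, so the lower bound cannot be read off a naive good-event comparison and must instead rest on the refined probabilistic shrink analysis behind Lemma~\ref{lem:errorshrink}, now carried out under arm-dependent, irregular sampling.

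Finally, with $P\le\tilO(Tb^{2/3})$ the Cauchy--Schwarz inequality gives $\sum_k\sqrt{S_k}\le\sqrt{P\sum_kS_k}=\sqrt{PT}\le\tilO(T^{1-d/3})$, whence the statistical term is $\sqrt{LK}\sum_k\sqrt{S_k}=\tilO(\sqrt{LK}\,T^{1-d/3})$. Adding the dominated drift term and the negligible bad-event contribution yields $\Ex[\Reg(T)]=\tilO(\sqrt{LK}\,T^{1-d/3})$, as claimed.
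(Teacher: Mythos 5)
Your proposal is correct, and its skeleton coincides with the paper's own proof (Appendix \ref{subsec:regret_gradual}): partition $[T]$ at the random reset times, apply the nonintervening property together with the anytime strong DT bound to each segment's fresh run of the base algorithm (with a union bound over the at most $T$ re-initializations, which is exactly why the strong, anytime form of Definition \ref{def:driftrobust_at} is required), bound the number of segments by $\tilO(T^{1-2d/3})$ via a bandit analogue of Lemma \ref{lem:errorshrink} (this is the paper's Lemma \ref{lem:errorshrink_bandit}), and finish with Cauchy--Schwarz. Where you genuinely diverge is the drift term and the role of the two extra hypotheses. You bound the per-segment drift directly by $\eps(S_k)\le bS_k$, so the total drift contribution is $LbT=LT^{1-d}\le\sqrt{LK}\,T^{1-d/3}$; the paper instead upper-bounds the per-segment drift by the sum over $t$ of the within-window drift $\max_{i,\,s_1,s_2\in W(t)}|\mu_{i,s_1}-\mu_{i,s_2}|$ and controls that via Lemma \ref{lem:gradualbound_bandit} --- which is precisely where monitoring consistency (the factor $D$) and Assumption \ref{as:detectable} (the constant $\Cdetect$) enter the paper's argument --- obtaining the looser contribution $\tilO(LD^{1/3}T^{1-d/3})$. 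Your direct bound is simpler and tighter, so on this step you get a cleaner proof at no cost. Conversely, you place monitoring consistency and the global-change assumption inside the segment-length lower bound, but they are not needed there: Lemma \ref{lem:errorshrink_bandit} applies the threshold-minus-noise margin, of order $(\sqrt{3}-\sqrt{2+d})\sqrt{\log T}\left(|W_{j,1}|^{-1/2}+|W_{j,2}|^{-1/2}\right)$, arm by arm, using only that every stream drifts at speed at most $b$ (so the true cross-split gap over $n$ rounds is at most $2bn$); sparse sampling of an arm inflates its detection threshold and its estimation noise in tandem, so it only makes premature firing harder, and no reference arm $i^{\ast}$ is required. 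The net effect is that your proof, once the segment-length lemma is carried out rigorously, invokes neither Assumption \ref{as:detectable} nor the value of $D$ in any essential way --- an observation worth making explicit, since it suggests these hypotheses are artifacts of the paper's particular treatment of the drift term rather than necessities for this theorem.
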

\begin{sketch}%
The proof sketch is as follows.
\begin{itemize}
    \item Using Lemma \ref{lem:errorshrink_bandit}, the current window grows until $|W| = O(b^{-2/3})$ with a high probability, which implies that the number of resets (= detection times) $\Mbreak$ is at most $\tilde{O}(T/b^{2/3}) = \tilde{O}(T^{1-2d/3})$.
    \item With a high probability, $|\mu_t - \mu_W| = \tilde{O}(b \wasD  N) + \tilde{O}\left( \sqrt{\log{T}/N} \right)$ always holds (Lemma \ref{lem:gradualbound_bandit}). This term is minimized when $N=\tilde{\Theta}((b\wasD )^{-2/3})$ and $|\mu_t - \mu_W| = \tilde{O}((bK)^{1/3})$. 
    \item Similarly to abrupt case, the regret between the $m$-th and $(m+1)$-th resets is bounded in expectation as $\tilde{O}(\sqrt{K(\Td{m+1}-\Td{m})})$ plus the drift term of $\sum_t \tilde{O}((bK)^{1/3})$.
    \item By using the Cauchy-Schwarz inequality on $\sum_{m=1}^{\Mbreak} \Td{m+1}-\Td{m} = T$ and $\Mbreak = \tilde{O}(T^{1-2d/3})$, we have $\sum_{m=1}^{\Mbreak} \tilde{O}(\sqrt{K(\Td{m+1}-\Td{m})}) = \tilde{O}(\sqrt{\Mbreak KT}) = \tilde{O}(\sqrt{K}T^{1-d/3})$. Moreover, $\sum_t \tilde{O}((bK)^{1/3}) = T \tilde{O}((bK)^{1/3}) = \tilde{O}(K^{1/3}T^{1-d/3})$.

\end{itemize}
The formal proof is in Appendix \ref{subsec:regret_gradual}.
\end{sketch}
Theorem \ref{thm:regret_gradual} states that \AADWIN{}-bandit learns the environment when the change is sufficiently slow (i.e., $b = o(1)$). %
Unlike most nonstationary bandit algorithms, our algorithm design does not require prior knowledge of $b$.
\begin{myrem}{\rm (Optimality of the regret bound)
The rate matches the $\tilTheta(V^{1/3} T^{2/3})$ lower bound of  \cite{besbes2014optimal} up to a logarithmic order. Under a gradually changing environment, the total variation is $V:= \sum_t |\mu_{i,t} - \mu_{i,t+1}| = bT = O(T^{1-d})$; thus, $V^{1/3}T^{2/3} = O(T^{1-d/3})$. A recent work by \cite{krishnamurthy2021} shows the optimality of this rate for gradually changing streams.\footnote{The result from \cite{DBLP:journals/jair/TrovoRG20} has a better rate of $\tilO(T^{1-d})$. However, they require an additional assumption (Assumption 2, therein) that essentially states that the two arms are not too close to distinguish for a long time.} 
Note that, unlike \cite{besbes2014optimal,krishnamurthy2021}, our guarantee in the nonstationary setting is limited to the case of global changes.
}\end{myrem}

In summary, \AADWIN{}-bandit adapts to both the abruptly and gradually changing environments 
\myred{under mild knowledge on the speed of changes}
if the base-bandit satisfies has a logarithmic drift-tolerant regret bound.  We demonstrate that TS has this bound and consider that it is not very difficult to derive the same bound for KL-UCB.

\section{Experiments}
\label{sec:experiments}

This section reports the empirical performance of our proposed algorithms in simulations. We release the source code to reproduce our experiments on GitHub\footnote{\url{https://github.com/edouardfouche/G-NS-MAB}}.

\subsection{Environments}

The rewards from arm $i$ are drawn from a Bernouilli distribution with parameter $\mu_i$ that we determine for each arm. We define the following synthetic environments: 

\begin{itemize}
    \item \textbf{Stationary}: We define a stationary environment (i.e., with no change) with $100$ arms where the mean $\mu_i = (101-i)/100$ never changes over time, i.e., the rewards of arms are evenly distributed between $0$ and $1$.
    \item \textbf{Gradual}: We define a gradually changing environment with $100$ arms with $\mu_{i,t} = \frac{(T-t+1)}{T}\mu_{i,1} + \frac{t-1}{T}(1-\mu_{i,1})$, where $\mu_{i,1}=(101-i)/100$. In this environment, the mean of each arm is the same as the stationary environment at time $t=1$, but evolves gradually toward $1-\mu_{i,1}$ as $t$ increases up to $T$.
    \item \textbf{Abrupt}: We define an abruptly changing environment with $100$ arms and $\mu_{i,1}=(101-i)/100$ at $t=1$. Unlike the stationary environment, $\mu_{i,t} = 1 - \mu_{i,1}$ between round $t'=T/3$ and round $t''= 2T/3$. In other words, the rewards of every arms change abruptly two times at $t=T/3,2T/3$.  %
    \item \textbf{Abrupt local}: We define an environment with $100$ arms in which only a portion of the arms change abruptly, i.e., the global change assumption does not hold. As in the other environment, $\mu_{i,1}=(101-i)/100$ at $t=1$, but for $i \in [1,10]$, $\mu_{i,t} = 0.5$ between round $t'=T/3$ and round $t''= 2T/3$. In other words, only the top-10 arms change abruptly two times, and the other arms are stationary. 
\end{itemize}

The stationary, abrupt, and gradual settings are in line with the assumptions from our algorithms (\ADWIN{}-TS, \AADWIN{}-TS, \AADWIN{}-KL-UCB) as the changes are global. The abrupt local setting violates these assumptions because only a subset of the arms change.

\begin{figure}[t]
\begin{center}
    \includegraphics[width=1.\linewidth]{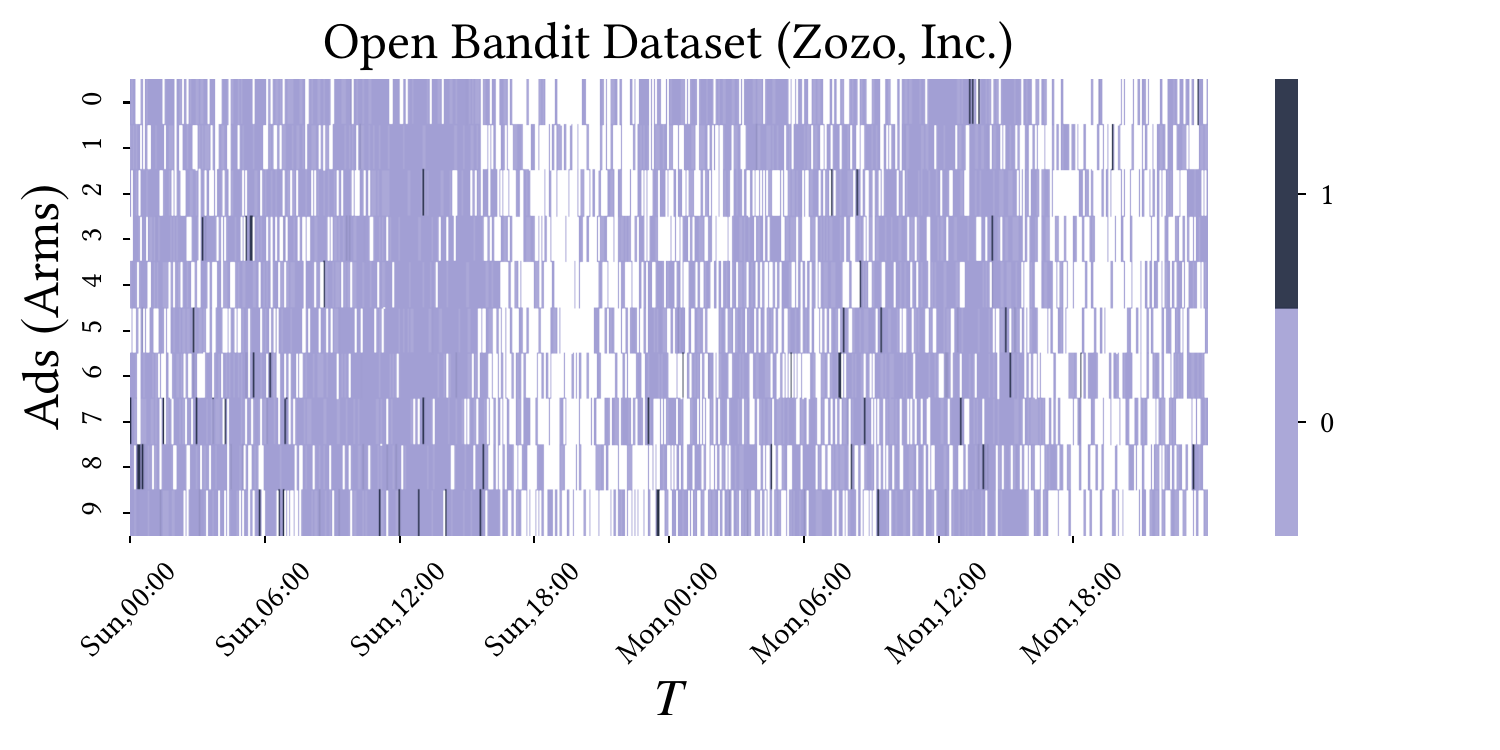}
\end{center}
	\caption{Rewards in the Zozo dataset. We chose ten ads  (= arms, $y$-axis) among $80$ ads. Dark blue and light blue indicate the reward $1$ and $0$, respectively. White indicates that the corresponding ad is not shown at that time step (no feedback). The ratio of reward $1$ to reward $0$ is between $0.5\%$ to $3\%$. We only show the first two days, but the dataset consists of one week of data.}
	\label{fig:zozo_heatmap}
\end{figure}

In addition, we consider the following real-world scenarios: 

\begin{itemize} %
    
    \item \textbf{Bioliq}: This dataset was released by \cite{DBLP:conf/kdd/FoucheKB19}. The rewards are generated from a stream of measurements between 20 sensors in power plant for a duration of 1 week. The authors considered the task of monitoring high-correlation between those sensors and wanted to use bandit algorithms for efficient monitoring. Each pair of sensors is seen as an arm (i.e., there are $20*19/2 = 190$ arms) and the reward is $1$ in case the correlation is greater than some threshold over the last $1000$ measurements, otherwise $0$. Figure \ref{fig:bioliq_heatmap} is the reward distribution for this environment, as we can see, arms tend to change periodically together. 
    
    \item \textbf{Zozo} \citep{saito2020large} is a real-world environment where the rewards are generated from an ad recommender on an e-commerce website. We use the data generated by the uniform recommender from the duration of a week and adopted an unbiased offline evaluator \citep{DBLP:conf/wsdm/LiCLW11} for our experiment\footnote{See also \url{https://github.com/st-tech/zr-obp}}. There are $80$ different ads. Due to the sparseness of the rewards, we picked $10$ arms. Since only a few of such ads are presented at each round to each user, we concatenated together all the ads presented within a second and assigned a reward of $1$ to ads on which at least one user clicked. We assign a reward of $0$ to ads that were presented, but no user clicked on them. We skip the round whenever the selected was not presented at all. Figure \ref{fig:zozo_heatmap} shows the reward distribution of the dataset.

\end{itemize}
All the results are averaged over $100$ runs. Note that these datasets match with our motivational examples (Example \ref{expl:ad} and \ref{expl:bioliq}). 

\subsection{Bandit algorithms}
\label{subsec:exp_setting}

We compared the following bandit algorithms:  
\begin{itemize}
    \item \textbf{ADS-TS} is the \ADWIN{}-bandit algorithm (Algorithm \ref{alg:ADWINbandit}) with TS (Algorithm \ref{alg:mpts}) as a base-bandit algorithm.

    \item \textbf{ADR-TS} and \textbf{ADR-KL-UCB} are the \AADWIN{}-bandit algorithm (Algorithm \ref{alg:AADWINbandit}) with TS (Algorithm \ref{alg:mpts}) and KL-UCB (Algorithm \ref{alg:mpklucb}) as a base-bandit algorithm, respectively.
    \item Passive algorithms: \textbf{RExp3} \citep{besbes2014optimal} is an adversarial bandit algorithm. Discounted UCB (\textbf{D-UCB}) \citep{Garivier2008} and Sliding Window TS (\textbf{SW-TS}) \citep{DBLP:journals/jair/TrovoRG20}, \textbf{SW-UCB\#} \citep{DBLP:conf/amcc/WeiS18} (comes in two variants: for Abrupt (A) and Gradual (G) changes) are bandit algorithms with finite memory.
    \item Active algorithms: \textbf{GLR-klUCB} \citep{besson2020efficient} is an likelihood-ratio based change detector. It comes in two variants: for local and global changes, and we show the results of the latter.
    \textbf{M-UCB} \citep{DBLP:conf/aistats/0013WK019} is a bandit algorithm using another change detector as ours. These algorithms involve forced exploration (uniform exploration of size $O(\sqrt{T})$) unlike our algorithms. \textbf{UCBL-CPD} and \textbf{ImpCPD} \citep{Mukherjee2019} are  algorithms for global abrupt changes that are also free from forced exploration.
\end{itemize}
The original implementation of ADWIN requires $O(|W|K)$ times at each round, which can be large in the case of stationary environments.
\cite{bifet2007learning} introduced a more computationally efficient version of ADWIN, named ADWIN2, which only checks a logarithmic number of such splits. We adopt ADWIN2 in our simulations.

\begin{figure}
\begin{center}
	\includegraphics[width=\linewidth]{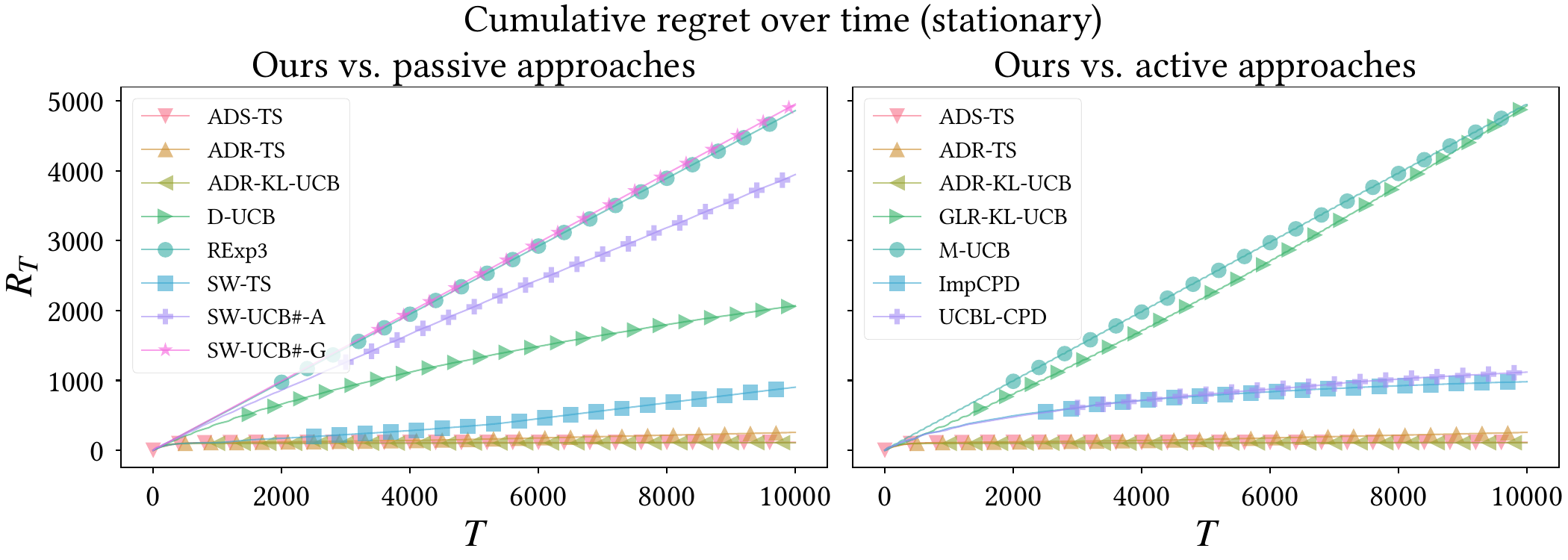}
	\includegraphics[width=\linewidth]{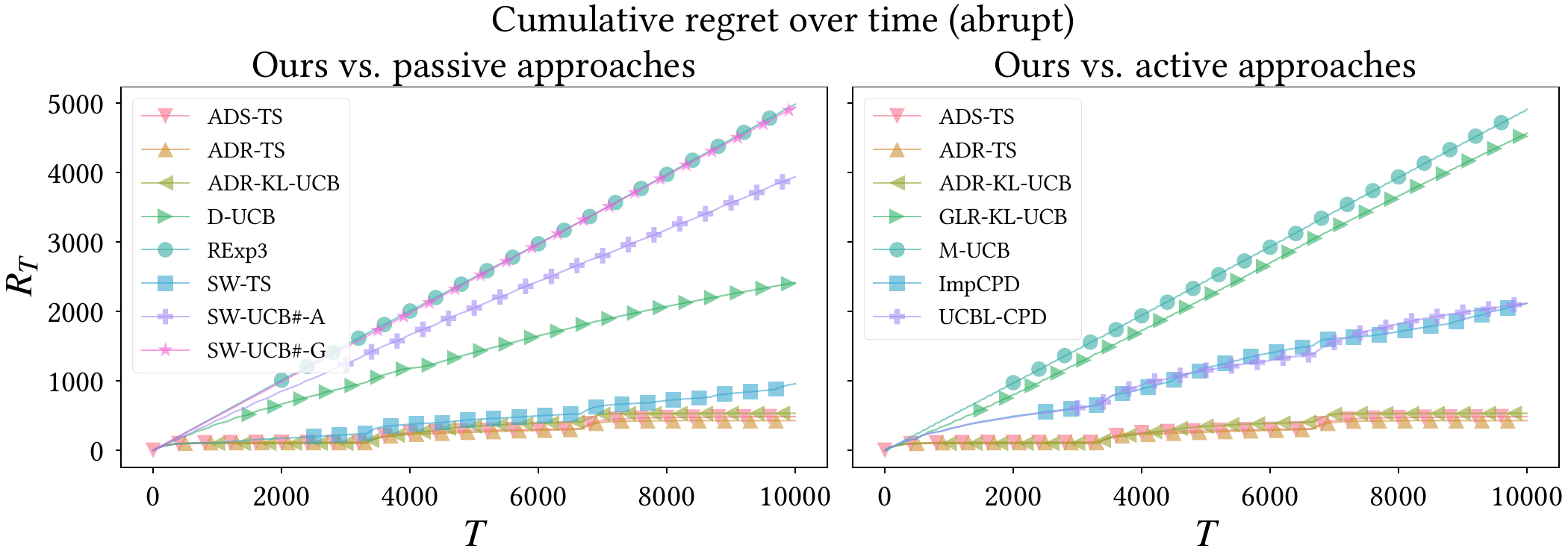}
    \includegraphics[width=\linewidth]{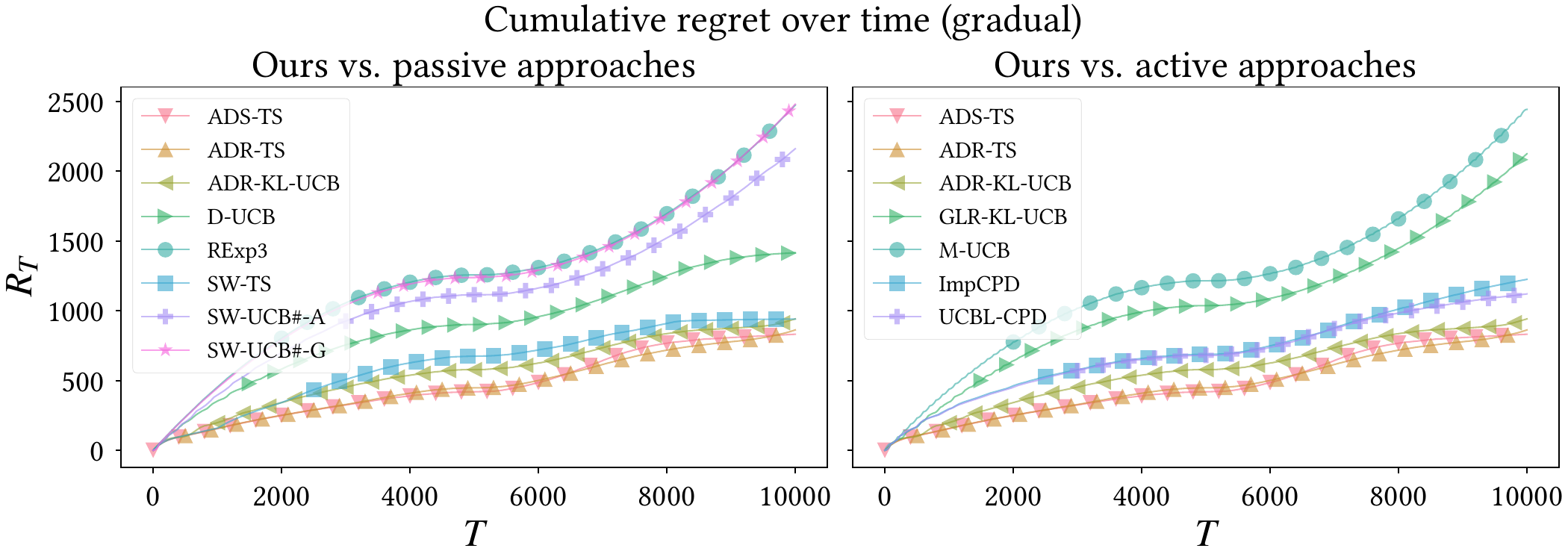}
	\caption{Regret of algorithms in stationary, abrupt, and gradual  environments. Smaller regret ($R_T$: y-axis) indicates better performance.} 
	\label{fig:synth}
\end{center}
\end{figure}

\begin{figure}
\begin{center}
	\includegraphics[width=0.95\linewidth]{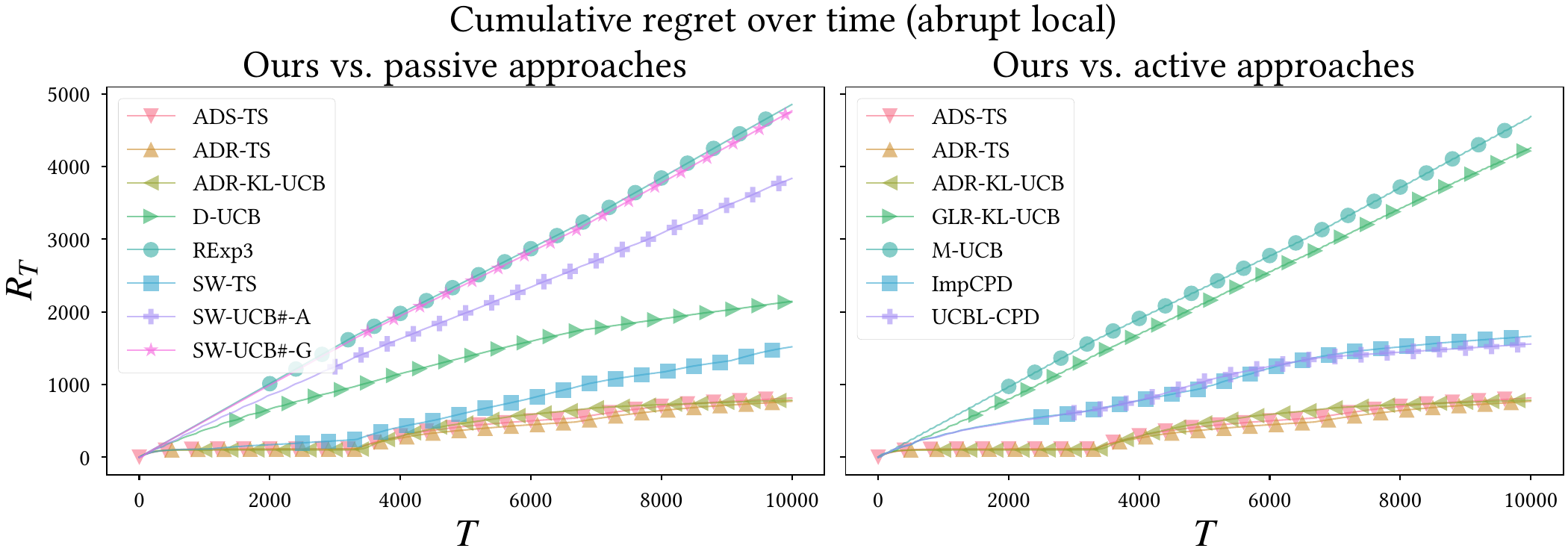}
	\caption{Regret of algorithms in abrupt local environments. Smaller regret ($R_T$: y-axis) indicates better performance. \AADWIN{}-TS has linear regret after two local changes. However, it still outperforms algorithms that require uniform exploration.}
	\label{fig:abruptlocal}
\end{center}
\end{figure}

\begin{figure}
\begin{center}
	\includegraphics[width=\linewidth]{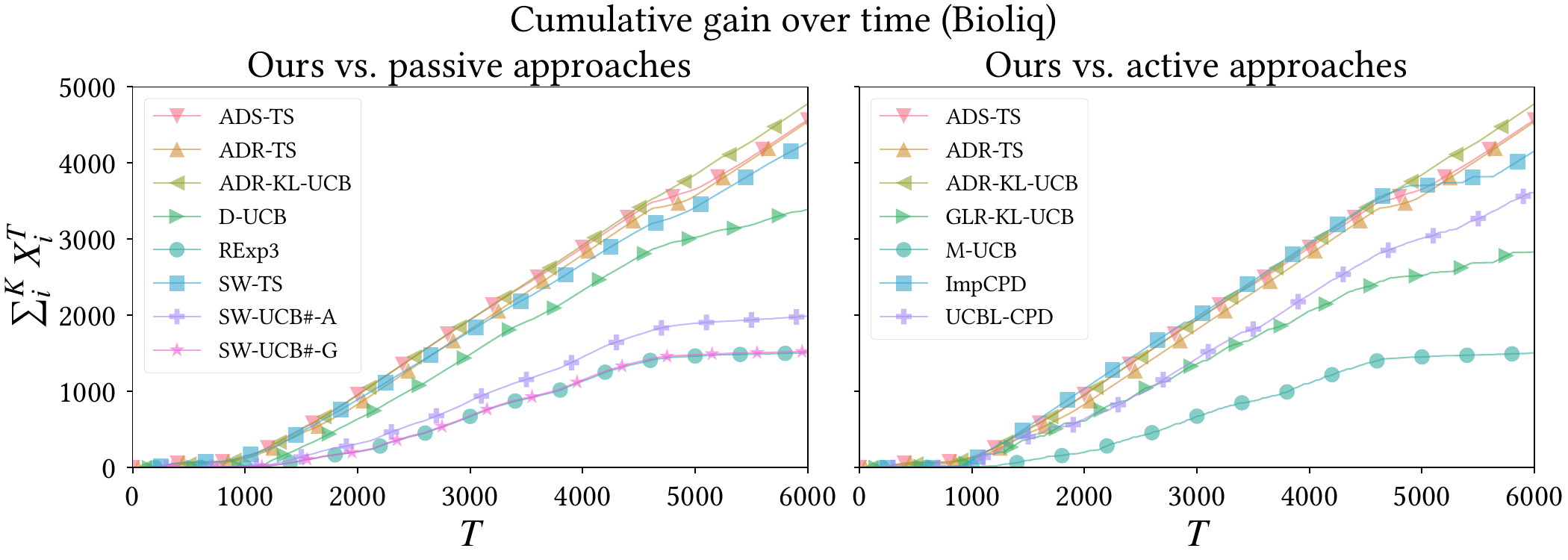}
    \includegraphics[width=\linewidth]{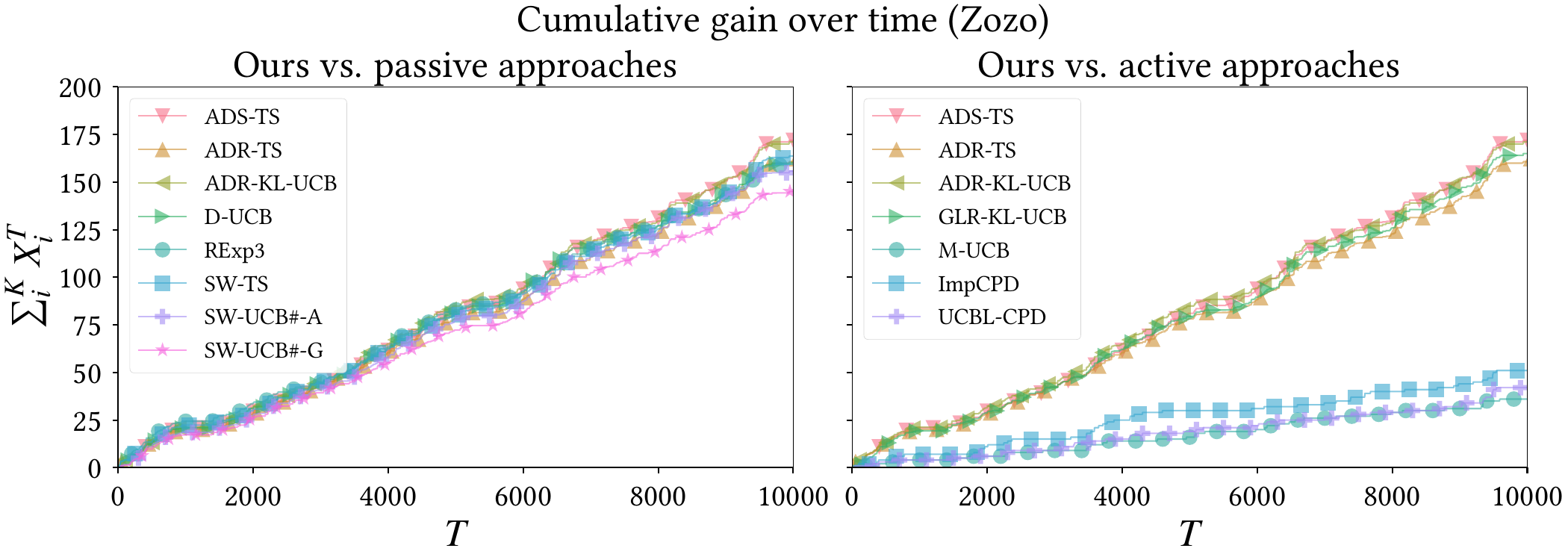}
	\caption{Performance of algorithms based on two real-world datasets (Bioliq and Zozo). Unlike synthetic environments, we do not have ground truth of $\mu_{i,t}$ and thus we report cumulative reward. Larger cumulative reward (y-axis) indicates better performance. M-UCB tends to have $\gamma \ge 1$ in our setting, which results in a uniform exploration.}
	\label{fig:bioliqzozo}
\end{center}
\end{figure}

\subsection{Experimental results}
\label{subsec:exp_results}

First, Figure \ref{fig:synth} compares \AADWIN{}-bandit and existing algorithms in the stationary, abrupt, and gradual environments. In the stationary environment, \AADWIN{}-TS has very low regret, which is consistent with the fact that its regret is logarithmic. In the two nonstationary environments with global changes, \AADWIN{}-TS outperforms the other algorithms. 
Second, Figure \ref{fig:abruptlocal} compares these algorithms in the abrupt local environment. Although the \AADWIN{}-TS has linear regret as it sometimes do not detect these two local changes, it still outperforms the other algorithms. 
Finally, Figure \ref{fig:bioliqzozo} compares these algorithms against the Bioliq and Zozo datasets. The proposed algorithm consistently outperforms existing nonstationary bandits algorithms. 

We provide in Section \ref{sec:additionalexperiment} additional simulations on the effect of resetting (compared with the shrinking of the original ADWIN), the sensitivity \AADWIN{}-TS to the value of parameter $\delta$, and results with $L>1$. We find that \AADWIN{}-TS is not very sensitive to the choice of parameter $\delta$, as long as it is moderately small.

\section{Conclusions}
\label{sec:conclusions}

We have expanded stationary bandit algorithms to nonstationary settings without requiring forced exploration. 
To this aim, we first analyzed the theoretical property of adaptive windows in a single-stream setting (Section \ref{sec:adwin}).
After that, we combined bandit algorithms (Section \ref{sec:MP-MAP}) with adaptive windowing by introducing \AADWIN{}-bandit.
Unlike existing algorithms, \AADWIN{}-bandit does not act on the base-bandit algorithm unless change points are detected, and thus, it does not compromise the performance in a stationary environment. 
We have demonstrated its ability to detect global changes in Section \ref{sec:NS-MP-MAP-analysis} and tested it in simulated and real-world settings through experiments in Section \ref{sec:experiments}.

\newpage

\appendix

\clearpage

\bibliography{sample-base}

\clearpage

\appendix

\section{Hyperparameters in Experiments}

The hyperparameters of the algorithms are shown in Table \ref{tbl:hyperparams}. %
\begin{table}[]
\small
\caption{
List of tested hyperparameters of the algorithms. Bold letters indicate the ones reported in this paper. We set the confidence level of UCBL-CPD and ImpCPD to be the same as ADR-TS.} %
\label{tbl:hyperparams}
\begin{center}
\renewcommand{\arraystretch}{1.4}
\begin{tabular}{l|l}
Algorithm(s) & Hyperparameters  \\ \hline%
\ADWIN{}-TS and \AADWIN{}-TS, \AADWIN{}-KL-UCB & $\delta=10^{-1}, 10^{-2}, \bm{10^{-3}}, 10^{-4}, 10^{-5}, 10^{-6}, 10^{-8}, 10^{-12}, 10^{-15}$  \\   
RExp3 & $\Delta_T = 100, 500, \bm{1000}, 5000$  \\  
SW-TS & $W = 100, 500, \bm{1000}, 5000$  \\  
D-UCB & $\gamma = 0.7, 0.8, \bm{0.9}, 0.99, 0.999$  \\  
SW-UCB\#-A & $\nu = \bm{0.1}, 0.2$ and $\lambda = 12.3$  \\  
SW-UCB\#-G & $\kappa = \bm{0.1}, 0.2$ and $\lambda = 4.3$   \\  
GLR-klUCB & $\alpha=\bm{\sqrt{k A \log(T)/T}}$ and $\delta = \bm{1/\sqrt{T}}$  \\ 
M-UCB & $w = \bm{1000}, 5000$ and $M = \textbf{10}, 100$\\  %
UCBL-CPD & $\bm{\delta = 10^{-3}}$\\ 
ImpCPD & $\bm{\gamma = 0.05, \psi\eps_m^2=10^{3}}$
\end{tabular}
\end{center}
\end{table}

\section{Additional Experiments}
\label{sec:additionalexperiment}

This section reports the results of additional experiments.

\begin{figure}
\begin{center}
	\includegraphics[width=\linewidth]{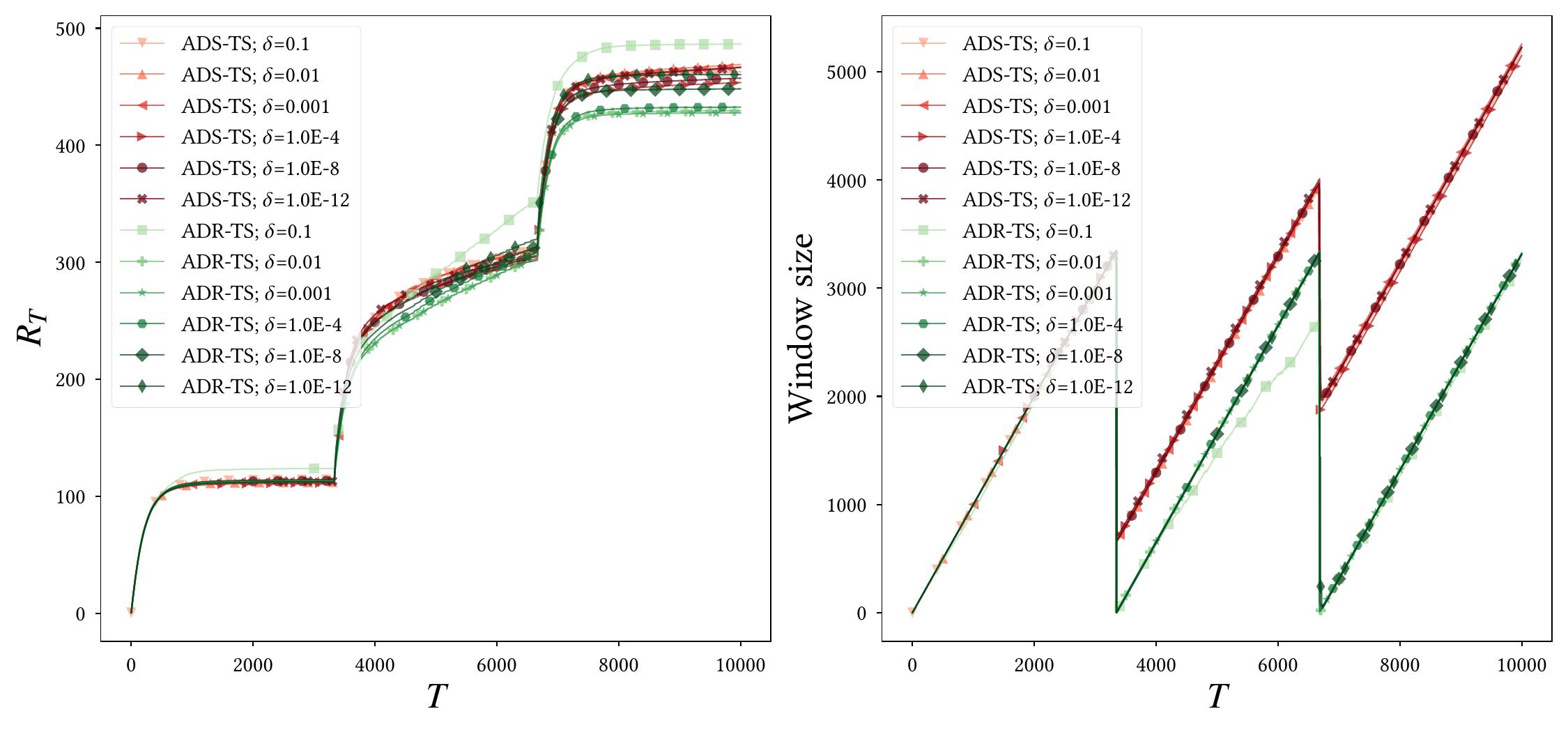}
	\includegraphics[width=\linewidth]{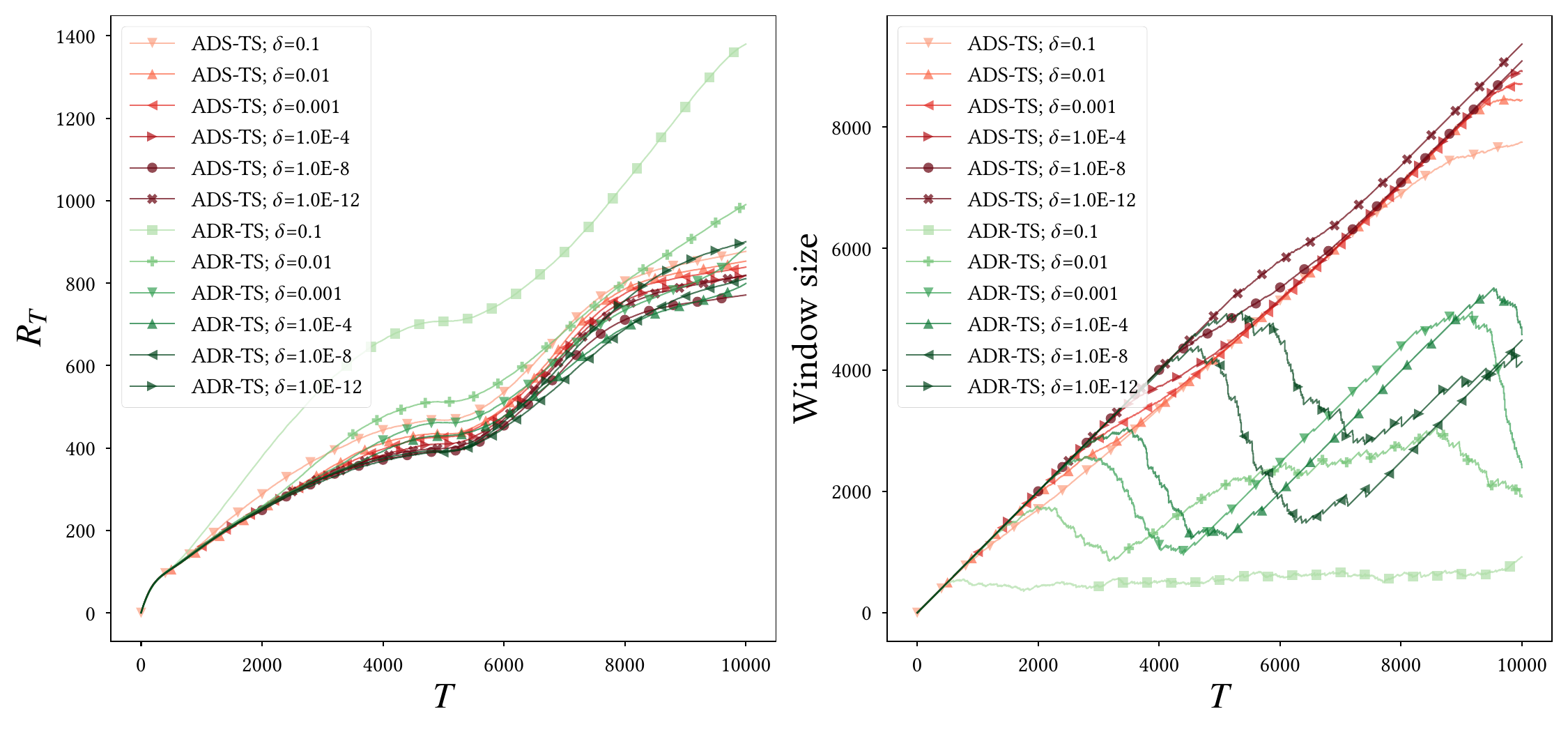}
	\caption{Comparison of \AADWIN{}-TS and \ADWIN{}-TS in synthetic environments. Left: Smaller regret  ($R_T$: y-axis) indicates better performance. Right: Size of the window over time. The performance gap between \AADWIN{}-TS and \ADWIN{}-TS is very small. The effect of hyperparamater $\delta$ in change point detection is not very large, except for a very large or very small value of $\delta$.}
	\label{fig:abrupt_adrads}
\end{center}
\end{figure}

\clearpage

\section{Lemmas}\label{sec:lemmas}

This section describes the lemmas that are used in the proofs of this paper.

The Hoeffding inequality, which is one of the most well-known concentration inequality, provides a high-probability bound of the sum of bounded independent random variables. 
\begin{lem}[Azuma-Hoeffding inequality]\label{lem:hoeffding}
Let $x_1,x_2,\dots,x_n$ be martingale random variables in $[0,1]$ with their conditional mean $\mu_m = \Ex[x_m|x_1,x_2,\dots,x_{m-1}]$. Let $\bar{x} = (1/n)\sum_{t=1}^n x_t$ and $\bar{\mu} = (1/n)\sum_{t=1}^n \mu_t$. Then, 
\begin{align}
\Pr\left[\bar{x} - \bar{\mu} \ge \sqrt{\frac{\log(1/\delta)}{2 n}} \right] &\le \delta, \\
\Pr\left[\bar{x} - \bar{\mu} \le -\sqrt{\frac{\log(1/\delta)}{2 n}} \right] &\le \delta.
\end{align}
Moreover, taking a union bound over the two inequalities yields
\begin{equation}
\Pr\left[|\bar{x} - \bar{\mu}| \ge \sqrt{\frac{\log(1/\delta)}{2 n}} \right] \le 2\delta.
\end{equation}
\end{lem}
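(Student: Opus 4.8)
The plan is to apply the standard Chernoff--Cram\'er exponential-moment method, adapted to the martingale setting via the tower rule. First I would pass to the martingale difference sequence $d_m = x_m - \mu_m$, which by construction satisfies $\Ex[d_m \mid x_1,\dots,x_{m-1}] = 0$ and, because $x_m \in [0,1]$, is confined conditionally to the interval $[-\mu_m,\, 1-\mu_m]$ of length exactly $1$. It then suffices to control $\sum_{m=1}^n d_m = n(\bar{x} - \bar{\mu})$ and translate tail bounds on this sum into the stated bounds on $\bar{x}-\bar{\mu}$.

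The core estimate is Hoeffding's lemma: for a conditionally mean-zero variable supported on an interval of length $\ell$, the conditional moment generating function obeys $\Ex[e^{s d_m} \mid x_1,\dots,x_{m-1}] \le e^{s^2 \ell^2 / 8} = e^{s^2/8}$ for every $s>0$, using $\ell=1$ here. I would prove this lemma by convexity of $y \mapsto e^{sy}$ together with a second-order Taylor estimate of the associated cumulant generating function; this is the one genuinely computational step, and it is insensitive to the unknown value of $\mu_m$ since only the interval length enters.

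Next I would peel off the terms one at a time. Conditioning on the first $n-1$ variables, the tower rule gives $\Ex[e^{s\sum_{m\le n} d_m}] = \Ex\big[e^{s\sum_{m< n} d_m}\,\Ex[e^{s d_n}\mid x_1,\dots,x_{n-1}]\big] \le e^{s^2/8}\,\Ex[e^{s\sum_{m<n} d_m}]$, and iterating yields $\Ex[e^{s\sum_m d_m}] \le e^{n s^2/8}$. Markov's inequality applied to $e^{s\sum_m d_m}$ then gives $\Pr[\sum_m d_m \ge n\eps] \le e^{-s n \eps + n s^2/8}$ for all $s>0$; optimizing over $s$ (the minimizer is $s = 4\eps$) produces the sub-Gaussian tail $e^{-2n\eps^2}$. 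Setting $\eps = \sqrt{\log(1/\delta)/(2n)}$ makes the right-hand side equal to $\delta$, which is the first displayed inequality. The second inequality follows by running the identical argument with $d_m$ replaced by $-d_m$ (again conditionally mean zero and supported on an interval of length $1$), and the third is a union bound over the two one-sided events.

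The only real obstacle is the martingale bookkeeping rather than any hard analysis: one must check that the conditional range of $d_m$ is genuinely an interval of fixed length $1$ independent of the unknown $\mu_m$, so that Hoeffding's lemma applies with a uniform constant, and that the conditioning is carried out in the correct order so that each factored moment generating function is bounded by the constant $e^{s^2/8}$ \emph{before} the outer expectation is taken. Once these points are secured, the remaining steps are routine calculus.
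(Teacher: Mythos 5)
Your proof is correct: it is the canonical Chernoff--Cram\'er argument for the Azuma--Hoeffding inequality (Hoeffding's lemma for the conditional moment generating function, the tower rule to peel off the martingale differences, Markov's inequality optimized at $s=4\eps$, and a union bound for the two-sided statement), and you correctly handle the one subtle point that $\mu_m$ is random but the conditional support of $d_m = x_m - \mu_m$ is always an interval of length exactly $1$. The paper itself states this lemma without proof, invoking it as the classical Azuma--Hoeffding inequality, so your argument simply supplies the standard derivation of the result the paper takes as known; there is no divergence of approach to report.
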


\section{Proofs of ADWIN}

We denote $\EA, \EB = \EA \cap \EB$ for two events $\EA, \EB$.

\subsection{Proof of Theorem \ref{thm:adwinabrupt}}
\label{subsec:proof_adwinabrupt}

The overall idea here is as follows. 
With sufficiently long time after a changepoint, we can expect that ADWIN shrinks the window. However, there might be some time steps left in the current window $W(t)$ even if a shrink occurs.
Still, we can show that 
\begin{align}\label{ineq:adwinabrupt_roughgoal}
|\hatmu_{W(t)} - \mu_t| 
&\le 
|\mu_{W(t)} - \mu_t| 
+
|\hatmu_{W(t)} - \mu_{W(t)}| \\
&\le 
\tilO\left(
\sqrt{\frac{1}{c(t)}}
\right)
+
\tilO\left(
\sqrt{\frac{1}{c(t)}}
\right) 
\end{align}
where $c(t)$ be the number of time steps after the changepoint.\footnote{A formal definition of $c(t)$ is given in Eq.~\eqref{ineq:ctdef}.} Events $\EC$ and $\ED$ in the following corresponds to the bounds of $|\mu_{W(t)} - \mu_t|$ and $|\hatmu_{W(t)} - \mu_{W(t)}|$, respectively.

\begin{proofof}{Proof of Theorem \ref{thm:adwinabrupt}}

By Eq.~\eqref{ineq:allwindowsbound}, event
\begin{equation} %
\EB = \bigcap_{W' \in \EW} \left\{ |\mu_{W'} - \hatmu_{W'}| \le \sqrt{\frac{\log (T^3)}{2 |{W'}|}}\right\}
\end{equation}
holds with probability at least $1-2/T$.

For each time step $t \in [T]$, let  
\begin{equation}\label{ineq:ctdef}
c(t) = t - \max_{s < t, s \in \ET_c} s.
\end{equation}
Namely, $c(t)$ is the number of the time steps
\myred{since} the latest changepoint.
Moreover, for $c(t) > 2$, let %
\begin{equation}
\Delta(c) = 
4\sqrt{ 
\frac{\log(T^3)}{c - 2}
}
\end{equation}
and $\Delta(c) = 1$ for $c \le 2$.
Let 
\begin{equation}\label{ineq:abruptdeltabound}
\EC = \bigcap_t \{ |\mu_{W(t)} - \mu_t| < \Delta(c(t)) \}.
\end{equation}
In the following we show $\EC$ holds under $\EB$. 
If $|W(t)|+1 \le c(t)$, then all the time steps in $W(t)$ \myred{are} after the last change point, which implies $\mu_{W(t)} = \mu_t$ and thus $\EC$. 
Otherwise, let 
\begin{align}
W_2 &= \{t-c(t)/2+2,t-c(t)/2+3,\dots,t\} \text{\ \ \ \ (i.e. the last $c(t)/2-1$ time steps)} \\
W_1 &= W(t-1) \setminus W_2
\end{align}
Clearly $|W_1|,|W_2| \ge c(t)/2-1$ and $\mu_{W_2} = \mu_t$.
ADWIN at time step $t-1$ shrinks the window until
\begin{equation}\label{ineq_adv_shrinktill}
|\hatmu_{W_1}| - |\hatmu_{W_2}|
\le
\sqrt{\frac{\log (T^3)}{2|W_1|}} + \sqrt{\frac{\log (T^3)}{2|W_2|}} 
\end{equation}
holds. 
Let $n_1, n_2 = |W_1|, |W_2|$. 
\begin{align}
\sqrt{\frac{\log (T^3)}{2|W_1|}} + \sqrt{\frac{\log (T^3)}{2|W_2|}} 
&\ge 
|\hatmu_{W_1}| - |\hatmu_{W_2}| \text{\ \ \ \ \ (by Eq.~\eqref{ineq_adv_shrinktill})} \\
&\ge 
|\mu_{W_1}| - |\mu_{W_2}| - \sqrt{\frac{\log (T^3)}{2 |W_1|}} - \sqrt{\frac{\log (T^3)}{2 |W_2|}} \text{\ \ \ \ \ (by $\EB$)}\\
&=
|\mu_{W_1}| - |\mu_t| - \sqrt{\frac{\log (T^3)}{2 |W_1|}} - \sqrt{\frac{\log (T^3)}{2 |W_2|}} \text{\ \ \ \ \ (by $|W_2| + 1\le c(t)$)} \label{ineq:ineq_adv_shrinktill_two}
\end{align}
and thus
\begin{align}
|\mu_{W(t)} - \mu_t| 
& \le |\mu_{W_1} - \mu_t| \\
&\le 2 \sqrt{\frac{\log (T^3)}{2|W_1|}} + 2\sqrt{\frac{\log (T^3)}{2|W_2|}} 
 \text{\ \ \ \ \ (by Eq.~\eqref{ineq:ineq_adv_shrinktill_two})} \\
&\le 4\sqrt{\frac{\log (T^3)}{c - 2}}. \text{\ \ \ \ \ (by $|W_1|,|W_2| \ge (c-2)/2$)} 
\end{align}
In summary, $\EB$ implies $\EC$.

Note that under event $\EB$, ADWIN never makes a false shrink (i.e., a shrink when $\mu_{W_1} = \mu_{W_2}$). 
This implies that between two changepoints a shrink that makes $|W(t)|+1 < c(t)$ occurs at most once, which leads to the fact that event
\begin{equation}\label{ineq:new_wlower}
\ED = \bigcap_{n \in \Natural} 
\left\{
\sum_t \Ind\{|W(t)| = n \cap |W(t)|+1 < c(t))\} \le 2M
\right\}.
\end{equation}

By using this, the total error is bounded as
\begin{align}
\lefteqn{
\Ex[\Err(T)]
}\\
&\le 
T \Pr[\EB^c]
+
\Ex[\Err(T)\Ind[\EB,\EC,\ED]] 
\text{\ \ \ \ \ ($\EB$ implies $\EC$ and $\ED$)}\\
&\le 
2
+
\Ex[\Err(T)\Ind[\EB,\EC,\ED]] \\
&=
2 +
\Ex\left[\sum_t |\mu_t - \hatmu_W| \Ind[\EB,\EC,\ED]\right]\\
&\le 
2 +
\Ex\left[\sum_t \left(|\mu_t - \mu_W| + |\hatmu_W - \mu_W|\right) \Ind[\EB,\EC,\ED]\right]\\
&\le 
2 +
\sum_t \Delta(c(t)) 
+ \Ex\left[\sum_t |\hatmu_W - \mu_W| \Ind[\EB,\EC,\ED]\right]
\text{\ \ \ \ (by $\EC$)}
\\
&\le 
2 +
\sum_t \Delta(c(t)) 
+ O(M) + \sum_t \sqrt{\frac{\log (T^3)}{2 c(t)}} \\
&\ \ \ \ \ + \Ex\left[ 
\sum_n \sum_t \Ind[|W(t)| = n, |W(t)| + 1 < c(t)] \sqrt{\frac{\log (T^3)}{2 n}} \Ind[\ED]
\right]
\text{\ \ \ \ (by $\EB$)} \\
&\le O(M) + 
\sum_t O\left( \sqrt{\frac{\log (T)}{c(t)}} \right)
+ \Ex\left[ 
\sum_n \sum_t \Ind[|W(t)| = n, |W(t)| + 1 < c(t)] \sqrt{\frac{\log (T^3)}{2 n}} \Ind[\ED]
\right].
\end{align}
Here 
\begin{align}
\sqrt{\frac{\log(T)}{c(t)}}
&\le
\sqrt{T} \times \sqrt{\sum_t \left(\frac{\log(T)}{c(t)}\right)^2} \text{\ \ \ (by Cauchy-Schwarz inequality)} \\
&= 
\tilO\left(\sqrt{M T}\right). \text{\ \ \ ($c(t) = n$ holds at most $M+1$ times for each $n$)}
\end{align}
Another application of the Cauchy-Schwarz inequality, combined with $\ED$, yields
\begin{equation}
\sum_n \sum_t \Ind[|W(t)| = n, |W(t)|\le c(t)] \sqrt{\frac{\log (T^3)}{2 n}} \Ind[\ED] 
=
\tilO\left(\sqrt{M T}\right),
\end{equation}
which completes the proof.
\end{proofof}

We next discuss the error bound under gradual drift. The following Lemmas \ref{lem:gradualbound} and \ref{lem:errorshrink} characterize the accuracy of estimator $\hatmu_W$ under gradual drift. These lemmas are used in the proof of Lemma Theorem \ref{thm:adwingradual}.

\subsection{Lemma \ref{lem:gradualbound}}

\begin{lem}\label{lem:gradualbound} %
Let the stream be gradual with its speed of the change $b$.
Let the parameter of ADWIN be $\delta = 1/T^3$.
Then, with probability at least $1-2/T$, 
\begin{equation}\label{ineq:gradualbound}
 |\mu_s - \mu_{s'}| \le 3 b N + \tilO\left(\sqrt{\frac{1}{N}}\right)
\end{equation}
for any two time steps $s,s'$ in the current window $W = W(t)$, from which it easily follows that 
\begin{equation}
 |\mu_t - \mu_W| \le 3 b N + \tilO\left(\sqrt{\frac{1}{N}}\right).
\end{equation}
\end{lem}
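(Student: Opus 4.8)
The plan is to condition on the uniform Hoeffding event and then exploit the fact that the current window---and, crucially, every window the algorithm held while this one was growing---survived the no-shrink test. First I would invoke Proposition \ref{prop:allwindowsbound} with $p=1$ to obtain the event
\[
\EB = \bigcap_{W' \in \EW}\left\{|\mu_{W'} - \hatmu_{W'}| \le \sqrt{\tfrac{\log(T^3)}{2|W'|}}\right\},
\]
which holds with probability at least $1-2/T$, and work under $\EB$ throughout. Since $t$ is not a detection time (Definition \ref{def:shrinktime}), the current window $W = W(t) = \{t_0,\dots,t_1\}$ passes every split, so $|\hatmu_{W_1} - \hatmu_{W_2}| < \epscut^\delta$ for each split $W = W_1 \cup W_2$; combining this with $\EB$ upgrades the empirical no-shrink inequality to the population bound
\[
|\mu_{W_1} - \mu_{W_2}| \le 2\sqrt{\tfrac{\log(T^3)}{2|W_1|}} + 2\sqrt{\tfrac{\log(T^3)}{2|W_2|}} = \tilO\!\left(\sqrt{1/|W_1|} + \sqrt{1/|W_2|}\right).
\]

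Next I would bound $|\mu_s - \mu_W|$ by comparing $\mu_s$ with a suffix block of size $N \le |W|$. The gradual assumption $|\mu_{r+1}-\mu_r|\le b$ gives, for any block $W'$ of $N$ consecutive steps containing $s$, the within-block bound $|\mu_s - \mu_{W'}| \le bN$; the averaged no-shrink bound then transports the block mean to the window mean, since $\mu_W$ is the convex combination of the two sides of the split that isolates a size-$N$ block at an end of $W$, so $|\mu_{W'} - \mu_W| = \tilO(\sqrt{1/N})$. This already settles the stated consequence: $\mu_t$ is within $b$ of the most recent window point $\mu_{t-1}$, which the last size-$N$ suffix block controls, giving $|\mu_t - \mu_W| \le bN + b + \tilO(\sqrt{1/N}) \le 3bN + \tilO(\sqrt{1/N})$. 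For a general interior $s$ I would use the dynamic observation that while the window grew, the prefix $\{t_0,\dots,s\}$ was itself the window held at time $s$ and therefore also passed the no-shrink test; applying the suffix-block argument \emph{inside} $\{t_0,\dots,s\}$ controls $\mu_s$ against that prefix mean, which the bound of the previous paragraph ties back to $\mu_W$. Adding two such estimates through $\mu_W$ yields $|\mu_s - \mu_{s'}| \le 3bN + \tilO(\sqrt{1/N})$ for every pair $s,s'$, with the factor $3$ absorbing the boundary cases.

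The main obstacle is exactly the interior steps. The no-shrink condition on the \emph{final} window only constrains prefix and suffix \emph{averages}, and extracting an interior point mean (or an interior size-$N$ block mean) directly from it amplifies the Hoeffding term by the block's index and is useless: an interior ``bump'' of height well above $b^{1/3}$ is invisible to every split of the final window. What saves the bound is that such a bump, at the instant it formed, was a \emph{recent suffix} and would have triggered a shrink---so the argument must run through the sequence of windows the algorithm actually held, not the final window alone. A secondary technical point is that, for single-stream ADWIN, the oldest steps of $W$ are the leftover $W_2$ of the previous shrink rather than a freshly grown prefix, and these must be handled by a short separate argument (or charged to the constant). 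Finally, the free parameter $N$ trades the drift term $bN$ against the estimation term $\tilO(\sqrt{1/N})$; it is optimized later---together with the window-size control of Lemma \ref{lem:errorshrink}---by taking $N$ of order $b^{-2/3}$ to reach the $\tilO(b^{1/3})$ rate.
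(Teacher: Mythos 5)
Your proposal is correct in substance and rests on the same key insight as the paper's proof---namely that splits of the \emph{final} window alone cannot control interior behaviour (your ``bump'' remark is exactly right: the Hoeffding term gets amplified by the block's depth), so one must invoke the no-shrink tests that were passed \emph{historically}, when each prefix $P_s=\{t_0,\dots,s\}$ of $W(t)$ was itself the current window. What differs is the bookkeeping built on that insight. The paper cuts $W$ into $C$ equal blocks of size $N$, uses the historical test at the end of each block to get $|\hatmu_{W_{:c}}-\hatmu_{W_c}|\le 2\sqrt{\log(T^3)/N}$, and then \emph{chains} these estimates across blocks by a telescoping argument, paying a harmonic-sum factor $\sum_{c'}2/c'=O(\log T)$ before converting empirical block means to population block means and adding the within-block drift $bN$. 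You instead work per point: the size-$N$ suffix block $B_s$ of $P_s$ is tied to $\mu_{P_s}$ by the test passed at time $s$, and $\mu_{P_s}$ is tied to $\mu_W$ by the single static split $W=P_s\cup(W\setminus P_s)$, with the convex-combination weighting $|\mu_{P_s}-\mu_W|=\tfrac{|W\setminus P_s|}{|W|}\,|\mu_{P_s}-\mu_{W\setminus P_s}|$ ensuring that a short complement (or a short $P_s\setminus B_s$) costs only $\tilO(\sqrt{1/N})$. This avoids the chaining entirely and even saves the paper's extra $\log T$ factor; the only patch needed is that points among the first $N$ positions of $W$ must be compared to the \emph{prefix} block of $W$ (isolated by a static split) rather than to a too-short $P_s$, which the slack in the constant $3$ indeed absorbs.

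Your ``secondary technical point'' deserves emphasis, because it is not a defect of your route relative to the paper's---it is a gap in \emph{both}. The paper's step ``the fact that the window grows to size $W$ without a detection implies that each split $W_{:c}\cup W_c$ satisfies the no-shrink bound'' presupposes that every prefix of $W(t)$ was once the current window. For single-stream ADWIN this is false for the segment inherited from the last shrink: if the last detection happened at time $\tau$, then for $s\le\tau$ the window held at time $s$ still contained older, later-discarded observations, so the tested split's left half is not $W_{:c}$ (and the prefixes of the inherited segment were never windows at all). Your proposal is the only one of the two that acknowledges this; actually closing it---say by an induction over detection times, starting from the fact that after the while-loop at time $\tau$ every split of the surviving $W_2$ passes the test---is nontrivial and is not done in the paper either. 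It is worth noting that the issue vanishes in the bandit counterpart (Lemma \ref{lem:gradualbound_bandit}), where a detection resets the window to empty, so there every prefix of the current window genuinely was a past window and both arguments go through verbatim.
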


Lemma \ref{lem:gradualbound} is a strong characterization because $\mu_s - \mu_{s'}$ does not depend on the window size $|W|$: No matter how long the current window is, $\mu_s - \mu_{s'}$ and thus $\mu_t - \mu_W$ is bounded in terms of change speed $b$: In other words, if $\mu_t - \mu_W$ is larger, ADWIN shrinks the window.
Note that the argument here is more general than the (informal) discussion in \cite{bifet2007learning} for gradual drift. The discussion in \cite{bifet2007learning} is limited to the case of linear drift, whereas our result holds for arbitrary drift as long as it moves at most $b$ per time step. 

\begin{proofof}{Proof of Lemma \ref{lem:gradualbound}}
We first consider the case $|W| = C N$ for some integer $C \in \Natural^+$. We decompose $W$ into $C$ subwindows of equal size $N$ and let $W_c$ be the $c$-th subwindow for $c \in [C]$. 
For $c \in [C] \setminus \{1\}$, let $W_{:c}$ be the joint subwindow of $W$ before $W_c$. Namely, $W_{:c} = W_1 \cup W_2 \cup \dots \cup W_{c-1}$. The fact that the window grows to size $W$ without a detection implies that each split $W_{:c} \cup W_c$ satisfies
\begin{equation}\label{ineq:wadiff}
|\hatmu_{ W_{:c} } - \hatmu_{ W_{c} }| 
\le 2 \sqrt{\frac{\log (T^3)}{N}}.
\end{equation}
Let $c \in [C]$ be arbitrary.
By recursively applying Eq.\eqref{ineq:wadiff}
we have
\begin{align}
|\hatmu_W - \hatmu_{W_{c}}|
&= 
\left|
\frac{1}{C} \hatmu_{ W_C } + \frac{C-1}{C} \hatmu_{ W_{:C} } - \hatmu_{W_{c}} 
\right|
\\
&\le \left|
\hatmu_{ W_{:C} } - \hatmu_{W_{c}} 
\right| 
+ \frac{2}{C} \sqrt{\frac{\log (T^3)}{N}} 
\text{\ \ \ \ \ (by Eq.\eqref{ineq:wadiff})}
\\
&\le \left|
\hatmu_{ W_{:C-1} } - \hatmu_{W_{c}} 
\right| 
+ \left(\frac{2}{C-1} + \frac{2}{C}\right) \sqrt{\frac{\log ( T^3)}{N}} 
\\ 
&\dots\\
&\le \left|
\hatmu_{ W_{:c} } - \hatmu_{W_{c}} 
\right| 
+ \sum_{c'=c+1}^C \frac{2}{c'} \sqrt{\frac{\log (T^3)}{N}} \\
&\le \left(1 + \sum_{c'=c+1}^C \frac{2}{c'} \right) \sqrt{\frac{\log (T^3)}{N}} 
\text{\ \ \ \ \ (by Eq.\eqref{ineq:wadiff})} \\
&\le (\log T) \sqrt{\frac{\log (T^3)}{N}},
\end{align}
which implies for any $c, c' \in [C]$ we have
\begin{equation}\label{ineq:wadiff_all}
|\hatmu_{W_{c}} - \hatmu_{W_{c'}}|
\le 2 (\log T) \sqrt{\frac{\log (T^3)}{N}}.
\end{equation}

By Eq.~\eqref{ineq:allwindowsbound} we have
\begin{align}\label{ineq:conf_eachregion_rev}
|\mu_{ W_{c} } - \hatmu_{ W_{c} }| &\le \sqrt{\frac{\log (T^3)}{N}} \nn
|\mu_{ W_{c'} } - \hatmu_{ W_{c'} }| &\le \sqrt{\frac{\log (T^3)}{N}}.
\end{align}
By the fact that $\mu_t$ moves $bN$ within a subwindow of size $w$, for any $s \in W_{c}, s' \in W_{c'}$, we have
\begin{align}\label{ineq:drift_each_rev}
|\mu_{ W_{c} } - \mu_{s}| &\le bN \nn
|\mu_{ W_{c'} } - \mu_{s'}| &\le bN.
\end{align}
By using these, we have
\begin{align}
|\mu_{s} - \mu_{s'}|
&\le
|\mu_{ W_{c} } - \mu_{s}| + |\mu_{ W_{c'} } - \mu_{s'}|
+ |\mu_{ W_{c} } - \hatmu_{ W_{c} }| + |\mu_{ W_{c'} } - \hatmu_{ W_{c'} }| +
|\hatmu_{W_{c}} - \hatmu_{W_{c'}}| \\
&\le 2 bN + 2 (1 + \log T) \sqrt{\frac{\log (T^3)}{N}}.
\text{\ \ \ \ \ (by Eq.\eqref{ineq:wadiff_all},\eqref{ineq:conf_eachregion_rev},\eqref{ineq:drift_each_rev})}
\end{align}
The general case of $|W| = CN + n$ for $n \in \{0,1,\dots,N-1\}$ is easily proven by replacing $2bN$ with $3bN$ since $\mu_t$ drift at most $bN$ in $n$ time steps.
\end{proofof} %

\subsection{Lemma \ref{lem:errorshrink}}

Lemma \ref{lem:gradualbound} characterizes the accuracy of the estimator. However, Lemma \ref{lem:gradualbound} only holds when window size $|W| \ge N$.
When ADWIN shrinks the window very frequently, we cannot guarantee the quality of the estimator $\hatmu_W$. The following Lemma \ref{lem:errorshrink} states that this is not the case: With high probability, the current window grows until $|W| = O(b^{-2/3})$.
\begin{lem}{\rm (Bound on erroneous shrinking)}\label{lem:errorshrink}
Let $C_1 = \tilO(1)$ be a sufficiently large value that is later defined in Eq.~\eqref{ineq:ndef}. Let the parameter of ADWIN be $\delta = 1/T^3$. 
Let the drift speed $b$ be such that $C_1 b^{-2/3} \le T$.
Let  
\begin{multline}
\EP(t) = \bigcup_{W_1, W_2 : W(t) = W_1 \cup W_2} 
\left\{ |W_1| \le C_1 b^{-2/3}, |W_2| \le C_1 b^{-2/3},  |\hatmu_{W_1} - \hatmu_{W_2}| \ge  \epscut^\delta \right\}.
\end{multline}
Let 
\begin{equation}\label{ineq:grad_growing}
\EP = \bigcup_{t \in [T]} \EP(t).
\end{equation}
Then,
\begin{equation}
\Pr[\EP] \le 2 C_1 T^{-1}.   
\end{equation}
\end{lem}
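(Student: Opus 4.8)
The plan is to reduce the statement to a union bound over \emph{deterministic} candidate splits, and for each such split to argue that when both halves are small a detection can only be a genuine false positive, whose probability decays faster than the number of splits grows. I would begin as in the proof of Theorem~\ref{thm:adwinabrupt}: fix an arbitrary contiguous window $W = W_1 \cup W_2$ with $|W_1|, |W_2| \le C_1 b^{-2/3}$ and treat $W_1, W_2$ as fixed index sets, not as the (random) realized window, so that $\hatmu_{W_1} - \mu_{W_1}$ and $\hatmu_{W_2} - \mu_{W_2}$ are honest averages of independent bounded increments. Since the stream is gradual with speed $b$ (Definition~\ref{def:streams}), the drift across $W$ is controlled by its span,
\begin{equation}
|\mu_{W_1} - \mu_{W_2}| \le b\,|W| = b(|W_1| + |W_2|) \le 2 C_1 b^{1/3}.
\end{equation}

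Next I would compare this drift with the detection threshold. At the scale $|W_i| \le C_1 b^{-2/3}$ we have
\begin{equation}
\epscut^\delta = \sqrt{\frac{\log(T^3)}{2|W_1|}} + \sqrt{\frac{\log(T^3)}{2|W_2|}} \ge \sqrt{\frac{2\log(T^3)}{C_1}}\, b^{1/3},
\end{equation}
so the ratio of drift to threshold is of order $C_1^{3/2}/\sqrt{\log(T^3)}$. This is the heart of the matter: the value $C_1 = \tilO(1)$ fixed in Eq.~\eqref{ineq:ndef} is the crossover scale $\Theta((\log T)^{1/3})$, chosen with a small enough constant that the drift is a strictly lower-order fraction of $\epscut^\delta$, so that $\epscut^\delta - |\mu_{W_1} - \mu_{W_2}| \ge (1-o(1))\epscut^\delta$.

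Given this, on the detection event $|\hatmu_{W_1} - \hatmu_{W_2}| \ge \epscut^\delta$ the centered difference must be large; by the triangle inequality,
\begin{equation}
\left| (\hatmu_{W_1} - \mu_{W_1}) - (\hatmu_{W_2} - \mu_{W_2}) \right| \ge \epscut^\delta - |\mu_{W_1} - \mu_{W_2}|.
\end{equation}
Applying the Hoeffding inequality (Lemma~\ref{lem:hoeffding}) to this weighted average of independent increments, whose natural fluctuation scale is $\sqrt{1/|W_1| + 1/|W_2|}$, the retained radius $(1-o(1))\epscut^\delta$ already carries the factor $\sqrt{\log(T^3)}$ and yields a per-split probability of order $T^{-4}$. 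Finally I would union bound over all candidate splits: since $C_1 b^{-2/3} \le T$, the number of triples $(t', m, t'')$ with both halves bounded by $C_1 b^{-2/3}$ is at most $O(T^3)$, and $O(T^3)\cdot O(T^{-4}) = O(T^{-1})$, which gives $\Pr[\EP] \le 2 C_1 T^{-1}$ after collecting constants.

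The main obstacle is the balance in the middle step: both the threshold $\epscut^\delta \asymp \sqrt{\log T / N}$ and the drift $\asymp bN$ are non-negligible at the relevant window size $N \asymp b^{-2/3}$, so one cannot simply invoke the uniform bound of Proposition~\ref{prop:allwindowsbound}, which matches $\epscut^\delta$ exactly and leaves the drift term uncancelled. The delicate point is to pin the constant in $C_1$ so that the drift is dominated by $\epscut^\delta$ by enough of a margin to keep the Hoeffding exponent strictly above the union-bound cost (here $\approx 4\log T$, reflecting the $O(T^3)$ candidate splits and the target $O(1/T)$), while keeping $C_1 = \tilO(1)$ so that the downstream conclusion, that the window grows to size $\Theta(b^{-2/3})$, remains meaningful.
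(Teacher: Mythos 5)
Your overall strategy --- reduce to deterministic splits, compare the drift $b|W|$ against the threshold $\epscut^\delta$ at the scale $|W_i|\le C_1 b^{-2/3}$, and finish with Hoeffding plus a union bound --- is the same skeleton as the paper's proof, but the counting at the end has a genuine gap. The claimed per-split probability of ``order $T^{-4}$'' is not justified. Applying Hoeffding to the centered difference $(\hatmu_{W_1}-\mu_{W_1})-(\hatmu_{W_2}-\mu_{W_2})$, whose weights have squared sum $1/|W_1|+1/|W_2|$, at radius $(1-o(1))\epscut^\delta$ gives the exponent
\begin{equation*}
(1-o(1))^2\,\log(T^3)\cdot\frac{\left(|W_1|^{-1/2}+|W_2|^{-1/2}\right)^2}{|W_1|^{-1}+|W_2|^{-1}},
\end{equation*}
and the last ratio lies in $[1,2]$: it equals $2$ only for balanced splits and tends to $1$ when $|W_1|\ll|W_2|$. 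Hence for unbalanced splits the per-split probability is only of order $T^{-3(1-o(1))^2}\approx T^{-3}$, and your union bound over the up-to-$\Theta(T^3)$ candidate splits (the count $T\cdot(C_1b^{-2/3})^2$ really is of order $T^3$ when $C_1b^{-2/3}$ is close to $T$, which the lemma permits) gives $O(T^3)\cdot O(T^{-3})=O(1)$, not $O(T^{-1})$. Tuning the constant inside $C_1$ does not rescue this, because the problematic splits are the highly unbalanced ones, for which the cross term you are implicitly relying on vanishes. (There is also a smaller internal inconsistency: with $C_1=\Theta((\log T)^{1/3})$ the drift is a \emph{constant} fraction of $\epscut^\delta$, not a $(1-o(1))$-negligible one; one can only arrange for that constant to be small.)

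The paper avoids this by unioning over \emph{windows} rather than splits: there are at most $TC_1b^{-2/3}$ windows of size at most $C_1b^{-2/3}$, and the good event is the per-window Hoeffding bound $|\mu_{W'}-\hatmu_{W'}|\le\sqrt{\log(T^{2+d})/(2|W'|)}$, deliberately taken with the smaller logarithm $\log(T^{2+d})$ rather than $\log(T^3)$, so that each window fails with probability $2T^{-(2+d)}$ and the total failure probability is $2C_1T^{-1}b^{1/3}\le 2C_1T^{-1}$ --- exactly the constant in the statement. On this event, a detection at a split with both halves of size at most $N=C_1b^{-2/3}$ would force $(\sqrt{3}-\sqrt{2+d})\sqrt{\log T}\,\bigl((2|W_1|)^{-1/2}+(2|W_2|)^{-1/2}\bigr)\le 2bN$, and $C_1$ in Eq.~\eqref{ineq:ndef} is defined precisely so that this is impossible; the slack that absorbs the drift comes from the gap between the detection threshold's $\log(T^3)$ and the Hoeffding radius's $\log(T^{2+d})$, not from the drift being negligible relative to $\epscut^\delta$. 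Your argument could be repaired in the same spirit while keeping your multiplicative slack: a detection together with your drift bound forces at least one of the two halves to satisfy $|\hatmu_{W_i}-\mu_{W_i}|\ge(1-c)\sqrt{\log(T^3)/(2|W_i|)}$ (if both halves violated this, the sum of deviations would be too small), and unioning this per-window event over the $TC_1b^{-2/3}$ small windows gives $O(C_1T^{1+2d/3-3(1-c)^2})$, which is $O(C_1T^{-1})$ once $c$ --- and hence the constant in $C_1$ --- is chosen so that $3(1-c)^2\ge 2+2d/3$. As written, however, the split-counting step does not close.
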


\begin{proofof}{Proof of Lemma \ref{lem:errorshrink}}
Let $C_1 = \tilO(1)$ that we define later in Eq.~\eqref{ineq:ndef}.
Let 
\begin{equation}
\EW_{C_1} = \{W' \in \EW: |W'| \le C_1 b^{-2/3}\}|
\end{equation}
be the subset of the window set $\EW$ with their sizes at most $C_1 b^{-2/3}$. It is easy to show that $|\EW_{C_1}| \le T C_1 b^{-2/3}$. Let $d$ be such that $T^{-d} = b$.
Similarly to Eq.~\eqref{ineq:allwindowsbound}, by the union bound of the Hoeffding bound over all windows of $\EW_{C_1}$, with probability at least
\begin{equation}
    1 - \frac{2}{T^{2+d}} \times T C_1 b^{-2/3} = 1 - 2 C_1 T^{-1} b^{1/3} \ge 1 - 2 C_1 T^{-1},
\end{equation}
we have
\begin{align} \label{ineq:gradual_oneplusc}
|\mu_{W_1} - \hatmu_{W_1}| &\le \sqrt{\frac{\log ( T^{2+d})}{2|W_1|}} \nn
|\mu_{W_2} - \hatmu_{W_2}| &\le \sqrt{\frac{\log ( T^{2+d})}{2|W_2|}}
\end{align}
holds for all $t$ and any split $W_1 \cup W_2 = W(t): |W_1|, |W_2| \le C_1 b^{-2/3}$. Let $N = C_1 b^{-2/3}$.
By definition of gradually changing stream, 
\begin{equation}\label{ineq:gradual_driftwidth}
|\mu_{W_1} - \mu_{W_2}|
\le
2bN.
\end{equation}

\updated{
In the following, we show that $|W_1| \le C_1 b^{-2/3}, |W_2| \le C_1 b^{-2/3},  |\hatmu_{W_1} - \hatmu_{W_2}| \ge  \epscut^\delta$ never occur under Eq.~\eqref{ineq:gradual_oneplusc} by using a proof-by-contradiction argument.
Namely,}
\begin{align}
\sqrt{ \frac{\log (T^3)}{2|W_1|} } + \sqrt{ \frac{\log (T^3)}{2|W_2|} } 
& \le |\hatmu_{W_1} - \hatmu_{W_2}| 
\text{\ \ \ (when $|\hatmu_{W_1} - \hatmu_{W_2}| \ge  \epscut^\delta$)} \\ 
& \le |\mu_{W_1} - \mu_{W_2}| + |\hatmu_{W_1} - \mu_{W_1}| + |\hatmu_{W_2} - \mu_{W_2}|  \text{\ \ \ (by triangular inequality)} \\
& \le 2bN + |\hatmu_{W_1} - \mu_{W_1}| + |\hatmu_{W_2} - \mu_{W_2}| 
\text{\ \ \ (by \eqref{ineq:gradual_driftwidth})} \\
& \le 2bN + \sqrt{\frac{\log (T^{2+d})}{2|W_1|}} + \sqrt{\frac{\log (T^{2+d})}{2|W_2|}} 
\text{\ \ \ (by \eqref{ineq:gradual_oneplusc})} 
\end{align}
which implies
\begin{equation}
\frac{\sqrt{3}-\sqrt{2+d}}{\sqrt{2}}
\left( 
\sqrt{\frac{\log T}{|W_1|}} +
\sqrt{\frac{\log T}{|W_2|}} 
\right)
\le 
2bN,
\end{equation}
which does not hold for 
\begin{equation}\label{ineq:ndef}
N = b^{-2/3}\underbrace{\left( \frac{\sqrt{3}-\sqrt{2+d}}{2\sqrt{2}} \sqrt{\log T} \right)^{2/3}}_{=: C_1}, |W_1|, |W_2| \le N.
\end{equation}
In summary, with probability $1 - 2 C_1 T^{-1}$, we have $\EP^c$.
\end{proofof}

\subsection{Proof of Theorem \ref{thm:adwingradual}}
\label{subsec:proof_adwingradual}
Let $d$ be such that $b = T^{-d}$.
Let $C_1$ be the value defined in Eq.~\eqref{ineq:ndef} 
and $N_1 = C_1 b^{-2/3}$. 

By using Lemma \ref{lem:errorshrink}, we bound the number of shrinks.
\begin{lem}{\rm (Number of shrinks)}\label{lem:graduannumshrink}
Let 
\begin{equation}
\ET_d^{\mathrm{sml}} := \{t \in \ET_d, |W_2(t)| < N_1\}.
\end{equation}
Then, under $\EP^c$, $|\ET_d^{\mathrm{sml}}| \le T/N_1$ holds.
\end{lem}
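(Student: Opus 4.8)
The plan is a discard-counting argument that converts the lower bound on shrink sizes supplied by $\EP^c$ into an upper bound on the number of small shrinks. For each $t \in [T]$ define the number of time steps expelled from the window while processing $x_t$,
\[
D(t) = |W(t)| + 1 - |W(t+1)| \ge 0,
\]
which accounts for the total size of all prefixes removed during the (possibly several) internal shrinks at step $t$, since the window first grows to $W(t) \cup \{t\}$ and thereafter only shrinks. Telescoping and using $W(1) = \{\}$ give
\[
\sum_{t=1}^T D(t) = T + |W(1)| - |W(T+1)| \le T,
\]
which is the precise form of the intuition that each of the $T$ time steps can leave the window at most once: a discarded prefix never re-enters, because every future window is a suffix of the current one.

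First I would show that each detection in $\ET_d^{\mathrm{sml}}$ expels a block strictly larger than $N_1 = C_1 b^{-2/3}$. Fix $t \in \ET_d^{\mathrm{sml}}$, so that $|W(t+1)| < N_1$, and look at the last shrink performed at step $t$: it splits a contiguous segment $W' = W_1 \cup W_2 \in \EW$ with $|\hatmu_{W_1} - \hatmu_{W_2}| \ge \epscut^\delta$ and retains $W_2 = W(t+1)$. Since $|W_2| = |W(t+1)| < N_1$, if we also had $|W_1| \le N_1$ then both sides of this triggering split would have size at most $N_1$; but $\EP^c$ (Lemma \ref{lem:errorshrink}), which holds with probability $1 - 2C_1 T^{-1}$ and whose underlying Hoeffding bound is uniform over all splits of all segments in $\EW_{C_1}$, precludes any shrink whose two subwindows both have size at most $N_1$. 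Hence $|W_1| > N_1$, and as $W_1$ lies among the time steps discarded at step $t$ we conclude $D(t) \ge |W_1| > N_1$.

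Combining the two bounds then yields, under $\EP^c$,
\[
|\ET_d^{\mathrm{sml}}| \, N_1 \le \sum_{t \in \ET_d^{\mathrm{sml}}} D(t) \le \sum_{t=1}^T D(t) \le T,
\]
that is, $|\ET_d^{\mathrm{sml}}| \le T/N_1$, as claimed. The only delicate points, and the main obstacle to a fully rigorous write-up, are bookkeeping ones: first, ensuring that $D(t)$ correctly captures the total expelled mass when ADWIN performs several internal shrinks within a single time step, so that the large block $W_1$ from the final shrink is genuinely counted in $D(t)$; and second, justifying that $\EP^c$ applies to the actual segment $W'$ split at the last shrink rather than literally to $W(t)$. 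The latter is legitimate because every window ADWIN examines is a contiguous segment belonging to $\EW$, and the event $\EP^c$ is obtained from a union bound ranging over all such segments of size at most $N_1$ together with all of their splits.
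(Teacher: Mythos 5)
Your proof is correct and follows essentially the same route as the paper's: under $\EP^c$ every detection retaining a window smaller than $N_1$ must discard a block of size at least $N_1$, and since the total mass ever discarded is at most $T$ (each time step leaves the window at most once), such detections number at most $T/N_1$. Your write-up merely makes explicit, via the telescoping sum $\sum_t D(t) \le T$ and the remark about intermediate shrinks within a single time step, the bookkeeping that the paper's three-sentence proof leaves implicit.
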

\begin{proofof}{Proof of Lemma \ref{lem:graduannumshrink}}
$\EP^c$ implies that for any shrink $|W_1| \ge N_1$ or $|W_2| \ge N_1$ holds. A shrink of the latter case is not included in $\ET_d^{\mathrm{sml}}$. A shrink of the former case reduces the size of window at least $N_1$, and cannot occur more than $T/N_1$ times.
\end{proofof}

\begin{proofof}{Proof of Theorem \ref{thm:adwingradual}} %
Let $\cbreak(t) = t - \max_{s < t, s \in \ET_d} s$ be the number of time steps after the last detection time.
We have
\begin{align}\label{ineq:adwingradual_main} %
\Err(T) 
&\le T \Ind[\EP] + \sum_{n=1}^{N_1} \sum_{t=1}^T |\mu_t - \hatmu_W| \Ind[\cbreak(t)=n, \EP^c] + \sum_{t=1}^T |\mu_t - \hatmu_W| \Ind[\cbreak(t) \ge N_1, \EP^c].
\end{align}
By Lemma \ref{lem:errorshrink}, the expectation of the first term of Eq.~\eqref{ineq:adwingradual_main} is bounded as 
\begin{equation}
\EP[T \Ind[\EP]] = T \times 2 C_1 T^{-1} = \tilO(1).
\end{equation}
We next bound the second term of Eq.~\eqref{ineq:adwingradual_main}. By Eq.~\eqref{ineq:allwindowsbound}, 
\begin{equation}\label{ineq:adgradual_hoeffding}
\forall_{t \in [T]} 
\left\{ 
|\mu_t - \hatmu_W| \le \sqrt{ \frac{\log (T^3)}{2(\cbreak(t) - 1)} } + b \cbreak(t)
\right\}
\end{equation}
holds with probability $1-2/T$.
The expectation of the second term is bounded as:
\begin{align}
\lefteqn{
\sum_{n=1}^{N_1} \sum_{t=1}^T \Pr[ |\mu_t - \hatmu_W| \Ind[c(t)=n, \EX^c] ]
}\\
&\le
(|\ET_d^{\mathrm{sml}}|+1) \sum_{n=1}^{N_1} 
\left(
\sqrt{ \frac{\log (T^3)}{2n} } + b n
\right)
+ T \times \frac{2}{T}
\intertext{\ \ \ (by Eq.~\eqref{ineq:adgradual_hoeffding} and the fact that $\cbreak(t)=n$ for each $n$ occurs at most once between two detection times)} 
&\le \left(\frac{T}{N_1} + 1 \right) 
\sum_{n=1}^{N_1} 
\left(
\sqrt{ \frac{\log (T^3)}{2n} } + b n
\right) + 2
\\
&\text{\ \ \ (by Lemma \ref{lem:graduannumshrink})} \\
&\le \tilO\left(\frac{T}{N_1} \times (\sqrt{N_1} + b N_1^2)\right) = O(T^{1-d/3}) 
\end{align}
where we used $b, N_1 = T^{-d}, \tilO(T^{(2d)/3})$ in the last transformation. 
Moreover, by using Lemma \ref{lem:gradualbound}, the expectation of the third term is bounded as
\begin{align}
\sum_{t=1}^T \Ex[ |\mu_t - \hatmu_W| \Ind[c(t) \ge N_1] ]
\le T \times \left(3 b N_1 +  \tilO\left(\sqrt{\frac{1}{N_1}}\right) \right) 
= \tilO(T^{1-d/3}),
\end{align}
which completes the proof.
\end{proofof} 

\section{Proofs of MAB}

We first clarify the notation: In bandit streams, let
\begin{equation}
W^i = \{s \in W: i = I(s)\}.
\end{equation}
Let
\begin{equation}
\mu_{i,W} = \sum_{s \in W^i} \mu_{i,s} 
\end{equation}
and its empirical estimate be 
\begin{equation}
\hatmu_{i,W} = \sum_{s \in W^i} x_{i,s}.
\end{equation}
In the bandit setting, only one arm $I(t) \in [K]$ is observable at round $t$, and thus $W^i \subset W$. Still, the following inequality holds with probability $1 - 2/T^p$:
\begin{equation}\label{ineq:allwindowsbound_bandit}
\forall\ {W' \in \EW}\ |\mu_{i,W'} - \hatmu_{i,W'}| \le \sqrt{\frac{\log (T^{2+p})}{2 |(W')^i|}}.
\end{equation}
Eq.~\eqref{ineq:allwindowsbound_bandit}, which is the same form as Eq.~\eqref{ineq:allwindowsbound}, holds because it corrects $T^2$ multiplications. The union bound of Eq.~\eqref{ineq:allwindowsbound_bandit} over all arms holds with $1 - K/T^p$.

\subsection{Proof of Theorem \ref{thm:regret_stationary}}
\label{subsec:regstationary}

\begin{proofof}{Proof of Theorem \ref{thm:regret_stationary}} %
Under stationary environment, Eq.~\eqref{ineq:allwindowsbound_bandit} with $p=1$ implies that \AADWIN{}-bandit never reset itself %
with high probability. 
The regret is bounded by
\begin{align}
\Reg(T) 
&\le \underbrace{T \times \frac{2K}{T}}_{\text{regret of false reset}} + \underbrace{\sum_i \Cst \frac{\log{T}}{\Delta_i}}_{\text{regret of \AADWIN{}-bandit, Lemma \ref{lem_monitoringdtprop}}} \\ 
&\le \Cst \sum_i \frac{\log{T}}{\Delta_i} + O(1). %
\end{align}
\end{proofof}

\subsection{Analysis of Thompson sampling under drift
}
\label{subsec:proof_ts}

This section derives a proof of Lemma \ref{lem_tsdt}.
Without loss of generality, we define $\mu_{1,1} \ge \mu_{2,1} \ge \dots \ge \mu_{K,1}$ at round $1$. 
For ease of notation, let $\mu_i := \mu_{i,1}$ and $\eps = \eps(T)$.

In the following, we derive the drift-tolerant regret of TS. 

\begin{lem}{\rm (Number of suboptimal draws)}\label{lem_ts_suboptdraw}
Let $c > 0$ be arbitrary. 
Assume that we run TS.
The following inequality holds
\begin{equation}
\sum_{t \in [T]} %
\Pr[
\tilmu_{I(t),t} \le \mu_{1,1} - \eps(t) - c
]
= O\left(\frac{1}{c^2}\right).
\end{equation}
\end{lem}
\begin{proofof}{Proof of Lemma \ref{lem_ts_suboptdraw}}

We have, 
\begin{align}
\sum_{t} \Ind[ \tilmu_{I(t),t} \le \mu_{1,1} - \eps(t) - c
] 
&= \sum_{n=0}^T \sum_{t=1}^T \Ind[ 
\tilmu_{I(t),t} \le \mu_{1,1} - \eps(t) - c,
N_1(t) = n
]\\
&= \sum_{n=0}^T \sum_{m=1}^T
\Ind\left[
\sum_{t} \Ind\left[
\tilmu_{I(t),t} \le \mu_{1,1} - \eps(t) - c,
N_1(t) = n
\right]
\ge m
\right].
\end{align}
Here, if
\begin{equation}
\left\{
\tilmu_1(t) > \mu_{1,1} - \eps(t) - c,\,
\max_{i \ne 1} \tilmu_{i,t} \le \mu_{1,1} - \eps(t) - c,\,
N_1(t) = n
\right\}
\end{equation}
then arm $1$ is drawn. This fact implies that event $
\{
\sum_{t=1}^T \Ind\left[
\tilmu_{I(t),t} \le \mu_{1,1} - \eps(t) - c,
N_1(t) = n
\right]
\ge m
\}$
requires that 
\[
\tilmu_1(t) \le \mu_{1,1} - \eps(t) - c
\]
holds in the first $m$ rounds of 
the subsequence $\tau_1, \tau_2, \tau_3, \dots, \tau_m = \{t: \max_{i \ne 1} \tilmu_{i,t} \le \mu_{1,1} - \eps(t) - c,\,
N_1(t) = n\}$. 
Letting $S_{\beta}(x; \mu, n)$ be the survival function\footnote{The probability $\theta \sim \Beta(1+\mu n, 1+(1-\mu)n)$ exceeds $x$.} of $\Beta(1+\mu n, 1+(1-\mu)n)$, we have
\begin{align}
\lefteqn{
\sum_{n=0}^T
\Ex\left[ \sum_{m=1}^T
\sum_{t=1}^T \Ind\left[
\tilmu_{I(t),t} \le \mu_{1,1} - \eps(t) - c,
N_1(t) = n
\right]
\ge m
\right]
}\\
&\le 
\sum_{n=0}^T 
\Ex\left[
\sum_{m=1}^T
\prod_{k=1}^m
\Pr\left[
\tilmu_{1,\tau_k} \le \mu_{1,1} - \eps(\tau_k) - c |
\hatmunth_1
\right]
\right]\\
&=
\sum_{n=0}^T 
\Ex\left[
\sum_{m=1}^T
\left(
S_\beta(\mu_{1,1} - \eps(\tau_1) - c; \hatmunth_1, n)
\right)^m
\right]\text{\ \ \ \ ($\eps(\tau_k)$ is nondecreasing in $k$)}\\
&\le
\sum_{n=0}^T 
\Ex\left[
\frac{1 - S_\beta(\mu_{1,1} - \eps(\tau_1) - c; \hatmunth_1, n)}{S_\beta(\mu_{1,1} - \eps(\tau_1) - c; \hatmunth_1, n)}
\right]\\
&= O(1/c^2).
\text{\ \ \ \ (by Lemma \ref{lem_ts_underestimation})} 
\end{align}
where, to apply Lemma \ref{lem_ts_underestimation}, we used the fact that $\hatmunth_1$ is the mean of $n$ independent samples from Bernoulli distributions with means $\mu_{1,1} - \eps(\tau_1)$ or more.
\end{proofof} %

\subsubsection{Proof of Lemma \ref{lem_tsdt}}\label{proof_lem_tsdt}

Let $(x)_+ = \max(x, 0)$.
We have,
\begin{align}
\lefteqn{
\Regbasetor(T, 3)
}\\
&= \sum_{t} (\reg(t) - 3\eps(t))_+\\
&=
\sum_{t} \sum_{i\in[K]} 
\left(\Delta_i - 3\eps(t)\right)_+
\Ind\left[
I(t)=i
\right]\\
&\le
\sum_{t} \sum_{i\in[K]} 
\left(\Delta_i - 3\eps(t)\right)_+
\left(
\Ind\left[
I(t)=i,
\tilmu_{I(t),t} < \frac{\mu_1+\mu_i}{2}
\right]
+
\Ind\left[
I(t)=i,
\tilmu_{I(t),t} \ge \frac{\mu_1+\mu_i}{2}
\right]
\right)
\\
&=
\sum_{t} \sum_{i\in[K]} 
\left(\Delta_i - 3\eps(t)\right)_+
\left(
\Ind\left[
I(t)=i,
\tilmu_{I(t),t} < \mu_1 - \frac{\Delta_i}{2}
\right]
+
\Ind\left[
I(t)=i,
\tilmu_{I(t),t} \ge \mu_i + \frac{\Delta_i}{2}
\right]
\right)
\\
&\le
\sum_{t} \sum_{i\in[K]} 
\left(\Delta_i - 3\eps(t)\right)_+
\left(
\Ind\left[
I(t)=i,
\tilmu_{I(t),t} < \mu_1 - \eps(t) - \frac{\Delta_i}{6}
\right]
+
\Ind\left[
I(t)=i,
\tilmu_{I(t),t} \ge \mu_i + \eps(t) + \frac{\Delta_i}{6}
\right]
\right).
\nn
&\phantom{wwwwwwwwwwwwwwwwwwwwww}\text{\ \ \ \ (by $\Delta_i \ge 3 \eps(t)$ or $(\Delta_i - 3\eps(t))_+ = 0$)}
\end{align}
The first term is bounded in expectation as
\[
\sum_{t} \sum_{i \in [K]} \Delta_i \Ex\left[
I(t)=i, \tilmu_{I(t),t} \le \mu_1 - \eps(t) - \frac{\Delta_i}{6}\right]
= \sum_{i \in [K]} \Delta_i \times O\left( \frac{1}{\Delta_i^2} \right) 
= \sum_{i \in [K]} O\left( \frac{1}{\Delta_i} \right),
\]
where we used Lemma \ref{lem_ts_suboptdraw} in the first transformation with $c = \Delta_i/3$.

We next bound the second term.
\begin{align}
\lefteqn{
\sum_{t} \sum_{i \in [K]} \Delta_i 
\Ind\left[I(t)=i, 
\tilmu_{I(t),t} > \mu_1 + \eps(t) + \frac{\Delta_i}{6}
\right] 
}\\
&\le \sum_{i \in [K]} \left(
\frac{144 \log T}{\Delta_i}
+
\sum_{t} \Delta_i 
\Ind\left[\tilmu_{i,t} \ge \mu_i + \eps(t) + \frac{\Delta_i}{6}, \Nit \ge \frac{144 \log T}{\Delta_i^2}
\right]
\right).
\end{align}
Here,
\begin{align}
\lefteqn{
\sum_{t} \Delta_i 
\Ind\left[\tilmu_{i,t} \ge \mu_i + \eps(t) + \frac{\Delta_i}{6}, \Nit \ge \frac{144 \log T}{\Delta_i^2}
\right]
}\\
&\le 
\sum_{t} \Delta_i 
\Ind\left[ \hatmu_{i,t} \ge \mu_i + \eps(t) + \frac{\Delta_i}{12}, \Nit \ge \frac{144 \log T}{\Delta_i^2}
\right]
\nn &\ \ \ \ \ \ + 
\sum_{t} \Delta_i 
\Ind\left[\tilmu_{i,t} \ge \mu_i + \eps(t) + \frac{\Delta_i}{6}, \hatmu_{i,t} \le \mu_i + \eps(t) + \frac{\Delta_i}{12},  \Nit \ge \frac{144\log T}{\Delta_i^2}
\right]. \label{ineq_ts_overest_two}
\end{align}
By using the Hoeffding inequality, the first term of Eq.~\eqref{ineq_ts_overest_two} is bounded as
\begin{align}
\lefteqn{
\sum_{t} \Delta_i 
\Ind\left[ \hatmu_{i,t} \ge \mu_i + \eps(t) + \frac{\Delta_i}{12}, \Nit \ge \frac{144 \log T}{\Delta_i^2}
\right]}\\
&\le \sum_{N} T \exp\left(- 2 \times \frac{144 \log T}{\Delta_i^2} \times \left(\frac{\Delta_i}{12}\right)^2\right) \\
&\le T^2 \exp\left(- 2 \log T\right) = 1,
\end{align}
where we have used the Hoeffding inequality and the fact that $\hatmu_{i,t}$ is the mean of $\Nit$ samples with each mean no more than $\mu_i + \eps(t)$.
By using Lemma \ref{lem_posterior}, the second term of Eq.~\eqref{ineq_ts_overest_two} is bounded as 
\begin{multline}
\sum_{t} \Delta_i 
\Ex\left[\tilmu_{i,t} \ge \mu_i + \eps(t) + \frac{\Delta_i}{6}, \hatmu_{i,t} \le \mu_i + \eps(t) + \frac{\Delta_i}{12},  \Nit \ge \frac{144 \log T}{\Delta_i^2}
\right] 
\\
\le \sum_{n,t} \exp\left(
- 2 
\times \frac{144 \log T}{\Delta_i^2} 
\times \left(\frac{\Delta_i}{12}\right)^2
\right)
\le 1.
\end{multline}

\subsubsection{Lemmas for Thompson sampling}

The following lemma is a version of Lemma 4 in \cite{agrawalaistats13}.
\begin{lem}\label{lem_bound_recovery}
Let $\hatmunth_1$ is the mean of $n$ independent samples that are drawn from Bernoulli distributions with their means no less than $\mu_1 - \eps(t)$. Then,
\begin{equation}
\Ex\left[
\frac{1}{S_\beta(\mu_1 - \eps(t) - c; \hatmunth_1, n)}
\right]
\le 
\left\{
\begin{array}{ll}
1 + \frac{3}{c} & n < \frac{8}{c} \\
1 + \Theta\left(
e^{-c^2 n /2}
+ \frac{1}{(n+1)c^2} e^{-D_i n}
+ \frac{1}{e^{c^2 n/ 4} - 1}
\right) & n \ge \frac{8}{c}
\end{array}
\right.
,
\end{equation}
where $D_i = 2 c^2$.
\end{lem}

\begin{lem}\label{lem_ts_underestimation}
Let $\hatmunth_1$ is the mean of $n$ independent samples that are drawn from Bernoulli distributions with their means no less than $\mu_1 - \eps(t)$.
Then, the following inequality holds:
\begin{align}
\lefteqn{
\sum_{n=0}^T 
\Ex\left[
\frac{1 - S_\beta(\mu_1 - \eps(t) - c; \hatmunth_1, n)}{S_\beta(\mu_1 - \eps(t) - c; \hatmunth_1, n)}
\right]
}\\
&\le 
\frac{24}{c^2} + \sum_{n=0}^T \Theta\left(
e^{-c^2 n /2}
+ \frac{1}{(n+1)c^2} e^{-D_i n}
+ \frac{1}{e^{c^2 n/ 4} - 1}
\right)
\\
&= O\left(\frac{1}{c^2}\right).
\end{align}
\end{lem}
\begin{proofof}{Proof of Lemma \ref{lem_ts_underestimation}}
The proof of Lemma \ref{lem_ts_underestimation} is straightforward from Lemma \ref{lem_bound_recovery} by following the similar steps to the problem-independent bound of Theorem 2 in \cite{agrawalaistats13}.
\end{proofof}

The following lemma is well-known in the Beta posterior.
\begin{lem}\label{lem_posterior}
\begin{align}
\Pr[
\tilmu_{i,t} > \hatmu_{i,t} + c 
\,|\,
\hatmu_{i,t}, \Nit = N
]
&\le 
\exp\left(
- 2 N c^2
\right)
\end{align}
for any $c>0$.%
\end{lem}
\begin{proofof}{Proof of Lemma \ref{lem_posterior}}
\begin{align}
\lefteqn{
\Pr[
\tilmu_{i,t} > \hatmu_{i,t} + c 
\,|\,
\hatmu_{i,t}, \Nit = N
]
}\nn
&\le e^{-N \KL(\hatmu_{i,t}, \hatmu_{i,t}+c)} \text{\ \ \ \ (Lemma 3 in \cite{agrawalaistats13})}\nn
&\le e^{-N c^2}. \text{\ \ \ \ (by Pinsker's inequality)}
\end{align}
\end{proofof}

\subsection{Regret due to Monitoring}\label{subsec:proof_monitoring}

\begin{proofof}{Proof of Lemma \ref{lem_monitoringreg}}
We call each consecutive $\wasD N$ rounds as ``subblock''. 
For each block $l=1,2,\dots$, the number of subblocks is at most $2^{l-1}$. 
The $l$-th monitoring arm $i^{(l)}$ is drawn during $2^l + 1$ subblocks, and at each subblock it is drawn for $N$ rounds (Figure \ref{fig:monitoring}).

The goal of this theorem is bound the ratio between the regret during monitoring rounds divided by the regret during base-bandit rounds. 
Let 
\begin{equation}
\ETmonitor{l} = \{
t: \text{arm $i^{(l)}$ is drawn as the $l$-th monitoring arm}
\}
\},
\end{equation}
as illustrated in Figure \ref{fig:monitoring}. By definition,
\begin{equation}\label{ineq_monitoringsum}
\sum_t \reg(t) 
= \sum_{t \in \ETbase} \reg(t) 
+ \sum_l \sum_{t \in \ETmonitor{l}} \reg(t)
= \sum_l 
\left(\sum_{t \in \ETbase \cap \ET_l} \reg(t) 
+ \sum_{t \in \ETmonitor{l}} \reg(t)
\right),
\end{equation}
where $\ET_l$ is the set of rounds in block $l$.
First, for each monitoring arm $i^{(l)}$, we bound the ratio between the number of draws of the monitoring arms divided by the number of draws of base-bandit arms. 
Namely, 

\begin{equation}\label{ineq_ratio_l}
\frac{
\sum_{t} \Ind[t \in \ETmonitor{l}]
}{
\sum_{t} \Ind[t \in \ETbase \cap \ET_l] \Ind[I(t) = i^{(l)}]
},
\end{equation}
for each $l$.
First, we consider the ratio of the regret due to the first monitoring arm $i^{(1)}$.
The arm $i^{(1)}$ is the most drawn arm in the first subblock, which is drawn at least $\wasD N/K = N$ rounds at $l=1$ by the base-bandit algorithm. We draw $i^{(1)}$ for $2 N$ times during block $l=2$ as a monitoring arm. Therefore, Eq.~\eqref{ineq_ratio_l} for $l=1$ is at most two.

In the following, we consider the Eq.~\eqref{ineq_ratio_l} for $l \ge 2$.
Since the arm most frequently pulled in the first $(2^l - 2) N$ subblocks is chosen as $i^{(l)}$, it has been drawn at least $(2^l - 2) N$ times by the base-bandit algorithm before the block $l$. 
We draw this arm for $(2^l + 1) N$ times as a monitoring arm.
Therefore, the ratio is at most
\begin{align}
\frac{(2^l + 1)N}{ (2^l - 2) N }
\le
3.
\label{ineq_monitor_multiple}
\end{align}
In summary, Eq.~\eqref{ineq_ratio_l} is at most $3$. In the following, we bound the total regret due to the regret during rounds $\ETbase$.
Namely,
\begin{align}
\sum_{t \le S} \reg(t)
&=
\sum_l 
\left(\sum_{t \in \ETbase \cap \ET_l} \reg(t) 
+ \sum_{t \in \ETmonitor{l}} \reg(t)
\right) \text{\ \ \ \ (by Eq.~\eqref{ineq_monitoringsum})}\\
&=
\sum_l 
\left(\sum_{t \in \ETbase \cap \ET_l} \Delta_{I(t)} 
+ \sum_{t \in \ETmonitor{l}} \Delta_{i^{(l)}} 
\right) + \sum_t \eps(t)\\
&\le
\sum_l 
\left(\sum_{t \in \ETbase \cap \ET_l} \Delta_{I(t)} 
+ 3 \sum_{t \in \ETbase \cap \ET_l} \Ind[I(t)=i^{(l)}]\Delta_{i^{(l)}} 
\right) + \sum_t \eps(t)
\text{\ \ \ \ (by Eq.~\eqref{ineq_ratio_l})}\\
&\le
4 \sum_l 
\sum_{t \in \ETbase \cap \ET_l} \Delta_{I(t)} 
 + \sum_t \eps(t)\\
&=
4 \sum_{t \in \ETbase} \Delta_{I(t)} 
 + \sum_t \eps(t)\\
&\le
4 \sum_{t \in \ETbase} \reg(t) 
 + 5 \sum_t \eps(t).
\end{align}

\komiyama{(The following is obsolete, will delete)}
Let $\Nbase_l$ be the number of draws of arm $i^{(l)}$ at block $l$.
\updated{By the fact that the regret due to draws of arm $i^{(l)}$ at round $t$ is at most $\Delta_{i^{(l)}} + \eps(t)$ and one arm is drawn for each round}, the regret due to monitoring rounds is at most 
\begin{align}
3\sum_{l\ge 1} \Nbase_l \Delta_{i^{(l)}} + \sum_t \eps(t)
\le 3 \sum_{t \in \ETbase} \reg(t) + \sum_t 2 \eps(t), 
\end{align}
\updated{
by using the fact that the regret during the base-bandit rounds is at least $\Nbase_l \Delta_{i^{(l)}} - \sum_t \eps(t)$.
The total regret, which is the sum of that of base-bandit rounds and monitoring rounds, is bounded as
\begin{equation}
\sum_{t=1}^S \reg(t) 
\le 
4
\left(
\sum_{t \in \ETbase} \reg(t) 
\right)
+ \sum_{t=1}^S 3\eps(t)
\end{equation}
which concludes the proof.
}
\end{proofof}

\subsection{Regret in abrupt environment: proof of Theorem \ref{thm:regret_abrupt}}
\label{subsec:regret_abrupt}

\begin{proofof}{Proof of Theorem \ref{thm:regret_abrupt}}
Similar to the case of single stream, we define detection times, which is the subset of rounds where the reset occurs.
Let $\ET_d = \{t \in [T]: |W(t+1)| =0\}$ be the set of detection times, and $M_d = |\ET_d|$ be the number of detection times\footnote{Remember the difference between changepoints $\ET$ and detection times $\ET_d$. The former is defined on an abrupt environment, whereas the latter is defined for the \AADWIN{}-bandit algorithm (and thus the latter is a random variable).}.
Let $\Td{m}$ be the $m$-th element of $\ET_d$. For convenience, let $(\Td{0},\Td{M_d+1}) = (0, T)$.
We denote $\ES_m = \{\Td{m}+1,\Td{m}+2,\dots,\Td{m+1}\}$ for $m \in \{0,\dots,M_d\}$, which 
is the interval between $m$-th and ($m+1$)-th detection times.
We write
$S_m = |\ES_m|$. 

We define an event 
\begin{equation}
\EV = \{\forall m \in [\NumChange]\ \exists s \in \ET_d\ 0 \le s - T_m \le 16\wasD U(\epschg{m})\} \cap \{M_d = M\}.
\end{equation}
Event $\EV$ states that for each changepoint $T_m \in \ET_c$ (indexed by $m=1,2,\dots,M=|\ET_c|$), there exists a corresponding detection time $\Td{m}$ within $16\wasD U(\epschg{m})$ time steps.
We first show that $\EV$ holds with high probability by induction. 
Assume that for every changepoint $m'$ up to $1,2,\dots,m-1$, there exists unique detection time $s_{m'}$ such that $0 \le s_{m'} - T_{m'} \le 16\wasD U(\epschg{m})$. 
We show that $0 \le s_m - T_m \le 16\wasD U(\epschg{m})$. 
During $t \le T_m$ (these time steps are between $m$-th and $(m-1)$-th changepoint), $\mu_{i,t}$ stays the same. 
We denote $\mu_i = \mu_{i,t}$ during these time steps.
Eq.~\eqref{ineq:allwindowsbound_bandit} implies that, with probability $1 - 2K/T$, for any split $W(t) = W_1 \cup W_2$, 
\begin{align}
\forall_i\ 
|\mu_i - \hatmu_{i,W_1}| &\le \sqrt{\frac{\log (T^3)}{2 |W_{i,1}|}}\\
\forall_i\ 
|\mu_i - \hatmu_{i,W_2}| &\le \sqrt{\frac{\log (T^3)}{2 |W_{i,2}|}}
\end{align}
where $W_{i,1} = \{t \in W_1: i \in I(t)\}$ and $W_{i,2} = \{t \in W_2: i \in I(t)\}$. This implies \AADWIN{}-bandit with $\delta = T^3$ never makes a split before the $m$-th changepoint (i.e., $0 \le s_m - T_m$).
Let $s = T_m + 16 \wasD  U(\epschg{m})$. Assume that there is no detection between time step $T_{m-1} + 16 \wasD  U(\epschg{m})$ and $T_m + 16 \wasD  U(\epschg{m})$. Then for a split $W(s) = W_1 \cup W_2$, $W_1 = W \cap [T_m]$, $W_2 = W \setminus W_1$, we have
\begin{equation}\label{ineq:w1w2large}
|W_1| \ge (T_m - T_{m-1} - 16 \wasD  U(\epschg{m})),\qquad
|W_2| \ge 16 \wasD  U(\epschg{m}),
\end{equation}
\myred{
where we have used $s_{m-1} - T_{m-1} \le 16 \wasD  U(\epschg{m})$ in the first inequality.}
By assumption of Theorem \ref{thm:regret_abrupt}, the first inequality implies
\begin{align}
|W_1| \ge (T_m - T_{m-1} - 16 \wasD  U(\epschg{m})) \ge 3 \wasD N - 16 \wasD  U(\epschg{m}) \ge
2 \wasD N.
\end{align}
By the first property the monitoring consistency (Definition \ref{def:monitoring}),
there exists an arm $i \in [K]$ (i.e., monitoring arm $i^{(l)}$) such that 
\begin{equation}\label{ineq:wilargeu}
|W_{i,1}|, |W_{i,2}| \ge 16 U(\epschg{m}).
\end{equation}
Again by Eq.~\eqref{ineq:allwindowsbound_bandit} we have
\begin{align}\label{ineq:gapw1}
\forall_i\ 
|\mu_{i,T_m} - \hatmu_{i,W_1}| &\le \sqrt{\frac{\log (T^3)}{2 |W_{i,1}|}}\\ \label{ineq:gapw2}
\forall_i\ 
|\mu_{i,T_m+1} - \hatmu_{i,W_2}| &\le \sqrt{\frac{\log (T^3)}{2 |W_{i,2}|}}.
\end{align}
By definition of the global changepoint, 
\begin{equation}
|\mu_{i,T_m} - \mu_{i,T_m+1}| \ge \epschg{m} = \sqrt{\frac{\log (T^3)}{2 U(\epschg{m})}}.
\end{equation}

Combining these yields
\begin{align}
|\hatmu_{i,W_1} - \hatmu_{i,W_2}| 
&\ge |\mu_{i,T_m} - \mu_{i,T_m+1}|
- |\mu_{i,T_m} - \hatmu_{i,W_1}| - |\mu_{i,T_m+1} - \hatmu_{i,W_2}|\nn
&\phantom{wwwwwwwwwwwwwwwwwwww}\text{\ \ \ \ \ \ (by triangular inequality)}\\
&\ge \sqrt{\frac{\log (T^3)}{2 U(\epschg{m})}}
- |\mu_{i,T_m} - \hatmu_{i,W_1}| - |\mu_{i,T_m+1} - \hatmu_{i,W_2}| \\
&\ge 4 \max\left( \sqrt{\frac{\log (T^3)}{2 |W_{i,1}|}}, \sqrt{\frac{\log (T^3)}{2 |W_{i,2}|}}\right)
- |\mu_{i,T_m} - \hatmu_{i,W_1}| - |\mu_{i,T_m+1} - \hatmu_{i,W_2}|\nn
&\phantom{iwwwwwwwwwwwwwwwwwwww}\text{\ \ \ \ \ \ (by Eq.~\eqref{ineq:wilargeu})}\\
&\ge \sqrt{\frac{\log (T^3)}{2 |W_{i,1}|}} + \sqrt{\frac{\log (T^3)}{2 |W_{i,2}|}}. \phantom{awwww}\text{\ \ \ \ \ \ (by Eq.~\eqref{ineq:gapw1}, \eqref{ineq:gapw2})}
\end{align}
which implies that \AADWIN{}-bandit resets the window at round $s$.
In summary, 
under Eq.~\eqref{ineq:allwindowsbound_bandit} with $p=1$, $\EV$ holds. Therefore,
\begin{equation}
\Pr[\EV] \ge 1 - 2K/T.
\end{equation}

In the following, we bound the regret.
Let 
\begin{equation}
\Reg_m = \sum_{t=\Td{m}}^{\Td{m+1}} \reg(t).
\end{equation}
Namely, $\Reg_m$ corresponding to the regret between $m$-th and $(m+1)$-th detection times. 
The regret is decomposed as
\begin{align}
\Reg(T) 
&\le 
T \Ind[\EV^c] 
+ \Ind[\EV]\Reg(T)  \nn 
&\le 
T \Ind[\EV^c] + \Ind[\EV]\Reg(T)  \nn 
&=
T \Ind[\EV^c] + \Ind[\EV]\sum_{m=0}^M \Reg_m. 
\text{\ \ \ ($\EV$ implies $|\ET_d| = M$)}
\end{align}
The regret in the case of $\EV^c$ is bounded as
\begin{equation}
T \Pr[\EV^c] \le 2K.
\end{equation}

Under $\EV$, 
\begin{equation}\label{ineq:tbtcgap}
\Td{m+1} - \Td{m} \le T_{m+1} - T_{m} + 16 \wasD  U(\epschg{m}).
\end{equation}
Let $\mu_{i,m}$ be the mean of $i$-th arm between $m$-th changepoint and $(m+1)$-th changepoint.
Let $\Delta_{i,m} = \max_j \mu_{j,m} - \mu_i \ge 0$ be the corresponding gap.

By the definition of abrupt changes, $\eps(t) \le \Crabrupt \epschg{m}$ after the $m$-th changepoint and $0$ before the changepoint. 
By Lemma \ref{lem_monitoringdtprop} and Eq.~\eqref{ineq:tbtcgap}, we have
\begin{align}
\Ex\left[
\Ind[\EV] \sum_{m=0}^M \Reg_m 
\right]
&\le \sum_{m=0}^M 
\sum_{i: \Delta_{i,m} > 0} 
\left( 
\Cdr \frac{\log T}{\Delta_{i,m}}
+ 2 \Crabrupt \epschg{m} \times 16 \wasD  U(\epschg{m})
\right).\label{ineq_regabrupt_invdelta}
\end{align}
\updated{ %
Moreover, letting $\Nduring{i}{m} = \sum_{t=\Td{m}+1}^{\Td{m+1}} \Ind[I(t)=i]$, we also have
\[
\Ind[\EV] \Reg_m
\le 
\Delta_{i,m} \Nduring{i}{m} 
+
2 \Crabrupt \epschg{m} \times 16 \wasD  U(\epschg{m}),
\]
and thus
\begin{equation}\label{ineq_regabrupt_delta}
\Ex\left[
\Ind[\EV] \sum_{m=0}^M \Reg_m 
\right]
\le
\sum_{m=0}^M 
\sum_{i: \Delta_{i,m} > 0} 
\left( 
\Delta_{i,m} \Ex[
\Nduring{i}{m}
]
+ 2 \Crabrupt \epschg{m} \times 16 \wasD  U(\epschg{m})
\right)
\end{equation}
Taking the minimum of Eq.~\eqref{ineq_regabrupt_invdelta} and Eq.~\eqref{ineq_regabrupt_delta}, it holds that 
\begin{equation}
\Ex\left[
\Ind[\EV] \sum_{m=0}^M \Reg_m 
\right]
\le
\Ex\left[
\sum_{m=0}^M 
\sum_{i: \Delta_{i,m} > 0} 
\left( 
\max(1, \Cdr)
R_{i,m}
+ 2 \Crabrupt \epschg{m} \times 16 \wasD  U(\epschg{m})
\right)
\right],\label{ineq_abrupt_twoterms}
\end{equation}
where $R_{i,m} = \min(\Delta_{i,m} \Ex[\Nduring{i}{m}], \frac{\log T}{\Delta_{i,m}})$.
}

In the following, we bound the two terms of Eq.~\eqref{ineq_abrupt_twoterms} separately.

The first term of Eq.~\eqref{ineq_abrupt_twoterms} is bounded by using standard discussion of distribution-independent regret as follows. 
We have
\begin{align}
\sum_{m=0}^M \sum_i
R_{i,m}
&= \sum_{m=0}^M \sum_{i: \Delta_{i,m} > 0}
\sqrt{\Delta_{i,m} R_{i,m}} \sqrt{(\Delta_{i,m})^{-1} R_{i,m}}\\
&\le \sqrt{\log T} \sum_{m=0}^M \sum_{i: \Delta_{i,m} > 0} \sqrt{(\Delta_{i,m})^{-1} R_{i,m}}\\
&\le \sqrt{\log T} \sum_{m=0}^M \sum_{i: \Delta_{i,m} > 0} \sqrt{\Nduring{i}{m}}\\
&\le \sqrt{\log T} \sum_{m=0}^M \sqrt{K(T_{m+1}-T_m)}\nn
&\text{\ \ \ (by the Cauchy-Schwarz inequality and $\sum_i \Nduring{i}{m} = (\Td{m+1} - \Td{m})$)}\\
&\le \sqrt{\log T} \sqrt{KMT}.\nn
&
\text{\ \ \ (by the Cauchy-Schwarz inequality and $\sum_m (T_{m+1} - T_m) = T$)}
\end{align}

\komiyama{updated the following}
The second term of Eq.~\eqref{ineq_abrupt_twoterms} is bounded by a similar technique as follows.
\begin{align}
\lefteqn{
\sum_{m=0}^M 
\Crabrupt \eps_{m} 16 \wasD  U(\eps_{m})
}\\
&= \sum_{m=0}^M \Crabrupt \eps_{m} \min\left(16 \wasD  U(\eps_{m}), \frac{T_{m+1} - T_m}{3}\right)\nn
&\phantom{wwww}\text{\ \ \ \ \ (by $ (T_{m+1} - T_m) \ge 48 \wasD  U(\eps_{m})$ and $\min(a,b)=a$ if $a\le b$)}\\
&= \sum_{m=0}^M \Crabrupt \min\left(\frac{16\wasD \log (T^3)}{\eps_{m}}, \eps_{m} \frac{T_{m+1} - T_{m}}{3}\right) 
\nn
&\phantom{wwww}\text{\ \ \ \ \ (by definition of $U(\eps_{m})$ in Definition~\ref{def_detectable_abrupt})}\\
&\le \sum_{m=0}^M \Crabrupt \sqrt{16\wasD  (\log (T^3)) (T_{m+1} - T_{m})}
\phantom{www}\text{\ \ \ \ \ (by $\min(a,b) \le \sqrt{ab}$)}\\
&= \Crabrupt \sqrt{16\wasD  (\log (T^3)) (M+1) T}\nn
&\phantom{wwwww} \text{\ \ \ (by the Cauchy-Schwarz inequality and $\sum_m (T_{m+1} - T_m) = T$)}\\
&= \tilO(\sqrt{MKT}).
\end{align}
\end{proofof}

\subsection{Regret in gradual environment: proof of Theorem \ref{thm:regret_gradual}}
\label{subsec:regret_gradual}

We first state Lemmas \ref{lem:gradualbound_bandit} and \ref{lem:errorshrink_bandit}, then go to the proof of Theorem \ref{thm:regret_gradual}. 
The high-level implications of these lemmas are as follows.
Lemma \ref{lem:gradualbound_bandit} limits the possible amount of drift such that no reset occurs.
Lemma \ref{lem:errorshrink_bandit} bounds the number of resets as $\Mbreak := |\ET_d| = \tilO(T^{1-2d/3})$.

\begin{lem}\label{lem:gradualbound_bandit}
With probability at least $1-2K/T$,
for any $N < |W(t)|, t \in [T]$, $i \in [K]$ and $s,s' \in W(t)$
\begin{equation}\label{ineq:gradualbound_bandit}
|\mu_{i,s} - \mu_{i,s'}| \le
\frac{1}{\Cdetect}
\left(3 b \wasD  N + \tilO\left( \sqrt{1/N} \right) \right)
\end{equation}
holds.
\end{lem}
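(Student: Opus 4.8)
The plan is to reduce the $K$-stream bound to the single-stream gradual analysis of Lemma \ref{lem:gradualbound}, applied to the one arm the base algorithm is guaranteed to sample densely, and then to transport that bound to every arm through the global-change assumption. First I would invoke monitoring consistency (Definition \ref{def:monitoring}) to fix an arm $i^*$ that is played at least once in every block of $D$ consecutive time steps over the whole horizon; this forces consecutive observations of $i^*$ inside the current window $W(t)$ to be at most $D$ time steps apart. The statistical error $|\mu_{i,W} - \hatmu_{i,W}|$ is controlled by the uniform Hoeffding bound Eq.~\eqref{ineq:allwindowsbound_bandit}, whose union over the $K$ arms holds with probability $1 - 2K/T$; this is the source of the failure probability in the statement. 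The drift error is where the factors $D$ and $\Cdetect$ appear.

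For $i^*$ I would replay the proof of Lemma \ref{lem:gradualbound} on its observation sub-stream $W^{i^*}(t)$ instead of on raw time steps, which is legitimate because that proof only uses that the mean drifts by a bounded amount per step and makes no linearity assumption. Decompose $W^{i^*}(t)$ into consecutive blocks of $N$ observations; since the gaps between draws of $i^*$ are at most $D$, each block spans at most $DN$ time steps, so $\mu_{i^*,\cdot}$ drifts by at most $bDN$ across a block (the extra factor $D$ relative to the single-stream bound $bN$). Because $W(t)$ reached its current size without a detected split, every boundary split $W(t) = W_1 \cup W_2$ separating a prefix of blocks from the following block satisfies $|\hatmu_{i^*,W_1} - \hatmu_{i^*,W_2}| < \epscut^\delta = \tilO(\sqrt{1/N})$. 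Telescoping these inequalities exactly as in Lemma \ref{lem:gradualbound} and adding the per-block Hoeffding error $\tilO(\sqrt{1/N})$ yields $|\mu_{i^*,s} - \mu_{i^*,s'}| \le 3bDN + \tilO(\sqrt{1/N})$ for all $s,s' \in W(t)$, the constant $3$ absorbing the incomplete final block as before.

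To pass from $i^*$ to an arbitrary arm $i$ I would apply the globally-gradual-change Assumption \ref{as:detectable} to the pair $(i^*, i)$, which bounds the drift of any arm by that of $i^*$ up to the constant $\Cdetect$; this comparison is exactly what produces the factor $\Cdetect$ multiplying $3bDN + \tilO(\sqrt{1/N})$ in Eq.~\eqref{ineq:gradualbound_bandit}, and since it holds for all $i$ simultaneously the bound is uniform over arms on the same high-probability event. I expect the main obstacle to be the bookkeeping around the sparse, irregular sampling of $i^*$: one must check that block boundaries defined by observation count are genuine time-indexed splits of $W(t)$ that ADWIN actually tested, and that the Hoeffding guarantee (indexed by $|W^{i^*}|$) and the drift estimate (indexed by elapsed time, hence the extra $D$) are applied to matching quantities throughout the telescoping. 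The transfer through Assumption \ref{as:detectable} is conceptually simple but indispensable, because a suboptimal arm may be drawn far too rarely to be controlled via its own observations.
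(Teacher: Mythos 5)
Your proof is correct and follows the same route as the paper's own proof: fix the monitored arm via Definition \ref{def:monitoring}, decompose the window into blocks, exploit the fact that the window reached its current size without any detected split to telescope adjacent block means exactly as in Lemma \ref{lem:gradualbound}, add the Hoeffding error from Eq.~\eqref{ineq:allwindowsbound_bandit} (union over the $K$ arms, which is where $1-2K/T$ comes from), and transfer the drift bound to every arm through Assumption \ref{as:detectable}, producing the factor $\Cdetect$ exactly as the paper does. The one genuine difference is the decomposition, and it is worth recording: the paper cuts $W(t)$ into blocks of exactly $DN$ \emph{time steps}, each containing at least $N$ draws of the monitored arm, whereas you cut the observation sub-stream $W^{i^*}(t)$ into blocks of exactly $N$ \emph{draws}, each spanning at most $DN$ time steps. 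The two are dual and give the same per-block drift $bDN$ and statistical error $\tilO(\sqrt{1/N})$, but your version is slightly cleaner at one point: with exactly $N$ observations per block, the mixture identity $\hatmu_{i^*,W_{:c+1}} = \tfrac{1}{c}\hatmu_{i^*,W_c} + \tfrac{c-1}{c}\hatmu_{i^*,W_{:c}}$ used in the telescoping holds with exactly those weights, whereas with the paper's time-indexed blocks the per-block observation counts are only lower-bounded by $N$, so the weights $1/C$ and $(C-1)/C$ appearing in the paper's proof are not literally exact. The price is the bookkeeping you yourself flagged, and it does go through: any point between two consecutive draws of $i^*$ is a genuine time split of the corresponding prefix window, which ADWIN tested without detection while the window grew, and time steps $s,s'$ falling in the (at most $D$-step) gaps between blocks cost only an extra $O(bD)$ of drift, absorbed into the constant $3$ along with the incomplete final block. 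One shared caveat, which therefore does not distinguish your argument from the paper's: read literally, Assumption \ref{as:detectable} yields the transfer constant $1/\Cdetect \ge 1$ rather than $\Cdetect \le 1$, so both your last step and the paper's place the constant on the favorable side of the inequality.
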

Lemma \ref{lem:gradualbound_bandit} is a version of Lemma \ref{lem:gradualbound} for the bandit setting. This case is much more challenging mainly because the monitoring arm may can change among blocks.

\begin{proofof}{Proof of Lemma \ref{lem:gradualbound_bandit}}
We use a tuple $(l,c)$ to represent the $c$-th subblock of the $l$-th block for $l=1,2,\dots,$ and $c=1,2,\dots,2^{l-1}$, that is,
the window consisting of $(\wasD N(2^{l-1}+c-2)+1,\dots,\wasD N(2^{l-1}+c-1))$-th rounds after the last reset.
We write $\headt_l$ (resp.~$\tailt_l$) for the first (resp.~last) round of the $l$-th block, that is,
$\headt_l=\wasD N(2^{l-1}-1)+1$ and $\tailt_l=\wasD N(2^{l}-1)$.
The window $W_{:(l,c)}$ consists of all subblocks before
$W_{(l,c)}$ (not including $W_{(l,c)}$).
Similarly, subwindow $W_{(l,c):(l,c')}$ for $c<c'$ denotes the joint window consisting of
$W_{(l,c)}, W_{(l,c+1)},\dots,W_{(l,c'-1)}$.

Fix an arbitrary $l\in\mathbb{N}$. The second property of the monitoring consistency (Definition \ref{def:monitoring}) implies the following:
Assume that no reset occurred up to the $l$-th block. There exists an arm that is drawn at least $N$ times for each subblock $c = 1,2,\dots,2^{l-1}$ in the $l$-th block. Moreover, this arm is drawn at least $N$ times in the final subblock of the $(l-1)$-th block.

By the property above and the fact that no reset occurs up to subblock $(l,c)$, there exists $i_l$ such that
for any $l\in \mathbb{N}$ and $c\in [2^{l-1}]$
\begin{align}%
|\hatmu_{i_l, W_{(l,c)} }-\hatmu_{i_l, W_{:(l,c)} }|
&\le 2 \sqrt{\frac{\log (T^3)}{\myred{2N}}},\label{gap1}\\
|\hatmu_{i_l, W_{(l,1):(l,c)} } - \hatmu_{i_l, W_{:(l,1)} }|
&\le
2 \sqrt{\frac{\log (T^3)}{2N}}\label{gap2}
\end{align}
because otherwise a reset should occur.
Then, for any $l\ge 2$ and $2 \le c\le 2^{l-1}$ we have
\begin{align}
\lefteqn{
|\hatmu_{i_l, W_{(l,c)} }-\hatmu_{i_l, W_{(l,1)} }| 
}\nn
&\le
|\hatmu_{i_l, W_{:(l,c)} }-\hatmu_{i_l, W_{(l,1)} }| 
+2 \sqrt{\frac{\log (T^3)}{2N}}\since{by \eqref{gap1}}
\nn
&=
\left|\frac{\niw{i_l}{(l,1):(l,c)}\hatmu_{i_l, W_{(l,1):(l,c)}}+(\niw{i_l}{:(l,c)}-\niw{i_l}{(l,1):(l,c)})\hatmu_{i_l, W_{:(l,1)}}}{\niw{i_l}{:(l,\myred{c})}}-\hatmu_{i_l, W_{(l,1)} }\right| 
+2 \sqrt{\frac{\log (T^3)}{2N}}
\nn
&\le
\left|\frac{\niw{i_l}{(l,1):(l,c)}\hatmu_{i_l, W_{:(l,1)}}+(\niw{i_l}{:(l,c)}-\niw{i_l}{(l,1):(l,c)})\hatmu_{i_l, W_{:(l,1)}}}{\niw{i_l}{:(l,\myred{c})}}-\hatmu_{i_l, W_{(l,1)} }\right| 
+4 \sqrt{\frac{\log (T^3)}{2N}}\since{by \eqref{gap2}}
\nn
&=
\left|\hatmu_{i_l, W_{:(l,1)}}-\hatmu_{i_l, W_{(l,1)} }\right| 
+4 \sqrt{\frac{\log (T^3)}{2N}}
\nn
&\le
6 \sqrt{\frac{\log (T^3)}{2N}}\since{by \eqref{gap1}}.\label{mu_gap_inblock}
\end{align}
Also, for $c=1$, \eqref{mu_gap_inblock} is trivial.
For $l=1$, it is also trivial since $c=1$ must hold from $c\le 2^{l-1}$.
By following the same discussion, we also have
\begin{align}
|\hatmu_{i_l, W_{(l,c)} }-\hatmu_{i_l, W_{(l,2^{l-1})} }| 
&\le
6 \sqrt{\frac{\log (T^3)}{2N}}.\label{mu_gap_inblock2}
\end{align}

By Eq.~\eqref{ineq:allwindowsbound_bandit}
\updated{with $p=1$} 
we have
\begin{align}
|\mu_{i, W_{l,c} } - \hatmu_{i, W_{l,c} }| &\le \sqrt{\frac{\log (T^3)}{2N}}
\label{ineq:conf_eachregion_rev_bandit}
\end{align}
for all $l\in\mathbb{N}, c\in 2^{n-1}$
with probability at least $1 - 2K/T$.
By the fact that $\mu_t$ moves at most $b\wasD N$ within a subblock of size $\wasD N$,
\begin{align}
|\mu_{i, W_{(l,c)} } - \mu_{i, t}| &\le b\wasD N \qquad\mbox{for round $t$ in subblock $W_{l,c}$}\nn
|\mu_{i, W_{(l-1,2^{l-2})} } - \mu_{i, W_{(l,1)}}| &\le b\wasD N.
\label{ineq:drift_each_rev_bandit}
\end{align}

Now, let $W_{l,c}$ (resp. $W_{l',c'})$ be the subwindow that $s$-th (resp., $s'$-th) round belongs to.
Here, we assume without loss of generality that $s<s'$.
Then we have
\begin{align}
\lefteqn{
|\mu_{i,s} - \mu_{i,s'}|
}\nn
&\le
|\mu_{i,s}-\mu_{i,\bar{t}_l}|
+
|\mu_{i,\tailt_l} - \mu_{i,\headt_{l+1}}|
+
|\mu_{i,\headt_{l+1}} - \mu_{i,s'}|
\nn
&\le
|\mu_{i,s}-\mu_{i,\bar{t}_l}|
+
b
+
|\mu_{i,\headt_{l+1}} - \mu_{i,s'}|
\label{trans_from}\\
&\le
\frac{1}{\Cdetect}|\mu_{i_l,s}-\mu_{i_l,\bar{t}_l}|
+
b
+
|\mu_{i,\headt_{l+1}} - \mu_{i,s'}|
\nn
&\le
\frac{1}{\Cdetect}
(
|\mu_{i_l,W_{(l,c)}}-\mu_{i_l,W_{(l,2^{l-1})}}|
+2b\wasD N)
+
b
+
|\mu_{i,\headt_{l+1}} - \mu_{i,s'}|
\since{by \eqref{ineq:drift_each_rev_bandit}}
\nn
&\le
\frac{1}{\Cdetect}
\left(
8
\sqrt{\frac{\log (T^3)}{2N}}
+2b\wasD N
\right)
+
b
+
|\mu_{i,\headt_{l+1}} - \mu_{i,s'}|,
\since{by \eqref{mu_gap_inblock2} and \eqref{ineq:conf_eachregion_rev_bandit}}
\label{trans_to}
\end{align}
and recursively applying this transformation for $l,l+1,l+2,\dots,l'-1$,
we have
\begin{align}
\lefteqn{
|\mu_{i,s} - \mu_{i,s'}|
}\nn
&\le
\frac{l'-l}{\Cdetect}
\left(
8
\sqrt{\frac{\log (T^3)}{2N}}
+2b\wasD N
\right)
+
b(l'-l)
+
|\mu_{i,\headt_{l'}} - \mu_{i,s'}|
\nn
&\le
\frac{l'-l+1}{\Cdetect}
\left(
8
\sqrt{\frac{\log (T^3)}{2N}}
+2b\wasD N
\right)
+
b(l'-l),\n
\end{align}
where we obtain the last inequality by applying the same transformation as
that from \eqref{trans_from} to \eqref{trans_to}.
We obtain the desired result since $l'\le \log_2 T=O(\log T) = \tilO(1)$.
\end{proofof}

\begin{lem} \label{lem:errorshrink_bandit}
There exists an event $\EX$ that holds with probability at least
\begin{equation}
\Pr[\EX] \le \tilde{O}(K T^{-1})
\end{equation}
such that, under $\EX^c$,  
\begin{equation}\label{ineq_mdsize}
M_d \le T/(C_1 b^{-2/3})
=\tilde{O}(T^{1-2d/3})
\end{equation}
holds.
\end{lem}
Lemma \ref{lem:errorshrink_bandit} is a version of Lemma \ref{lem:errorshrink} for the bandit setting. 
In the following, we derive Lemma \ref{lem:errorshrink_bandit} for completeness. The steps are very similar to the proof of Lemma \ref{lem:errorshrink}.
\begin{proofof}{Proof of Lemma \ref{lem:errorshrink_bandit}}
Let  
\begin{multline}
\EX_j(t) = \bigcup_{W_1, W_2 : W(t) = W_1 \cup W_2, j \in [K]} 
\left\{ |W_1| \le C_1 b^{-2/3}, |W_2| \le C_1 b^{-2/3}, |\hatmu_{j, W_1} - \hatmu_{j, W_2}| \ge  
\epscut^\delta 
\right\}
\end{multline}
where $C_1 = \tilO(1)$ is a factor 
that is specified in
Eq.~\eqref{ineq:defconebandit}.
Let 
$
\EX = \bigcup_{t \in [T], j\in[K]} \EX_j(t).
$
Let 
\begin{equation}
\EW_{C_1} = \{W_0 \in \EW: |W_0| \le C_1 b^{-2/3}\}
\end{equation}
be the set of windows of size at most $C_1 b^{-2/3}$. It is easy to show that $|\EW_{C_1}| \le T C_1 b^{-2/3}$.
For a fixed window $W$ and $i \in [K]$, Hoeffding inequality states that 
\begin{equation}\label{ineq_hoeffding}
|\mu_{i,W} - \hatmu_{i,W}| >
\sqrt{\frac{\log (\eta^{-1})}{2|W_i|}},
\end{equation}
occurs with probability at most $2\eta$.
By the union bound of Eq.~\eqref{ineq_hoeffding} with $\eta^{-1} = T^{2+d}$ over all $i$ and all windows of $\EW_{C_1}$ and all $K$ arms, with probability at least 
\begin{equation}
    1 - \frac{2K}{T^{2+d}} \times T C_1 b^{-2/3} = 1 - 2 C_1 K T^{-1} b^{1/3} \ge 1 - 2 C_1 K T^{-1},
\text{\ \ \ \ (by $T^{-d} = b$)}
\end{equation}
\updated{
In summary,
\begin{equation}\label{ineq_xbase}
\bigcap_{i, W\in \EW_{C_1}}
|\mu_{i,W} - \hatmu_{i,W}| \le
\sqrt{\frac{\log (T^{2+d})}{2|W_i|}}
\end{equation}
holds with probability at least $1 - 2 C_1 K T^{-1}$. 
In the following, we show that $\EX$ never occurs under the event of Eq.~\eqref{ineq_xbase}. In particular we use the proof-by-contradiction argument on the split event $|\hatmu_{j, W_1} - \hatmu_{j, W_2}| \ge \epscut^\delta $.
}
Eq.~\eqref{ineq_xbase} implies
\begin{align}
\label{ineq:gradual_oneplusc_bandit}
|\mu_{j,W_1} - \hatmu_{j,W_1}| &\le \sqrt{\frac{\log ( T^{2+d})}{2|W_{j,1}|}} \nn
|\mu_{j,W_2} - \hatmu_{j,W_2}| &\le \sqrt{\frac{\log ( T^{2+d})}{2|W_{j,2}|}}
\end{align}
for all $j, t$ and any split $W_1 \cup W_2 = W(t): |W_1|, |W_2| \le C_1 b^{-2/3}$. Let $N = C_1 b^{-2/3}$.
By the definition of the gradually changing stream, 
\begin{equation}\label{ineq:gradual_driftwidth_bandit}
|\mu_{j,W_1} - \mu_{j,W_2}|
\le
2bN.
\end{equation}

\updated{Assuming that $|\hatmu_{j, W_1} - \hatmu_{j, W_2}| \ge \epscut^\delta$}, it holds that
\begin{align}
\epscut^\delta 
&= \sqrt{ \frac{\log (T^3)}{2|W_{j,1}|} } + \sqrt{ \frac{\log (T^3)}{2|W_{j,2}|} }
\text{\ \ \ (by the definition of $\epscut^\delta$ in \eqref{def_epscut})}
\label{ineq_xcomp_three}\\ 
& \le |\hatmu_{j,W_1} - \hatmu_{j,W_2}|\\
& \le |\mu_{j,W_1} - \mu_{j,W_2}| + |\hatmu_{j,W_1} - \mu_{j,W_1}| + |\hatmu_{j,W_2} - \mu_{j,W_2}| \intertext{\ \ \ \ \ (by triangular inequality)} 
& \le 2bN + |\hatmu_{j,W_1} - \mu_{j,W_1}| + |\hatmu_{j,W_2} - \mu_{j,W_2}| 
\text{\ \ \ (by \eqref{ineq:gradual_driftwidth_bandit})} \\
& \le 2bN + \sqrt{\frac{\log (T^{2+d})}{2|W_{j,1}|}} + \sqrt{\frac{\log (T^{2+d})}{2|W_{j,2}|}}. 
\label{ineq_xcomp_last}
\text{\ \ \ (by \eqref{ineq:gradual_oneplusc_bandit})}
\end{align}
\updated{
Eq.~\eqref{ineq_xcomp_three} $\le$ Eq.~\eqref{ineq_xcomp_last} 
}
is equivalent to
\begin{equation}
\frac{\sqrt{3}-\sqrt{2+d}}{\sqrt{2}}
\left( 
\sqrt{\frac{\log T}{|W_{j,1}|}} +
\sqrt{\frac{\log T}{|W_{j,2}|}} 
\right)
\le 
2bN,
\end{equation}
which
cannot
hold for 
\begin{equation}\label{ineq:defconebandit}
N = b^{-2/3}\underbrace{\left( \frac{\sqrt{3}-\sqrt{2+d}}{2\sqrt{2}}
 \sqrt{\log T} \right)^{2/3}}_{=: C_1}, |W_{j,1}|, |W_{j,2}| \le N.
\end{equation}
Therefore, $\EX_j(t)$ never occurs under the event of  Eq.~\eqref{ineq_xbase}.
\end{proofof}

\begin{proofof}{Proof of Theorem \ref{thm:regret_gradual}}
Let 
$R(t)$ be the most recent reset before $t$, or $R(t)= 0$ if no reset has occurred yet.
Let 
\begin{equation}
\eps_R(t) := 
\max_{s: R(t)< s<t} \max_i |\mu_{i,s} - \mu_{i,R(t)+1}|,
\end{equation}
which is the amount of drift in view of the current \AADWIN{}-bandit.
By Eq.~\eqref{def_epst} and Lemma \ref{lem:gradualbound_bandit}, the event
\begin{equation} %
\EY = \bigcap_{t \in [T]}
\left\{
\eps_R(t) \le (\CGrad)^{-1} \left( 3 b \wasD  N + \tilO\left( \sqrt{1/N} \right) \right)
\right\}
\end{equation}
holds with probability at least $1-2K/T$.

We assume
$\EX^c\cap \EY$, 
because it holds with probability $1 - O(K/T)$ and thus the regret under $\EX \cup \EY^c$ is negligible.
\updated{Eq.~\eqref{ineq_mdsize} states that} event $\EX^c$ implies that $\Mbreak = O(T^{1-2d/3})$. 
Event $\EY$ with $N = \tilde{\Theta}((b\wasD )^{-2/3})$ implies that 
\begin{equation}\label{ineq_epsr_size}
\eps_R(t) = \tilO\left(
(b\wasD )^{1/3}
\right)
\end{equation}
for $t \in [T]$.

By Eq.~\eqref{ineq_mdsize} we see that $\Mbreak \le T/(C_1b^{-2/3})$ under $\EX^c$.
\updated{
Letting $\Td{0}=1$ and $\Td{m}=T$ for $m > \Mbreak$, 
}
we first have
\begin{align}
\Ex\left[
\Reg(T)
\right]
&\le
\sum_{m=1}^{T/(C_1b^{-2/3})}
\Ex\left[
\sum_{t=\Td{m}+1}^{\Td{m+1}} \reg(t)
\right]
+
\Ex\left[
\Ind[\EX^c]\Reg(T)
\right]
\\
&\le
\sum_{m=1}^{T/(C_1b^{-2/3})}
\Ex\left[
\sum_{t=\Td{m}+1}^{\Td{m+1}} \reg(t)
\right]
+
O(1).
\end{align}
Let $\Nduring{i}{m} = \sum_{t=\Td{m}+1}^{\Td{m+1}} \Ind[I(t)=i]$ be the number of draw on arm $i$ between the $m$-th and $(m+1)$-th reset, and 
$\Delta_{i,m} = \max_j \mu_{j,\Td{m}+1} - \mu_{i,\Td{m}+1}$ be the gap at the first round after the $m$-th reset. 
Let $\mathcal{H}_m$ be the history until the $m$-th reset.
Then the regret between the $m$-th and $(m+1)$-th reset is bounded by Lemma \ref{lem_monitoringdtprop} and we have
\begin{align} 
\lefteqn{
\Ex\left[\sum_{t=\Td{m}+1}^{\Td{m+1}}
\reg(t)\right]
}\nonumber\\
&=
\Ex\left[
\Ex\left[\left.
\sum_{t=\Td{m}+1}^{\Td{m+1}} \reg(t)
\right| \mathcal{H}_{m}\right]\right]
\\
&\le 
\Ex\left[
\sum_i
\left(
\min\left\{
\Delta_{i,m}\Ex[\Nduring{i}{m}|\mathcal{H}_m],
O\left(\frac{\log T}{\Delta_{i,m}}\right)
\right\}
+ 
\Ex\left[\left.
\Nduring{i}{m}
\max_t \eps_R(t)
\right|\mathcal{H}_m
\right]
\right)
\right].
\end{align}
Here, the summation over $m$ for the first term is transformed as follows.

\begin{align}
\lefteqn{
\sum_{m=1}^{T/(C_1b^{-2/3})}
\Ex\left[
\sum_i
\min\left\{
\Delta_{i,m}\Ex[\Nduring{i}{m}|\mathcal{H}_m],
O\left(\frac{\log T}{\Delta_{i,m}}\right)
\right\}
\right]
}\nonumber\\
&\le
\sum_{m=1}^{T/(C_1b^{-2/3})}
\Ex\left[
\sum_i
O\left(\sqrt{
\Ex[\Nduring{i}{m}|\mathcal{H}_m]
\log T}\right)
\right]
\text{\ \ \ \ (by $\min(a,b) \le \sqrt{ab}$)}\\
&\le
\sum_{m=1}^{T/(C_1b^{-2/3})}
\sum_i
O\left(\sqrt{
\Ex\left[
\Ex[\Nduring{i}{m}|\mathcal{H}_m]
\right]
\log T
}\right)
\\
&=
\sum_{m=1}^{T/(C_1b^{-2/3})}
\sum_i
O\left(\sqrt{
\Ex\left[
\Nduring{i}{m}
\right]
\log T
}\right)\\
&\le 
O\left(
\sqrt{
T
\left(
T/(C_1b^{-2/3})
\right)
K
\log T
}
\right) = \tilde{O}(Tb^{1/3}\sqrt{K}),
\end{align}
where, in the second and third inequalities, we have applied Jensen's inequality on 
$T/(C_1b^{-2/3}) \times K$
elements such that
$
\sum_{m=1}^{T/(C_1b^{-2/3})}
\sum_i
\Ex\left[
\Nduring{i}{m}
\right]
= T
$
holds.
For the summation over the second term we have
\begin{align}
\lefteqn{
\sum_{m=1}^{T/(C_1b^{-2/3})}
\Ex\left[\sum_i
\Ex\left[\left.
\Nduring{i}{m}
\max_t \eps_R(t)
\right|\mathcal{H}_m
\right]\right]
}\nonumber\\
&\le
\sum_{m=1}^{T/(C_1b^{-2/3})}
\Ex\left[\Ind[\EY]\sum_i
\Ex\left[\left.
\Nduring{i}{m}
\max_t \eps_R(t)
\right|\mathcal{H}_m
\right]\right]
+
\sum_{m=1}^{T/(C_1b^{-2/3})}
\Ex\left[\Ind[\EY^c]\sum_i
\Ex\left[\left.
\Nduring{i}{m}
\max_t \eps_R(t)
\right|\mathcal{H}_m
\right]\right]
\\
&\le
\sum_{m=1}^{T/(C_1b^{-2/3})}
\Ex\left[\sum_i
\Ex\left[\left.
\Nduring{i}{m}
\max_t \tilde{O}((bK)^{1/3})
\right|\mathcal{H}_m
\right]\right]
+
\sum_{m=1}^{T/(C_1b^{-2/3})}
\Ex\left[\Ind[\EY^c]\sum_i
\Ex\left[\left.
\Nduring{i}{m}
\right|\mathcal{H}_m
\right]\right]\nn
&\phantom{wwwwwwwwwwwwwwwwwwwwwwwwwwwwwwwwwww}\text{\ \ \ \ (by Eq.~\eqref{ineq_epsr_size})}\\
&=
\tilde{O}((bK)^{1/3})\sum_{m=1}^{T/(C_1b^{-2/3})}
\sum_i
\Ex\left[
\Nduring{i}{m}
\right]
+
\sum_{m=1}^{T/(C_1b^{-2/3})}
\Ex\left[\Ind[\EY^c]\sum_i
\Ex\left[\left.
\Nduring{i}{m}
\right|\mathcal{H}_m
\right]\right]\\
&=
\tilde{O}(T(bK)^{1/3})+O(1).
\end{align}
In summary,
\begin{align}
\Ex\left[
\Reg(T)
\right]
\le \tilde{O}(Tb^{1/3}\sqrt{K})+\tilde{O}(T(bK)^{1/3})+O(1) = \tilde{O}(Tb^{1/3}\sqrt{K}).
\end{align}

\end{proofof}

\end{document}